\documentclass[11pt]{article}

\usepackage[top=30truemm,bottom=30truemm,left=25truemm,right=25truemm]{geometry}
\usepackage{parskip}
\usepackage[utf8]{inputenc}
\usepackage[T1]{fontenc}
\usepackage{tgtermes}
\usepackage{microtype}
\usepackage{amsmath}
\usepackage{amssymb}
\usepackage{amsthm}
\usepackage{mathtools}

\usepackage[authoryear,round]{natbib}

\usepackage{xcolor}
\usepackage[hypertexnames=false,backref,pagebackref]{hyperref}
\hypersetup{
  colorlinks=true,
  linkcolor=blue,
  citecolor=blue,
}
\usepackage[nameinlink,capitalise,noabbrev]{cleveref}
\newcommand{\fixcrefthm}[1]{%
  \AddToHook{env/#1/begin}{\crefalias{theorem}{#1}}%
}
\fixcrefthm{theorem}
\fixcrefthm{lemma}
\fixcrefthm{proposition}
\fixcrefthm{corollary}
\fixcrefthm{assumption}
\fixcrefthm{definition}
\fixcrefthm{remark}

\crefname{equation}{}{}
\crefname{theorem}{Theorem}{Theorems}
\Crefname{theorem}{Theorem}{Theorems}
\crefname{lemma}{Lemma}{Lemmas}
\Crefname{lemma}{Lemma}{Lemmas}
\crefname{corollary}{Corollary}{Corollaries}
\Crefname{corollary}{Corollary}{Corollaries}
\crefname{definition}{Definition}{Definitions}
\Crefname{definition}{Definition}{Definitions}
\crefname{remark}{Remark}{Remarks}
\Crefname{remark}{Remark}{Remarks}
\crefname{assumption}{Assumption}{Assumptions}
\Crefname{assumption}{Assumption}{Assumptions}
\crefname{proposition}{Proposition}{Propositions}
\Crefname{proposition}{Proposition}{Propositions}
\Crefname{algocf}{Algorithm}{Algorithms}
\crefname{appendix}{Appendix}{Appendices}
\Crefname{appendix}{Appendix}{Appendices}

\usepackage[ruled,vlined,linesnumbered]{algorithm2e}
\SetArgSty{textnormal}
\SetKwInOut{KwRequire}{Require}

\usepackage{graphicx}
\graphicspath{{figure/}}
\usepackage[labelfont=bf]{caption}
\usepackage{enumitem}
\usepackage{booktabs}
\usepackage{pdflscape}

\usepackage{wrapfig}
\usepackage{tikz}

\theoremstyle{plain}
\newtheorem{theorem}{Theorem}[section]
\newtheorem{lemma}[theorem]{Lemma}

\newtheorem{assumption}[theorem]{Assumption}
\theoremstyle{definition}
\newtheorem{definition}[theorem]{Definition}
\theoremstyle{remark}

\renewcommand{\epsilon}{\varepsilon}
\def\E{\mathbb{E}}
\def\P{\mathbb{P}}
\def\R{\mathbb{R}}

\DeclareMathOperator*{\argmin}{arg\,min}
\DeclareMathOperator*{\diag}{diag}
\DeclareMathOperator*{\tr}{tr}
\DeclareMathOperator{\polar}{polar}
\DeclareMathOperator{\Lip}{Lip}

\DeclarePairedDelimiter{\inpr}{\langle}{\rangle}
\DeclarePairedDelimiter{\brk}{[}{]}
\newcommand{\calH}{\mathcal{H}}

\newcommand{\DReg}{\operatorname{Reg}^{[\beta]}}

\title{Muon with Finite Newton--Schulz: The Smoothing Benefit in Nonsmooth Nonconvex Optimization\thanks{Authors are listed in alphabetical order.}}

\author{
  Mingyi Li\footnote{
    The University of Tokyo;
    \texttt{mingyi-mike@g.ecc.u-tokyo.ac.jp}.
  }
  \and
  Taira Tsuchiya\footnote{
    The University of Tokyo and RIKEN;
    \texttt{tsuchiya@mist.i.u-tokyo.ac.jp}.
  }
}

\date{\today}

\begin{document}
\maketitle

\begin{abstract}
Muon has emerged as a strong optimizer for the matrix-valued parameters in large language model pretraining, approximately orthogonalizing its momentum with a few Newton--Schulz iterations.
Existing theory either replaces this iteration with the exact polar factor it approximates, or treats its finite depth as an approximation error, and thus the iteration Muon actually runs can only hurt the guarantees.
We show that finite Newton--Schulz can instead be beneficial for nonsmooth nonconvex optimization.
To this end, we analyze Muon through the online-to-nonconvex conversion, which views the update rule as an online learner and converts its regret bound into a stationarity guarantee.
The finite Newton--Schulz iteration smooths the discontinuous polar map into a Lipschitz map of the singular values, and Muon with finite Newton--Schulz can be regarded as an online learner with a smoothed spectral potential.
This smoothing is exactly what the conversion needs: we prove that a Newton--Schulz depth growing only logarithmically in the target accuracy suffices for convergence to stationary points in nonsmooth nonconvex optimization, whereas Muon with the exact-polar update may fail to converge.
The resulting sample complexity bounds match the best-known guarantees for nonsmooth nonconvex optimization and are optimal for smooth nonconvex optimization up to problem-dependent factors.
The argument extends beyond Newton--Schulz to general spectral maps with the same smoothing property.
\end{abstract}

\section{Introduction}
\label{sec:intro}

In large language model (LLM) pretraining, where most trainable parameters are matrices, Muon~\citep{jordan2024muon} has emerged as a strong alternative to the de facto standard AdamW~\citep{loshchilov19decoupled}.
Scaling studies report substantial efficiency gains~\citep{liu25scalable}, Muon-based optimizers have been used to train frontier models such as Kimi~K2 and GLM-4.5~\citep{kimi25k2,glm25glm}, and a systematic benchmark places Muon among the strongest pretraining optimizers~\citep{wen26fantastic}.
This success depends on an efficient implementation of the operation at the core of Muon, the orthogonalization of the update direction.

At each round, Muon maintains an exponential moving average $M$ of the stochastic gradients and steps along an orthogonalization of $M$: for a singular value decomposition $M=U\Sigma V^\top$, the ideal direction is the polar factor $UV^\top$, which keeps the singular subspaces of $M$ and maps every positive singular value to one~\citep{jordan2024muon,bernstein24old}.
Instead of computing $UV^\top$ through a singular value decomposition, Muon approximates it with a few iterations of Newton--Schulz, a classical scheme built from a fixed odd matrix polynomial that requires only matrix--matrix multiplications~\citep{kovarik70iterative,bjorck71iterative}.
We refer to the iteration truncated at a finite depth as finite Newton--Schulz.
This inexpensive orthogonalization is a key reason why Muon is practical at the scale of LLM pretraining.

Muon's empirical success has prompted a rapidly growing convergence theory, so far mostly for smooth objectives.
Most of it, however, replaces the Newton--Schulz loop with the exact polar factor, or, more generally, with an exact linear minimization oracle over a norm ball~\citep[\textit{e.g.,}][]{li25note,kovalev25orthogonalization,pethick25lmo,shen26convergence,chen26spectral,riabinin26gluon,sfyraki26lions}, so the finite Newton--Schulz iteration never enters the analysis.
More recent analyses account for the finite Newton--Schulz iteration, but treat its finite-depth effect as an approximation error relative to the exact-polar update~\citep{kim26convergence,shulgin26inexact,choudhury26nesterov}.
In these analyses, finite Newton--Schulz can only hurt because the guarantees improve when the iteration more closely approximates the exact polar factor.

A separate line of theoretical work removes the smoothness assumption on the objective and instead builds on online learning.
The online-to-nonconvex conversion (O2NC) of~\citet{cutkosky23optimal} lets an online learner choose the update increments of an optimizer.
The learner's regret then translates into a bound on a relaxed, Goldstein-type stationarity measure~\citep{goldstein77optimization}, the standard target in nonsmooth nonconvex optimization.
Applying this framework to matrix optimizers, \citet{jiang26adaptive} observe that Muon with the exact-polar update coincides with follow-the-leader over the operator-norm ball, an online algorithm that can suffer linear regret and thus cannot guarantee convergence to stationary points.
They therefore replace the online learner, injecting stochastic perturbations or augmenting the momentum matrix, and derive the Pion and Leon algorithms, which come with stationarity guarantees for nonsmooth objectives.
These methods, however, depart from Muon as implemented: they add ingredients that Muon does not use, and the finite Newton--Schulz iteration that Muon does use never enters their analyses.

Thus one line of work admits finite Newton--Schulz only as an error to be controlled and requires smoothness, while the other handles nonsmooth objectives but replaces the update that Muon actually performs.
This leaves the basic question open:
\begin{center}
\emph{Can the finite Newton--Schulz iteration be a benefit rather than an error, enabling Muon with momentum to find stationary points of nonsmooth objectives?}
\end{center}

\subsection{Contributions of this paper}

We answer this question affirmatively.
With momentum and finite Newton--Schulz, Muon finds stationary points of nonsmooth objectives, whereas with the exact-polar update it may fail to converge~\citep{parshakova26muon}.
The reason is that the finite iteration replaces the discontinuous polar map with a Lipschitz map of the singular values, and this smoothing enables the online-to-nonconvex conversion to provide a stationarity guarantee.

The following informal statement summarizes the resulting guarantee.
\begin{theorem}[Informal version of~\cref{thm:ns-complexity}]
\label{thm:informal}
Consider a Lipschitz objective that may be neither smooth nor convex, accessed through an unbiased stochastic gradient oracle with bounded noise. Then, for any radius $\rho>0$ and accuracy $\epsilon>0$,
Muon with momentum and $q=O(\log(1/\epsilon))$ Newton--Schulz steps per round finds a $(\rho,\epsilon)$-stationary point in expectation within $O(\rho^{-1}\epsilon^{-3}+\epsilon^{-2})$ stochastic gradient evaluations.
\end{theorem}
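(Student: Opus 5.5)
The plan is to run the online-to-nonconvex conversion (O2NC) of \citet{cutkosky23optimal} with the increments $\Delta_t = x_t - x_{t-1}$ chosen by an online learner, and to recognize Muon with momentum and $q$ Newton--Schulz steps as such a learner. Concretely, with momentum $M_t = \beta M_{t-1} + (1-\beta) g_t$, the update is $\Delta_t = -D\,\phi_q(M_t)$. Here $\phi_q$ is the spectral map that applies the scalar Newton--Schulz polynomial $p^{\circ q}$ to the singular values, after a normalization step that brings them into $[0,1]$. I would show that $\phi_q = \nabla \Phi_q$ for the spectral potential $\Phi_q(M)=\sum_i \psi_q(\sigma_i(M))$, where $\psi_q' = p^{\circ q}$; this holds by Lewis' theorem on spectral functions, since $p^{\circ q}$ is odd. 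So Muon is follow-the-regularized-leader, equivalently dual averaging, on the discounted linear losses $\langle g_t,\cdot\rangle$ with conjugate regularizer $\Phi_q^*$, restricted to the operator-norm ball of radius $D$. The conversion lemma then bounds $\E\|\overline{\nabla F}\|$ over epochs of length $T_0$ by the discounted regret $\DReg$ against a comparator $-D\,U V^\top$ of the averaged gradient, divided by $DT$. The remaining terms are a momentum/variance term $O(\sigma/\sqrt{T_0})$ and a term $O(1/(DT))$.

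The key analytic step is a regret bound for FTRL with a smooth potential, $\DReg \le \Phi^*$-diameter$/\eta + \eta\sum_t \|g_t\|_*^2 \cdot \Lip(\phi_q)$. Equivalently, I would use the standard ``be-the-leader plus stability'' argument: $\langle g_t, \Delta_t - \Delta_{t+1}\rangle \le \|g_t\|\,\|\phi_q(M_t)-\phi_q(M_{t+1})\|D$. I would then combine this with the fact that $\phi_q$ is Lipschitz with constant $L_q$, which follows because the scalar map is Lipschitz and spectral maps inherit Lipschitzness, using a Davis/Lewis-type result up to a dimension-free constant. The Lipschitz constant grows like $L_q \approx (3/2)^q$ near zero. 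Separately, I would quantify the approximation gap: $p^{\circ q}(s)\ge 1-\delta$ for $s \ge \tau$ with $\tau \approx (2/3)^q$. This means that the comparator $UV^\top$ is achieved by $\phi_q$ up to an error $\tau\cdot\|{\cdot}\|_{\mathrm{nuc}}$ in linear loss, contributing a bias of order $D\tau G T$.

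Balancing these terms gives the result. The stability term contributes about $D G^2 L_q (1-\beta) T$, the bias contributes $D G\tau T$, and the comparator/diameter terms contribute $D G/(1-\beta)$ per epoch. With $L_q\tau\approx 1$, choosing $\tau\approx\epsilon/G$ forces $q=O(\log(1/\epsilon))$. Then I would set $1-\beta\approx \epsilon^2$, $T_0\approx\epsilon^{-2}$, and $D\approx \rho/T_0$ so that the Goldstein radius $\rho$ covers the epoch. The total count becomes $T\approx \epsilon^{-2}\cdot \max(1, 1/(\rho\epsilon))$, i.e.\ $O(\rho^{-1}\epsilon^{-3}+\epsilon^{-2})$.

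I expect the main obstacle to be the stability/bias tradeoff. The Lipschitz constant of the finite Newton--Schulz map near zero singular values blows up exponentially in $q$, so I must check that the product of stability cost and approximation bias stays $O(\epsilon)$ simultaneously. This requires a sharper control than worst-case Lipschitzness. I would first try to work with the normalized momentum, dividing by the Frobenius norm or an upper bound $G$, so that $L_q$ scales as $(3/2)^q/G$. I would then exploit that small singular values contribute little to the linear loss, which may let me replace the crude $L_q$ bound by a local bound weighted by $\sigma_i$.
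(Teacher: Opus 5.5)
Your architecture matches the paper's. It uses discounted O2NC and treats Muon as the gradient of a spectral potential, equivalently FTRL with the conjugate regularizer. That regularizer's weight $1/((1-\beta)\beta^{t-1})$ increases in $t$, so your be-the-leader stability bound is a legitimate substitute for the paper's momentum/Bregman decomposition. Your penalty--stability split and the choices $1-\beta\approx\epsilon^2$, $D\approx(1-\beta)\rho$ also match. What is missing is the quantitative lemma that makes the balance close without loss: $\Lip(h_q)=A^q$ and $\Delta(h_q)=\int_0^1(1-h_q(u))\,\mathrm{d}u\leq1/(A^q+1)$, so that $\Lip(h_q)\Delta(h_q)\leq1$.

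Your single-threshold surrogate does not deliver this. From $p^{\circ q}(s)\geq1-\delta$ for $s\geq\tau$ you only get a linear-loss error $\delta\lVert M\rVert_*+r\tau C$. This is additive in $\tau$, not $\tau\lVert M\rVert_*$, so you also need $\delta\lesssim\tau\approx\epsilon$. But the threshold at which $h_q$ reaches $1-\delta$ is not $A^{-q}$; it is about $A^{-q}\log^{c}(1/\delta)$ for some $c>0$. The reason is that the iteration approaches $1$ only quadratically (cubic map) or cubically (quintic map), so climbing from a constant to $1-\delta$ consumes $\Theta(\log\log(1/\delta))$ of the $q$ steps. With $\delta\approx\epsilon$ this inflates $L_q\tau$ to $\log^{c}(1/\epsilon)$, and the complexity by the same factor.

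The right penalty is the integrated gap. It equals $\phi_h^*(1)$, the conjugate regularizer evaluated at the comparator $-D\polar(\cdot)$, and it gives $\lVert M\rVert_*-\widetilde\Phi_{h,C}(M)\leq rC\Delta(h)$. The paper bounds it in three steps. It computes $f'(x)=A(1-x^2)^2$. It proves $f\geq g_A$ for $g_L(x)=1-(1-x)^L$. It then uses $g_L\circ g_M=g_{LM}$ to obtain $h_q\geq g_{A^q}$ by induction.

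Your bookkeeping also does not close as written. You have a comparator/diameter term of size $DG/(1-\beta)$ per window that does not carry the factor $\Delta(h_q)$. After the conversion divides by $D$ times the window length $1/(1-\beta)$, this leaves an $O(G)$ residual that never becomes small. In the correct accounting, the only comparator-dependent term is the regularizer at the comparator, $DCr\Delta(h_q)/(1-\beta)$, which is your bias term. So there are just two terms: $\E[\DReg_t(D)]\leq D(rCA^{-q}/(1-\beta)+2\Gamma^2A^q/C)$.

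The obstacle in your last paragraph is then not real. Take the fixed scale $C=G_{\mathrm{op}}$ and $A^q\approx G_{\mathrm{op}}\sqrt r/(\Gamma\sqrt{1-\beta})$. The bound becomes $O(D\Gamma\sqrt{r/(1-\beta)})$ per window, i.e.\ $O(\Gamma\sqrt{r(1-\beta)})=O(\epsilon)$ after the conversion. This uses only the worst-case Lipschitz constant; no $\sigma_i$-weighted local bound is needed.

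Normalizing by $\lVert M_t\rVert_{\mathrm F}$ would in fact break the argument. The map $M\mapsto\calH_h(M/\lVert M\rVert_{\mathrm F})$ is not Lipschitz near $0$. Moreover, this scale can shrink faster than $\beta$ per round, violating $C_{s+1}\geq\beta C_s$. That condition is precisely the nondecreasing-regularizer requirement your be-the-leader step relies on.

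Finally, keep the conversion discounted throughout rather than epoch-based: Muon never resets its momentum, so it has no per-epoch regret guarantee. Also let the round-$t$ action use $M_{t-1}$ rather than $M_t$, since $g_t$ is queried at a point that depends on that action.
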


The stationarity criterion is the $(\rho,\epsilon)$-stationarity of~\citet{jiang26adaptive}, stated as~\cref{def:rho-stationarity}, and the leading $\rho^{-1}\epsilon^{-3}$ dependence matches the guarantees established for Pion and Leon under the same criterion~(see~\cref{app:comparison}).
To our knowledge, this is the first stationarity guarantee for nonsmooth nonconvex objectives in which finite Newton--Schulz acts as the smoothing mechanism that enables convergence rather than as an approximation error to be controlled.
The required depth grows only logarithmically in the target accuracy, which is consistent with the empirical observation that a few Newton--Schulz iterations suffice in practice~\citep{jordan2024muon}.
The precise variant we analyze and its remaining differences from deployed Muon are specified in~\cref{sec:setup} and discussed in~\cref{sec:conclusion}.

\paragraph{Technical contributions.}
By the online-to-nonconvex conversion, it suffices to bound the discounted regret that Muon's update rule incurs as an online learner over the operator-norm ball~\citep{cutkosky23optimal,jiang26adaptive}.
We show that this update is the gradient of a smoothed spectral potential evaluated at the momentum, so the learner is a gradient-based prediction algorithm~\citep{abernethy14smoothing,abernethy16perturbation}.
The discounted regret of such an algorithm decomposes into a penalty term, which shrinks as the potential approaches the nuclear norm, and a stability term, which grows with the Lipschitz constant of the induced spectral map.
Unlike the standard decomposition in terms of the cumulative gradients~\citep{jiang26adaptive}, ours is carried out in terms of the momentum, which yields the optimal $O(1/\epsilon^{2})$ dependence for deterministic smooth objectives~(\cref{sec:ns-smooth}).
Our key lemma~(\cref{lem:scalar-ns}) quantifies both terms for finite Newton--Schulz: the penalty decays and the stability grows, both geometrically in $q$.
The depth therefore governs a penalty--stability tradeoff, and balancing the two terms yields sublinear discounted regret at $q=O(\log(1/\epsilon))$ and, through the conversion, the guarantee above~(\cref{sec:ns}).
As $q\to\infty$, the update approaches the exact polar factor, and the stability term grows without bound, reflecting the linear regret that follow-the-leader can suffer.

The Muon learner is also follow-the-regularized-leader (FTRL) on the discounted linear losses, with a spectral regularizer given by the Fenchel conjugate of the smoothed potential~(\cref{thm:general-ftrl}).
The depth controls the amount of regularization: at $q=0$ the regularizer becomes the squared Frobenius regularizer restricted to the operator-norm ball, and as $q\to\infty$ it decays to zero on the ball and the update approaches follow-the-leader.

The analysis is not specific to the Newton--Schulz polynomial.
The regret bound holds for more general spectral maps of the singular values~(\cref{thm:general-discounted-regret}), which is useful to identify sufficient conditions for such maps to yield stationarity guarantees on nonsmooth objectives.
In particular, recent work designs a smooth relaxation of the polar transformation and derives its associated convex regularizer~\citep{mustafi2026move,feoktistov26softsign}, and our general analysis covers such relaxations~(\cref{app:other-spectral-maps}).
In contrast to these studies, we show that the finite Newton--Schulz iteration itself smooths the polar transformation and induces an FTRL regularizer, without introducing a separate relaxation.

\subsection{Related work}

\paragraph{Muon optimizer.}
Muon was proposed by~\citet{jordan2024muon} as an optimizer for the matrix-shaped hidden layers of neural networks.
Spectral update directions were used earlier in preconditioned spectral descent~\citep{carlson15spectral}, and the steepest-descent and duality interpretations of Muon are developed by~\citet{bernstein24old,bernstein25modular}.
Convergence guarantees for Muon with the exact-polar update or an exact linear minimization oracle have been established under smoothness and generalized smoothness~\citep{li25note,kovalev25orthogonalization,pethick25lmo,shen26convergence,chen26spectral,riabinin26gluon,sfyraki26lions}.
Its implicit bias, the denoising role of momentum, and the regimes in which spectral updates outperform Euclidean ones have also been studied~\citep{fan25implicit,li26denoise,davis25spectral,braun26spectral}.
For Muon with the exact-polar update, \citet{parshakova26muon} construct convex Lipschitz objectives on which the iterates fail to converge, which sharpens the question of what changes under finite Newton--Schulz on nonsmooth problems.
Our work takes up this question, analyzing the update that retains both momentum and finite Newton--Schulz on nonsmooth nonconvex objectives.

\paragraph{Newton--Schulz and general spectral maps.}
Iterative orthogonalization by matrix polynomials goes back to~\citet{kovarik70iterative} and~\citet{bjorck71iterative}, and the iterations used in Muon range from empirically tuned polynomials~\citep{jordan2024muon} to optimal polynomial schemes designed for this purpose~\citep{amsel26polar}.
Analyses of Muon that include the finite iteration treat the effect of the finite iteration as an error: for smooth nonconvex objectives~\citep{kim26convergence}, through inexact linear minimization oracles~\citep{shulgin26inexact}, and for Nesterov momentum under heavy-tailed noise~\citep{choudhury26nesterov}.
A few recent works identify benefits of orthogonalization and of its finite approximations in structured settings: on matrix quadratics, finite Newton--Schulz damps directions associated with small singular values near rank deficiency~\citep{shulgin26quadratic}, inexact polar updates can improve reachability on simple strongly convex quadratics~\citep{gonon26insights}, and spectral orthogonalization acts as a preconditioner in matrix factorization and in-context learning models~\citep{ma26preconditioning}.
These benefits are confined to quadratic or otherwise structured objectives.
In contrast, we show that finite Newton--Schulz enables stationarity guarantees for general nonsmooth nonconvex objectives and characterize how the depth trades off polar approximation against the stability of the online updates.
Beyond the exact polar factor, recent work has considered more general spectral maps~\citep{qi26delving,dong26fractional,jiang26clipping,wu26dynmuon}, including smooth relaxations of spectral normalization and orthogonalization~\citep{feoktistov26softsign,mustafi2026move}.
Our theory in~\cref{sec:general} gives sufficient conditions under which such maps enjoy stationarity guarantees for nonsmooth nonconvex objectives.

\paragraph{Online-to-nonconvex conversion.}
The online-to-nonconvex conversion (O2NC) framework was introduced by~\citet{cutkosky23optimal}, who convert online regret guarantees into optimal stationarity guarantees for stochastic nonsmooth nonconvex optimization.
The framework can also exploit additional properties of an objective: it attains the optimal first-order complexity for deterministic smooth objectives~\citep{cutkosky23optimal}, and the best-known complexity when both the gradient and Hessian are Lipschitz~\citep{patitucci26improving}.
The discounted O2NC framework, which is central to our analysis of Muon's momentum, grew out of O2NC variants based on random scaling and model exponential moving averages~\citep{zhang24random,ahn24model}.
The broader online-learning perspective has also been used to understand practical optimizers such as Adam~\citep{ahn24understanding} and schedule-free SGD~\citep{ahn25general}, and to handle heavy-tailed gradient noise~\citep{liu24highprobability,liu26online} and weakly convex optimization~\citep{ji26derandomized}.
Most closely related to our matrix setting, \citet{jiang26adaptive} construct a family of smoothed potentials for the nuclear norm and use the resulting adaptive matrix online learners to derive Pion and Leon, which have a convergence guarantee in nonsmooth optimization.
In contrast, we analyze the finite Newton--Schulz transformation underlying Muon and show that finite depth smooths the exact-polar update by quantifying how the depth trades off polar approximation against the stability of successive updates from the online-to-nonconvex conversion perspective.
A detailed comparison with the closest smooth and nonsmooth guarantees is provided in~\cref{app:comparison}.

\section{Optimization setting and the Muon update}
\label{sec:setup}

This section formalizes the optimization problem, the stationarity criterion, and the Muon update that we analyze.

\paragraph{Notation.}
We work with matrices in $\R^{m\times n}$ with $1\leq m\leq n$ and set
$r\coloneqq\min\{m,n\}=m$.
The case $m>n$ is reduced to this one by transposing every matrix. 
We write $\sigma_1(X)\geq\cdots\geq\sigma_r(X)\geq0$ for the singular values of $X\in\R^{m\times n}$ and set $\sigma(X)\coloneqq(\sigma_1(X),\ldots,\sigma_r(X))$.
We write $\diag(x_1,\ldots,x_r)\in\R^{r\times r}$ for the diagonal matrix with diagonal entries $x_1,\ldots,x_r$.
We write $\inpr{X,Y}\coloneqq \tr(X^\top Y)$ for the Frobenius inner product and define the operator, Frobenius, and nuclear norms by $\lVert X\rVert_{\mathrm{op}}\coloneqq\sigma_1(X)$, $\lVert X\rVert_{\mathrm{F}}\coloneqq \sqrt{\sum_{i=1}^r\sigma_i(X)^2}$, and $\lVert X\rVert_*\coloneqq\sum_{i=1}^r\sigma_i(X)$.
These norms satisfy $\lVert X\rVert_{\mathrm{op}}\leq \lVert X\rVert_{\mathrm{F}}\leq \sqrt r\,\lVert X\rVert_{\mathrm{op}}$ and $\lVert X\rVert_{\mathrm{F}}\leq \lVert X\rVert_*\leq \sqrt r\,\lVert X\rVert_{\mathrm{F}}$.
For a positive integer $k$, let $[k]\coloneqq\{1,\ldots,k\}$.
For a scalar function $h$, we use $\Lip(h)$ to denote its Lipschitz constant.
For a differentiable function $\Phi$, we use $B_\Phi(M'\Vert M)\coloneqq\Phi(M')-\Phi(M)-\inpr{\nabla\Phi(M),M'-M}$ to denote the Bregman divergence from $M$ to $M'$ induced by $\Phi$.

\paragraph{Polar factor and singular-value maps.}
Let $S=U\diag(\sigma_1(S),\ldots,\sigma_r(S))V^\top$ be a thin singular value decomposition.
Define its polar factor by $\polar(S)\coloneqq U\diag(\mathbf{1}\{\sigma_i(S)>0\})V^\top$, where $\mathbf{1}\{\cdot\}$ denotes the indicator function.
The value of $\polar(S)$ is independent of the chosen thin singular value decomposition.
The operator and nuclear norms are dual, so for every $S\in\R^{m\times n}$ and radius $D>0$,
\begin{equation}
  \sup_{\lVert X\rVert_{\mathrm{op}}\leq D}\inpr{S,X}
  =
  D\lVert S\rVert_*,
  \label{eq:operator-nuclear-duality}
\end{equation}
where the supremum is attained by the matrix $D\polar(S)$.
For a scalar function $h\colon[0,\infty)\to\R$ with $h(0)=0$, define the singular-value map
\begin{equation}
  \calH_h(S)
  \coloneqq U\diag\left(h(\sigma_1(S)),\ldots,h(\sigma_r(S))\right)V^\top
  \label{eq:singular-value-map}
\end{equation}
for $S=U\diag(\sigma_1(S),\ldots,\sigma_r(S))V^\top$.
The value of $\calH_h(S)$ is independent of the chosen thin singular value decomposition.
Indeed, a thin singular value decomposition is unique up to a simultaneous orthogonal change of basis in $U$ and $V$ within each group of repeated positive singular values, and this change cancels in~\cref{eq:singular-value-map}.
The singular vectors associated with zero singular values can be chosen independently in $U$ and $V$, but the corresponding terms vanish since $h(0)=0$.

\subsection{Objective and stationarity}
\label{sec:objective}

Let $\mathcal D$ be a probability distribution on a sample space $\mathcal Z$, and let $\ell\colon\R^{m\times n}\times\mathcal Z\to\R$ be a loss function.
We consider the nonconvex matrix optimization problem
\[
  \min_{W\in\R^{m\times n}}\mathcal L(W)
  \coloneqq \E_{\zeta\sim\mathcal D}[\ell(W;\zeta)].
\]
We impose the following conditions on the objective $\mathcal L$ and the stochastic gradient oracle.

\begin{assumption}
\label{ass:oracle}
The function $\mathcal L$ is differentiable\footnote{As discussed in~\citet[Proposition~2 and Corollary~6]{cutkosky23optimal}, for a locally Lipschitz objective that is not differentiable everywhere, we can construct a differentiable surrogate objective $\widehat{\mathcal L}_\delta(W)\coloneqq\E_U[\mathcal L(W+\delta U)]$, where $\delta>0$ and $U$ is uniform on the Frobenius unit ball, and this does not worsen the resulting stationarity guarantees.} and bounded below.
There are constants $\Gamma > 0$ and $\sigma\geq0$ such that, at every query point $W\in\R^{m\times n}$, the oracle uses a random seed independent of the preceding history and returns a stochastic gradient $G$ satisfying
\begin{equation}
  \E[G\mid W]=\nabla\mathcal L(W),
  \qquad
  \E[\lVert G\rVert_{\mathrm{F}}^2\mid W]\leq\Gamma^2,
  \qquad
  \E[\lVert G-\nabla\mathcal L(W)\rVert_{\mathrm{F}}^2\mid W]\leq\sigma^2.
  \label{eq:oracle-bounds}
\end{equation}
\end{assumption}

For the initial point $W_0\in\R^{m\times n}$, let $\Delta_{\mathcal L}\coloneqq\mathcal L(W_0)-\inf_{W\in\R^{m\times n}}\mathcal L(W)<\infty$.
Note that we have $\lVert\nabla\mathcal L(W)\rVert_{\mathrm{F}}\leq\Gamma$ for every $W$ by Jensen's inequality and~\cref{ass:oracle}, and thus $\mathcal L$ is $\Gamma$-Lipschitz with respect to the Frobenius norm.
We do not assume smoothness of $\mathcal L$.
Without loss of generality, we assume $\Gamma \leq \sqrt{r}\,G_{\mathrm{op}}$ when $\lVert G_t\rVert_{\mathrm{op}} \leq G_{\mathrm{op}}$, since this implies $\E[\lVert G\rVert_{\mathrm F}^2 \mid W] \leq rG_{\mathrm{op}}^2$.

To cover objectives without smoothness, we measure progress by the following notion of $(\rho,\epsilon)$-stationarity.
\begin{definition}[{\citealp[Definition~9]{jiang26adaptive}}]
\label{def:rho-stationarity}
For $W\in\R^{m\times n}$ and $\rho>0$, let $\mathcal P(W;\rho)$ be the set of finitely supported probability distributions $p$ on $\R^{m\times n}$ such that $\E_{Y\sim p}[Y]=W$ and $\E_{Y\sim p} \lVert Y-W\rVert_{\mathrm{op}}\leq\rho$.
For $\epsilon\geq0$, a point $W$ is said to be a \emph{$(\rho,\epsilon)$-stationary point} if 
$\lVert \nabla\mathcal L(W)\rVert_*^{[\rho]} \coloneqq \inf_{p\in\mathcal P(W;\rho)} \lVert \E_{Y\sim p} \nabla\mathcal L(Y) \rVert_* \leq\epsilon$.
\end{definition}

The point mass at $W$ belongs to $\mathcal P(W;\rho)$, so $\lVert \nabla\mathcal L(W)\rVert_*^{[\rho]}\leq\lVert \nabla\mathcal L(W)\rVert_*$.
Other distributions in $\mathcal P(W;\rho)$ allow gradients at different points to cancel on average.
For general Lipschitz nonsmooth objectives, such relaxations are unavoidable~\citep{zhang20complexity,kornowski22oracle}.

\subsection{Muon with momentum and finite Newton--Schulz}
\label{sec:muon}

For matrix parameters, Muon maintains gradient momentum and approximately orthogonalizes it with a few Newton--Schulz iterations.
The specific Muon update investigated in this paper is given in~\cref{alg:momentum-muon}.

\begin{algorithm}[t]
\caption{Muon online learner with momentum and finite Newton--Schulz}
\label{alg:momentum-muon}
\KwRequire{momentum $\beta\in(0,1)$, radius $D>0$, operator-norm bound $G_{\mathrm{op}}>0$, depth $q\in\{0,1,\ldots\}$}
Set $M_0\gets0$ and $X_1\gets0$\;
\For{$t=1,2,\ldots$}{
  Receive gradient feedback $G_t$ after playing $X_t$\;
  Update momentum $M_t\gets\beta M_{t-1}+(1-\beta)G_t$\label{algline:momentum}\;
  Normalize momentum by $Y\gets M_t/G_{\mathrm{op}}$\label{algline:normalization}\;
  \For{$j=1,\ldots,q$}{
    $B\gets YY^\top$,\quad
    $P\gets BY$,\quad
    $Q\gets BP$\;
    $Y\gets \frac{15}{8}Y-\frac54P+\frac38Q$\label{algline:ns-step}\;
  }
  Set $X_{t+1}\gets-DY$\label{algline:action}\;
}
\end{algorithm}

The learner in~\cref{alg:momentum-muon} maintains the momentum sequence
\begin{equation}
  M_0\coloneqq 0,
  \qquad
  M_t\coloneqq \beta M_{t-1}+(1-\beta)G_t,
  \label{eq:ema-momentum}
\end{equation}
where $\beta\in(0,1)$ is the momentum parameter, as in Line~\ref{algline:momentum}.
For the Muon learner, we assume that the stochastic gradients returned by the oracle are bounded in operator norm by the input $G_{\mathrm{op}}>0$, that is, $\lVert G_s\rVert_{\mathrm{op}}\leq G_{\mathrm{op}}$ for every $s$.
The normalization in Line~\ref{algline:normalization} then satisfies $\lVert M_t/G_{\mathrm{op}}\rVert_{\mathrm{op}}\leq1$.
Then the learner applies $q$ Newton--Schulz steps in Line~\ref{algline:ns-step}~\citep{kovarik70iterative,bjorck71iterative}.
Finally, the learner sets the next action to $X_{t+1}=-DY$ in Line~\ref{algline:action}.

In general, truncating the Taylor expansion of $\lambda^{-1/2}$
around $\lambda=1$ at order $\kappa\ge1$ gives
\begin{equation}
p_\kappa(\lambda)
\coloneqq
\sum_{s=0}^{\kappa}
c_s(1-\lambda)^s,
\qquad
c_s\coloneqq\frac{(2s)!}{4^s(s!)^2}. \label{eq:ns-taylor}
\end{equation}
Throughout this paper, we focus on the case $\kappa=2$ as computed in Line~\ref{algline:ns-step}, which is the degree-five Newton--Schulz transformation commonly used in Muon.
On the singular values, one step of Line~\ref{algline:ns-step} then acts as
\[
  f(x)\coloneqq x \, p_2(x^2)=\frac{15}{8}x-\frac54x^3+\frac38x^5,
  \qquad
  A\coloneqq f'(0)=\frac{15}{8}.
\]
For $q\in\{0,1,2,\ldots\}$, let $f^{\circ q}$ denote the $q$-fold composition of $f$, with $f^{\circ0}$ the identity, and define
\begin{equation}
  h_q\coloneqq f^{\circ q}\quad\text{on }[0,1],
  \qquad
  h_q(x)\coloneqq1\quad\text{for }x\geq1.
  \label{eq:hq-definition}
\end{equation}
Under the operator-norm bound above, the procedure in Lines~\ref{algline:normalization}--\ref{algline:action} can be written as
\begin{equation}
  X_{t+1}
  =
  -D\calH_{h_q}(M_t/G_{\mathrm{op}}). 
  \label{eq:ns-spectral-map}
\end{equation}
In the discounted online-to-nonconvex conversion framework introduced in~\cref{sec:o2nc}, we use the learner in~\cref{alg:momentum-muon} as the online learner $\mathcal A$ to determine the update direction in the optimizer.

Note that the learner in \Cref{alg:momentum-muon} differs from practical Muon implementations in several respects, including the fixed normalization discussed in~\cref{sec:conclusion}.
Nevertheless, it retains momentum and uses finitely many Newton--Schulz steps, as practical Muon does.

\section{Discounted online-to-nonconvex conversion}
\label{sec:o2nc}

Online-to-nonconvex conversion (O2NC) uses an online learner to select update directions and converts a bound on regret into a stationarity guarantee~\citep{cutkosky23optimal}.
In particular, we use the discounted O2NC framework of~\citet{jiang26adaptive}, since it uses the same discount factor $\beta$ as the momentum update in~\cref{alg:momentum-muon}.

\paragraph{Conversion mechanism.}

At each round $t$, the online learner chooses an increment $X_t$ from the preceding gradient feedback, and the conversion sets $W_t=W_{t-1}+X_t$.
To relate the online loss to $\mathcal L(W_t)-\mathcal L(W_{t-1})$, it queries the oracle at a random point on the segment between these endpoints.
\Cref{alg:generic-o2nc} formalizes the overview in Section~5.1 of~\citet{jiang26adaptive}, using the protocol in their Appendix~G and the output rule in their Proposition~25.

\begin{algorithm}[t]
\caption{Discounted O2NC with a generic online learner}
\label{alg:generic-o2nc}
\KwRequire{horizon $T$, initial point $W_0$, discount $\beta\in(0,1)$, radius $D>0$, online learner $\mathcal A$}
Initialize $\mathcal A$\;
\For{$t=1, 2, \dots T$}{
  Receive $X_t$ satisfying $\lVert X_t\rVert_{\mathrm{op}}\leq D$ from the online learner $\mathcal A$\;
  Set $W_t\gets W_{t-1}+X_t$\;
  Set $\widetilde W_t\gets W_{t-1}+u_tX_t$ for $u_t\sim\operatorname{Unif}([0,1])$ \label{algline:o2nc-query}\;
  Query the stochastic gradient oracle at $\widetilde W_t$ to obtain $G_t$\label{algline:o2nc-gradient} satisfying~\cref{eq:oracle-bounds} with $W=\widetilde W_t$\;
  Construct and feed the linear loss $\ell_t^{[\beta]}(X)\coloneqq \beta^{-t}\inpr{G_t,X}$ to $\mathcal A$\label{algline:o2nc-loss}\;
}
Form $\bar W_t$ as in~\cref{eq:ewa-iterate}, sample $\tau$ from~\cref{eq:output-distribution}, and return $\bar W_\tau$ \label{algline:random-output}\;
\end{algorithm}

The key observation for O2NC is that we can estimate the function change $\mathcal L(W_t)-\mathcal L(W_{t-1})$ by choosing the query point appropriately, even without smoothness of $\mathcal L$.
As in Line~\ref{algline:o2nc-query}, the conversion chooses the query point $\widetilde W_t= W_{t-1}+u_tX_t$ for $u_t\sim\operatorname{Unif}([0,1])$ and observes a conditionally unbiased gradient $G_t$ satisfying~\cref{ass:oracle}, as in Line~\ref{algline:o2nc-gradient}.
With this choice of query point, under~\cref{ass:oracle} the fundamental theorem of calculus along the segment from $W_{t-1}$ to $W_t$ gives
\begin{equation}
  \mathcal L(W_t)-\mathcal L(W_{t-1})
  =
  \int_0^1\inpr{\nabla\mathcal L(W_{t-1}+uX_t),X_t}\,\mathrm{d}u
  =
  \E_{u_t}\left[\inpr{\nabla\mathcal L(\widetilde W_t),X_t}\right]
  .
  \notag
\end{equation}
This implies that $\mathcal L(W_t)-\mathcal L(W_{t-1})= \E[\inpr{G_t,X_t}\mid W_{t-1},X_t]$.
By this identity, controlling the decrease of the objective reduces to online linear optimization with gradient feedback $G_t$, without any smoothness of $\mathcal L$.
The conversion then constructs the loss function $\ell_t^{[\beta]}$ and feeds it to the online learner, as in Line~\ref{algline:o2nc-loss}.

After $T$ rounds, as in Line~\ref{algline:random-output}, the conversion computes the exponentially weighted average (EWA) iterates
\begin{equation}
  \bar W_t
  \coloneqq \frac{1-\beta}{1-\beta^t}
    \sum_{s=1}^t\beta^{t-s}\widetilde W_s,
  \qquad t=1,\ldots,T,
  \label{eq:ewa-iterate}
\end{equation}
and then chooses the randomized output $\bar W_\tau$, where the output time $\tau\in[T]$ is drawn independently according to the distribution given by
\begin{equation}
  \P(\tau=t)
  \coloneqq \begin{cases}
    \dfrac{1-\beta^t}{T}, & 1\leq t<T,\\
    \dfrac{1-\beta^T}{(1-\beta)T}, & t=T.
  \end{cases}
  \label{eq:output-distribution}
\end{equation}

\paragraph{Conversion guarantee.}
The discounted O2NC framework transforms a discounted-regret bound of an online learner into a guarantee for $(\rho,\epsilon)$-stationarity in~\cref{def:rho-stationarity}.
For a fixed terminal time $t$, define the discounted regret of an online learner run over the feasible set $\{X\in\R^{m\times n} \colon \lVert X\rVert_{\mathrm{op}}\leq D\}$ by
\begin{equation}
  \DReg_t(D)
  \coloneqq \max_{\lVert X\rVert_{\mathrm{op}}\leq D}
  \sum_{s=1}^t\beta^{t-s}\inpr{G_s,X_s-X}.
  \label{eq:discounted-regret-definition}
\end{equation}

The following lemma is a corollary of~\citet[Proposition~25]{jiang26adaptive}, and the proof is given in~\cref{app:o2nc-proof}.

\begin{lemma}[{Corollary of \citealp[Proposition~25]{jiang26adaptive}}]
\label{lem:jiang-o2nc}
Under~\cref{ass:oracle}, fix $\rho>0$ and run~\cref{alg:generic-o2nc} with $D=(1-\beta)\rho/(4\beta)$.
Then
\begin{align}
  \E\brk*{\lVert\nabla\mathcal L(\bar W_\tau)\rVert_*^{[\rho]}}
  &\leq
  \frac{4\Delta_{\mathcal L}}{(1-\beta)\rho T}
  +\frac1T\E\left[
      \DReg_T(1)+(1-\beta)\sum_{t=1}^{T-1}\DReg_t(1)
    \right]
  \notag\\
  &\qquad
  +\left(1-\beta+\frac{\beta}{T}\right)
  \frac{\sqrt r\,\sigma}{\sqrt{1-\beta^2}},
  \notag
\end{align}
where $\DReg_t(1) \coloneqq \max_{\lVert U\rVert_{\mathrm{op}}\leq 1} \sum_{s=1}^t\beta^{t-s}\inpr{G_s, X_s / D - U}$ so that $\DReg_t(D)=D\DReg_t(1)$.
\end{lemma}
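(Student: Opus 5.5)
The plan is to prove the bound directly, following the proof of \citet[Proposition~25]{jiang26adaptive}: telescope the objective along the O2NC iterates, insert a discounted comparator that points against a smoothed gradient, and convert the resulting gap into the $(\rho,\epsilon)$-stationarity measure.

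\emph{Step 1 (telescoping with a comparator).} By the identity $\mathcal L(W_t)-\mathcal L(W_{t-1})=\E[\inpr{G_t,X_t}\mid W_{t-1},X_t]$, summing over $t$ gives $-\Delta_{\mathcal L}\le \E\sum_{t=1}^T\inpr{G_t,X_t}$.
I will regroup $\sum_t\inpr{G_t,X_t}$ using the identity
\[
\sum_{t=1}^T a_t=\sum_{t=1}^{T}(1-\beta)\sum_{s\le t}\beta^{t-s}a_s+\beta\sum_{s\le T}\beta^{T-s}a_s ,
\]
restricted so that the weights reproduce $\P(\tau=t)$.
Concretely, the weights should be $(1-\beta)$ on each $t<T$ and $1$ on $t=T$, with the endpoint term $\beta\cdot$ absorbed into the $T$ term.
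Within each discounted block ending at $t$, I will add and subtract a comparator $U_t=-D\polar(\bar G_t)$, where $\bar G_t\coloneqq\frac{1-\beta}{1-\beta^t}\sum_{s\le t}\beta^{t-s}\nabla\mathcal L(\widetilde W_s)$.
This splits the block into the discounted regret $D\,\DReg_t(1)$ plus $-D\sum_s\beta^{t-s}\inpr{G_s,\polar(\bar G_t)}$.

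\emph{Step 2 (noise and gradient terms).} The second piece splits into a true-gradient part and a noise part.
The true-gradient part equals $-D\frac{1-\beta^t}{1-\beta}\lVert\bar G_t\rVert_*$, by~\cref{eq:operator-nuclear-duality} and the definition of $\bar G_t$.
The noise part is bounded in expectation by $D\,\E\lVert\sum_s\beta^{t-s}(G_s-\nabla\mathcal L(\widetilde W_s))\rVert_*$.
This is at most $D\sqrt r$ times the Frobenius norm, and the martingale-difference structure gives a bound of $D\sqrt r\,\sigma/\sqrt{1-\beta^2}$.
Summing with the weights $(1-\beta)$ for $t<T$ and $1$ for $t=T$, then dividing by $D\,T$ times the normalization, gives the factor $(1-\beta+\beta/T)$ on the noise term.
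Rearranging yields
\[
\E\Bigl[\tfrac1T\sum_t w_t\lVert\bar G_t\rVert_*\Bigr]\le \tfrac{\Delta_{\mathcal L}}{DT}+\text{(regret)}+\text{(noise)} ,
\]
with $w_t$ matching~\cref{eq:output-distribution} after the factor $(1-\beta^t)/(1-\beta)$ is accounted for.
Substituting $D=(1-\beta)\rho/(4\beta)$, and using $\beta\le1$ loosely, gives the leading term $4\Delta_{\mathcal L}/((1-\beta)\rho T)$.

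\emph{Step 3 (stationarity).} It remains to show $\lVert\nabla\mathcal L(\bar W_t)\rVert_*^{[\rho]}\le\lVert\bar G_t\rVert_*$.
Take $p$ to be the distribution putting mass $\frac{(1-\beta)\beta^{t-s}}{1-\beta^t}$ on $\widetilde W_s$, conditioned on the $u$'s; its mean is $\bar W_t$ by~\cref{eq:ewa-iterate}.
Its spread satisfies $\lVert\widetilde W_s-\bar W_t\rVert_{\mathrm{op}}\le D\cdot(\text{number of steps between})$, and
\[
\sum_s\tfrac{(1-\beta)\beta^{t-s}}{1-\beta^t}(t-s+1)D\lesssim \frac{2D}{1-\beta}\le\rho
\]
when $D=(1-\beta)\rho/(4\beta)$.
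This spread estimate is the main obstacle, because the constant must come out exactly to $\rho$.
I will bound $\E_p\lVert Y-\bar W_t\rVert_{\mathrm{op}}\le 2\E_p\lVert Y-\widetilde W_t\rVert_{\mathrm{op}}$ and compute $\sum_k(1-\beta)\beta^k kD=\beta D/(1-\beta)$.
Together with the boundary terms, this gives at most $4\beta D/(1-\beta)=\rho$.
Finally, averaging over $\tau$ from~\cref{eq:output-distribution} matches the weights of Step 2, which completes the bound.
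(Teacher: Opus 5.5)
Your proposal is correct and follows the same route as the paper. The paper invokes the proof of \citet[Proposition~25]{jiang26adaptive} for the steps you reconstruct explicitly: the discounted telescoping, the comparator $-D\polar(\bar G_t)$, and the EWA-spread argument. It then bounds the two discounted noise sums exactly as you do, via the martingale-difference structure and $\lVert\cdot\rVert_*\le\sqrt r\,\lVert\cdot\rVert_{\mathrm F}$, which yields the factor $(1-\beta+\beta/T)$.

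The spread bound needs two small repairs.
\begin{itemize}
\item Your preliminary estimate $2D/(1-\beta)\le\rho$ holds only when $\beta\ge1/2$.
\item Your refined computation drops the normalization $1/(1-\beta^t)$ without comment. This is justified because the weights $(1-\beta)\beta^{k}/(1-\beta^t)$, with $k=t-s\in\{0,\ldots,t-1\}$, form a truncated geometric law. For the nondecreasing function $k\mapsto(k+1)\mathbf{1}\{k\geq1\}$, the expectation under this law is at most its untruncated value $\beta/(1-\beta)+\beta$.
\end{itemize}
This gives $\E_p\lVert Y-\bar W_t\rVert_{\mathrm{op}}\le2D(\beta/(1-\beta)+\beta)\le4\beta D/(1-\beta)=\rho$ for every $\beta\in(0,1)$.
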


The only quantities in the bound of~\cref{lem:jiang-o2nc} that depend on the online learner are the expected discounted regrets $\E[\DReg_t(1)]$, which we bound in the subsequent sections.

\paragraph{Exact-polar Muon as follow-the-leader.}

For the gradients $(G_s)_{s=1}^t$ returned at the O2NC query points, we have $M_t=(1-\beta)\beta^t\sum_{s=1}^t\beta^{-s}G_s$.
Hence, Muon with the exact-polar computation, in which the Newton--Schulz iteration is replaced by the exact polar factor, can be written as
\begin{equation}
  X_{t+1}
  =-D\polar(M_t)
  =-D\polar\left(\sum_{s=1}^t\beta^{-s}G_s\right)
  \in\argmin_{\lVert X\rVert_{\mathrm{op}}\leq D} \, \sum_{s=1}^t \inpr*{\beta^{-s}G_s,X},
  \label{eq:exact-polar-ftl}
\end{equation}
where we used $\polar(cS)=\polar(S)$ for $c>0$, and the inclusion follows by applying~\cref{eq:operator-nuclear-duality} to $-\sum_{s=1}^t\beta^{-s}G_s$.
This implies that the Muon update with the exact-polar computation is equivalent to the follow-the-leader (FTL) algorithm over the operator-norm ball for the loss sequence $(\ell_s^{[\beta]})_{s=1}^t$, as noted by~\citet[Section~5.2]{jiang26adaptive}.

Since FTL need not achieve sublinear regret, the discounted O2NC framework alone does not provide a general stationarity guarantee for Muon with the exact-polar update on nonsmooth nonconvex objectives.
This is consistent with the counterexamples of~\citet{parshakova26muon}, who construct nonsmooth convex Lipschitz objectives on which Muon with the exact-polar update and subgradients evaluated at the current iterate fails to converge.
To obtain convergence guarantees for nonsmooth nonconvex optimization, \citet{jiang26adaptive} instead replace FTL with online algorithms that enforce stability explicitly.
In contrast, \cref{sec:ns} shows that the finite Newton--Schulz iteration in~\cref{alg:momentum-muon}, though introduced only to approximate the polar factor numerically, achieves sublinear discounted regret with an appropriate normalization and depth.

\section{Discounted regret for a matrix online learner with a general spectral map}
\label{sec:general}
\begin{figure}[t]
  \centering
    \includegraphics[width=0.37\linewidth]{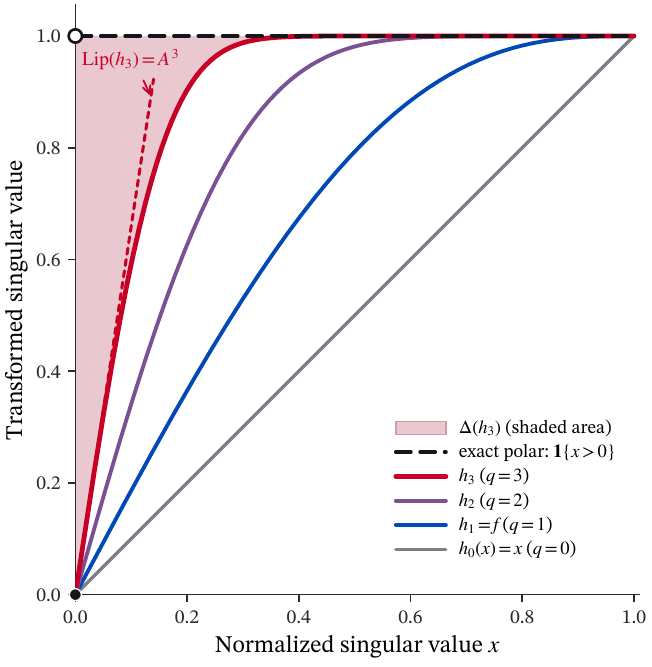}
  \caption{Finite Newton--Schulz smooths the exact polar map.
  For a normalized singular value $x\in[0,1]$, increasing $q$ decreases the gap $\Delta(h_q)$ between the Newton--Schulz map and the exact polar map but increases the Lipschitz constant $\Lip(h_q)=A^q$ of the map.
  The shaded region is $\Delta(h_3)=\int_0^1(1-h_3(x))\,\mathrm{d}x$.}
  \label{fig:scalar-ns-tradeoff}
\end{figure}

The conversion of~\cref{sec:o2nc} reduces nonconvex stationarity to bounding the discounted regret of the online learner over the operator-norm ball.
This section provides that bound for the learner that maintains momentum and applies a general spectral map to it.
The analysis is based on the gradient-based prediction algorithm~(GBPA) framework~\citep{abernethy14smoothing,abernethy16perturbation}, in which the learner plays the gradient of a differentiable potential function of the aggregated gradient feedback.
In our setting with the loss function $\ell_t^{[\beta]}$, this aggregate corresponds to the momentum $M_{t-1}$, so the learner plays $X_t=-D\nabla\widetilde\Phi_t(M_{t-1})$ for some potential function $\widetilde\Phi_t$, and we consider the regret decomposition in terms of the momentum sequence.
\Cref{sec:ns} specializes the result to Muon with finite Newton--Schulz in~\Cref{alg:momentum-muon}.

We consider a spectral map $h\colon[0,\infty)\to[0,1]$ satisfying the following conditions.
\begin{assumption}
\label{ass:general-spectral-link}
The function $h\colon[0,\infty)\to[0,1]$ is continuous, nondecreasing, and satisfies 
\begin{equation}
  h(0)=0,
  \qquad
  \Lip(h)<\infty,
  \qquad
  \Delta(h)\coloneqq \int_0^\infty (1 - h(u)) \,\mathrm{d}u < \infty.
  \notag
\end{equation}
\end{assumption}
Here the approximation error $\Delta(h)$ measures the gap between $h$ and the exact polar map, which sends every positive singular value to one, and a smaller $\Delta(h)$ forces a larger $\Lip(h)$.
The function $h_q$ in~\cref{eq:hq-definition} satisfies~\cref{ass:general-spectral-link}, as verified in~\cref{sec:ns-regret}, and \Cref{fig:scalar-ns-tradeoff} illustrates this tradeoff for $h = h_q$.

\subsection{Online learner and discounted-regret bound}

Using the momentum~\cref{eq:ema-momentum} and a normalization scale $C_t>0$, we consider the online learner that takes the action
\begin{equation}
  X_t^h
  \coloneqq-D\calH_h\left(\frac{M_{t-1}}{C_t}\right),
  \label{eq:general-spectral-action}
\end{equation}
which satisfies $\lVert X_t^h\rVert_{\mathrm{op}}\leq D$ because $h(\cdot)\in[0,1]$.
For $h=h_q$ and $C_t = G_{\mathrm{op}}$, the action~\cref{eq:general-spectral-action} is the finite Newton--Schulz update in~\cref{eq:ns-spectral-map} provided that $\lVert M_t \rVert_{\mathrm{op}} \leq G_{\mathrm{op}}$ for all $t$.
Replacing $\calH_h(\cdot)$ with $\polar(\cdot)$ recovers Muon with the exact-polar computation, which corresponds to the update~\cref{eq:exact-polar-ftl}.
The potential corresponding to exact-polar Muon is the base potential $\Phi(M)\coloneqq\lVert M\rVert_*$, one of whose subgradients is the exact polar factor.
The update~\cref{eq:general-spectral-action} replaces this base potential $\Phi$ by the smoothed potential given by
\begin{equation}
  \widetilde\Phi_{h,C}(M)
  \coloneqq C\sum_{i=1}^r
  \phi_h\left(\frac{\sigma_i(M)}{C}\right)
  \qquad
  \text{for}\quad
  \phi_h(x)\coloneqq\int_0^x h(u)\,\mathrm{d}u
  .
  \label{eq:general-spectral-potential}
\end{equation}

The regret bound under the update~\cref{eq:general-spectral-action} depends on $h$ through two scalar quantities: the Lipschitz constant $\Lip(h)$ and the approximation error $\Delta(h)$.

\begin{theorem}
\label{thm:general-discounted-regret}
Suppose~\cref{ass:general-spectral-link} holds.
For any $t\geq1$, let $(C_s)_{s=1}^t$ be positive reals satisfying $C_{s+1}\geq\beta C_s$ for $s \in [t-1]$.
Then the online learner~\cref{eq:general-spectral-action} achieves
\begin{equation}
  \DReg_t(D)
  \leq D\left(
    \frac{rC_t\Delta(h)}{1-\beta}
    +
    \frac{1}{1-\beta}
    \sum_{s=1}^t\beta^{t-s}B_{\widetilde\Phi_{h,C_s}}(M_s\Vert M_{s-1})
  \right).
  \label{eq:bregman-innovation}
\end{equation}
If, in addition, \cref{ass:oracle} holds and $C_s=C$ for all $s\in[t]$, then
\begin{equation}
  \E[\DReg_t(D)]
  \leq D\left(
    \frac{r C \Delta(h)}{1-\beta}
    +\frac{2 \Gamma^2 \Lip(h)}{C}
  \right).
  \label{eq:expected-discounted-regret}
\end{equation}
\end{theorem}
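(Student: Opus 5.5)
The plan is the standard gradient-based prediction (GBPA) regret decomposition, carried out in terms of the momentum rather than the cumulative gradients.

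\textbf{Gradient identification and the comparator.} First I would check that $\nabla\widetilde\Phi_{h,C}(M)=\calH_h(M/C)$. This holds because $\widetilde\Phi_{h,C}$ is an absolutely symmetric function of the singular values, and $\phi_h'=h$ with $h(0)=0$, so Lewis' formula for gradients of spectral functions applies. Hence $X_s^h=-D\nabla\widetilde\Phi_s(M_{s-1})$, writing $\widetilde\Phi_s\coloneqq\widetilde\Phi_{h,C_s}$. From \cref{eq:ema-momentum} we have $(1-\beta)\sum_{s\le t}\beta^{t-s}G_s=M_t$. By \cref{eq:operator-nuclear-duality}, the comparator part of $\DReg_t(D)$ therefore equals $D\lVert M_t\rVert_*/(1-\beta)$.

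\textbf{Rewriting the learner's loss.} For the learner's part I substitute $G_s=\bigl((M_s-M_{s-1})+(1-\beta)M_{s-1}\bigr)/(1-\beta)$. The definition of the Bregman divergence gives
\[
\inpr{M_s-M_{s-1},\nabla\widetilde\Phi_s(M_{s-1})}=\widetilde\Phi_s(M_s)-\widetilde\Phi_s(M_{s-1})-B_{\widetilde\Phi_s}(M_s\Vert M_{s-1}).
\]
For the remaining term I use $\inpr{M,\nabla\widetilde\Phi_C(M)}=\sum_i\sigma_i h(\sigma_i/C)\ge\widetilde\Phi_C(M)$, which holds because $\phi_h(x)\le xh(x)$ for nondecreasing $h$. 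Combining these, $(1/D)\DReg_t(D)$ is at most
\[
\frac{1}{1-\beta}\Bigl(\lVert M_t\rVert_*-\sum_{s=1}^t\beta^{t-s}\bigl[\widetilde\Phi_s(M_s)-\beta\widetilde\Phi_s(M_{s-1})\bigr]+\sum_{s=1}^t\beta^{t-s}B_{\widetilde\Phi_s}(M_s\Vert M_{s-1})\Bigr).
\]

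\textbf{Telescoping and the penalty.} Since $M_0=0$, the bracketed sum telescopes to $\widetilde\Phi_t(M_t)$ plus cross terms $\sum_{s<t}\beta^{t-s}\bigl(\widetilde\Phi_s(M_s)-\widetilde\Phi_{s+1}(M_s)\bigr)$. The penalty then follows from
\[
\lVert M_t\rVert_*-\widetilde\Phi_t(M_t)=\sum_{i}\int_0^{\sigma_i}\bigl(1-h(v/C_t)\bigr)\,\mathrm dv\le rC_t\Delta(h).
\]

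\textbf{Main obstacle: the cross terms.} I must show the cross terms are nonnegative under only $C_{s+1}\ge\beta C_s$. The map $C\mapsto C\phi_h(\sigma/C)=\int_0^\sigma h(v/C)\,\mathrm dv$ is nonincreasing, which handles the case $C_{s+1}\ge C_s$ directly. For the weaker condition, I would first try regrouping the $\beta$ factors so that consecutive terms are compared through the scaling identity $\widetilde\Phi_{\beta C}(\beta M)=\beta\,\widetilde\Phi_C(M)$. That is, I would pair $\beta\widetilde\Phi_{s+1}(M_s)$ with $\widetilde\Phi_{\beta C_s}$-type quantities, so that the monotonicity in $C$ is invoked with $C_{s+1}\ge\beta C_s$. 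This is the step I expect to be most delicate; in the constant-$C$ case needed for \cref{eq:expected-discounted-regret} it is trivial. The result is \cref{eq:bregman-innovation}.

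\textbf{Expected bound.} With $C_s=C$, I would bound each divergence by $B_{\widetilde\Phi_C}(M'\Vert M)\le\frac{\Lip(h)}{2C}\lVert M'-M\rVert_{\mathrm F}^2$. This uses the fact that the singular-value map $\calH_h$ is $\Lip(h)$-Lipschitz in Frobenius norm, a known perturbation result for spectral functions; the odd extension of $h$ has the same Lipschitz constant. Next, $M_s-M_{s-1}=(1-\beta)(G_s-M_{s-1})$. Under \cref{ass:oracle} and Jensen's inequality, $\E\lVert M_{s-1}\rVert_{\mathrm F}^2\le\Gamma^2$, so $\E\lVert M_s-M_{s-1}\rVert_{\mathrm F}^2\le4(1-\beta)^2\Gamma^2$. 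Summing the geometric weights, $\sum_s\beta^{t-s}\le1/(1-\beta)$, the stability term is at most
\[
\frac{1}{1-\beta}\cdot\frac{1}{1-\beta}\cdot\frac{\Lip(h)}{2C}\cdot4(1-\beta)^2\Gamma^2=\frac{2\Gamma^2\Lip(h)}{C},
\]
which together with the penalty $rC\Delta(h)/(1-\beta)$ yields \cref{eq:expected-discounted-regret}.
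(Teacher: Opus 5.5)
Your derivation is sound until you replace $(1-\beta)\inpr{M_{s-1},\nabla\widetilde\Phi_{h,C_s}(M_{s-1})}$ by its lower bound $(1-\beta)\widetilde\Phi_{h,C_s}(M_{s-1})$. That relaxation throws away exactly the slack the scale condition needs, so the step you flag as delicate cannot be completed afterwards. The cross terms you then need to be nonnegative are $\beta^{t-s}\bigl(\widetilde\Phi_{h,C_s}(M_s)-\widetilde\Phi_{h,C_{s+1}}(M_s)\bigr)$. Since $\widetilde\Phi_{h,C}(M)$ is nonincreasing in $C$, each of them is nonpositive, and in general strictly negative, whenever $C_{s+1}<C_s$, which $C_{s+1}\geq\beta C_s$ allows. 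For a concrete case, take $h=h_0$, $t=2$, $C_1=1$, $C_2=\beta$, and $M_1$ of rank one with $\sigma_1(M_1)=\lambda\leq\beta$. The only cross term is then $-(1-\beta)\lambda^2/2<0$. Now choose $G_2$ so that all $r$ singular values of $M_2$ are at least $C_2$. Then the penalty $\lVert M_2\rVert_*-\widetilde\Phi_{h_0,C_2}(M_2)$ equals $rC_2\Delta(h_0)$ exactly, so your relaxed upper bound strictly exceeds the right-hand side of \cref{eq:bregman-innovation}. No regrouping via $\widetilde\Phi_{h,\beta C}(\beta M)=\beta\widetilde\Phi_{h,C}(M)$ can recover the bound from there. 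Per singular value with $C_{s+1}=\beta C_s$, you would need $\beta\phi_h(x)\leq\phi_h(\beta x)$, which is the reverse of what convexity and $\phi_h(0)=0$ give. Your argument does prove \cref{eq:bregman-innovation} for nondecreasing scales $C_{s+1}\geq C_s$. Hence your derivation of \cref{eq:expected-discounted-regret} for constant $C$ is complete and matches the paper, and so do your gradient formula, comparator, penalty and expected stability bounds.

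The missing idea is to keep the exact inner product. Without the relaxation, the identity regroups, as in \cref{lem:momentum-gbpa}, into three pieces: the penalty, the Bregman terms, and $\sum_{s<t}\beta^{t-s-1}V_s$ with $V_s=\widetilde\Phi_{h,C_{s+1}}(M_s)-\beta\widetilde\Phi_{h,C_s}(M_s)-(1-\beta)\inpr{M_s,\nabla\widetilde\Phi_{h,C_{s+1}}(M_s)}$. Your scaling idea supplies half of the proof that $V_s\leq0$. Homogeneity plus monotonicity in the scale give $\beta\widetilde\Phi_{h,C_s}(M_s)=\widetilde\Phi_{h,\beta C_s}(\beta M_s)\geq\widetilde\Phi_{h,C_{s+1}}(\beta M_s)$; note that the potential is evaluated at $\beta M_s$, not at $M_s$. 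The other half is the first-order convexity inequality $\widetilde\Phi_{h,C_{s+1}}(\beta M_s)\geq\widetilde\Phi_{h,C_{s+1}}(M_s)-(1-\beta)\inpr{M_s,\nabla\widetilde\Phi_{h,C_{s+1}}(M_s)}$. This is exactly where the full inner product is indispensable, rather than your lower bound $\widetilde\Phi_{h,C_{s+1}}(M_s)$. Chaining the two inequalities gives $V_s\leq0$, which is \cref{lem:scale-condition}, written there per singular value.
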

The coarser bound~\cref{eq:expected-discounted-regret} is sufficient for the guarantee for nonsmooth nonconvex objectives in~\cref{thm:ns-complexity}, while the guarantee for smooth objectives in~\cref{thm:smooth-stationarity} relies on the bound depending on the Bregman divergence~\cref{eq:bregman-innovation} to obtain its optimal $O(1/\epsilon^2)$ dependence in the deterministic case.
Notably, the bound~\cref{eq:expected-discounted-regret} has the same structure as the well-known penalty--stability tradeoff in the analysis of FTRL and online mirror descent~\citep[\textit{e.g.,}][]{shalevshwartz12online,mcmahan17survey,zimmert21tsallis,tsuchiya23stability}, with a penalty term proportional to $\Delta(h)$ and a stability term proportional to $\Lip(h)$.

\subsection{Proof of~\cref{thm:general-discounted-regret}}

Here we provide the proof of~\cref{thm:general-discounted-regret}.

\subsubsection{The representation as a gradient-based prediction algorithm (GBPA)}
The action~\cref{eq:general-spectral-action} normalizes the momentum $M_{t-1}$ by the scale $C_t$.
We first show that it is a GBPA update $X_t=-D\nabla\widetilde\Phi_t(M_{t-1})$ for the smoothed potential $\widetilde\Phi_t=\widetilde\Phi_{h,C_t}$ of~\cref{eq:general-spectral-potential}.
The following lemma gives its gradient.
\begin{lemma}
\label{lem:spectral-calculus}
Under~\cref{ass:general-spectral-link}, for every $C>0$ the function $\widetilde\Phi_{h,C}$ is convex and continuously differentiable on all of $\R^{m\times n}$, and its gradient is
\begin{equation}
  \nabla\widetilde\Phi_{h,C}(M) = \calH_h(M/C).
  \label{eq:spectral-gradient}
\end{equation}
\end{lemma}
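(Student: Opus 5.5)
The plan is to reduce the claim to the classical theory of spectral (unitarily invariant) functions of Lewis, applied to the absolutely symmetric extension of the scalar potential $\phi_h$.

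\textbf{Step 1: the scalar potential.} Extend $\phi_h$ evenly to $\R$ by setting $\psi(x)\coloneqq\phi_h(\lvert x\rvert)$. Since $h$ is continuous, nondecreasing and nonnegative with $h(0)=0$, the function $\phi_h$ is $C^1$ on $[0,\infty)$ with $\phi_h'=h\geq0$ nondecreasing, so $\phi_h$ is convex. Its even extension $\psi$ then has derivative $\psi'(x)=\operatorname{sign}(x)h(\lvert x\rvert)$. This derivative is continuous at $0$ precisely because $h(0)=0$, and it is nondecreasing. Hence $\psi$ is convex and $C^1$ on $\R$.

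\textbf{Step 2: the vector function.} Define $F(x)\coloneqq C\sum_{i=1}^r\psi(x_i/C)$ for $x\in\R^r$. This $F$ is convex, $C^1$, and absolutely symmetric, meaning it is invariant under permutations and sign changes of the coordinates. Its gradient is $\nabla F(x)=(\psi'(x_i/C))_i$.

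\textbf{Step 3: Lewis's theorem.} We have $\widetilde\Phi_{h,C}=F\circ\sigma$. Lewis's theorem on unitarily invariant matrix functions (Lewis, ``The convex analysis of unitarily invariant matrix functions,'' 1995, together with the differentiability result of Lewis--Sendov) states the following for an absolutely symmetric $F$:
\begin{itemize}
\item $F\circ\sigma$ is convex if and only if $F$ is convex;
\item $F\circ\sigma$ is differentiable at $M$ if and only if $F$ is differentiable at $\sigma(M)$;
\item in that case, $\nabla(F\circ\sigma)(M)=U\diag(\nabla F(\sigma(M)))V^\top$ for any thin SVD $M=U\diag(\sigma(M))V^\top$.
\end{itemize}
Since the singular values are nonnegative, $\psi'(\sigma_i/C)=h(\sigma_i/C)$. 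This gives $\nabla\widetilde\Phi_{h,C}(M)=\calH_h(M/C)$, which is \cref{eq:spectral-gradient}.

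\textbf{Step 4: continuity of the gradient.} It remains to show that $M\mapsto\calH_h(M/C)$ is continuous. Because $h$ is Lipschitz with $h(0)=0$, the singular-value map is Lipschitz in Frobenius norm with constant $\Lip(h)$ (in fact at most a small constant multiple of it suffices here). A softer route is also available: a gradient of a convex function that is differentiable everywhere is automatically continuous. I will use that fact, since it avoids any perturbation bounds.

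\textbf{Main obstacle.} The delicate point is differentiability at matrices with repeated or zero singular values, where $\sigma(\cdot)$ itself is not differentiable. I will not handle these cases by hand; I will cite Lewis's result, which covers them precisely because $F$ is absolutely symmetric.

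If a self-contained argument is preferred, I would proceed as follows:
\begin{itemize}
\item Use von Neumann's trace inequality to show that $\widetilde\Phi_{h,C}$ is the Fenchel biconjugate of a spectral function, which gives convexity.
\item Verify that the candidate $G\coloneqq\calH_h(M/C)$ is a subgradient. Von Neumann's inequality gives $\widetilde\Phi(M')\geq\widetilde\Phi(M)+\inpr{G,M'-M}$, using the scalar subgradient inequality for $\psi$ and the fact that $G$ and $M$ share singular vectors.
\item Show the subdifferential is a singleton via Lewis's characterization $\partial(F\circ\sigma)(M)=\{U\diag(y)V^\top: y\in\partial F(\sigma(M))\}$, together with $\partial F$ being a singleton.
\end{itemize}
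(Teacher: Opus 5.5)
Your proposal is correct and follows essentially the same route as the paper. The paper writes $\widetilde\Phi_{h,C}=\sum_i\phi(\sigma_i(\cdot))$ with $\phi(x)=C\phi_h(x/C)$ and applies Lewis's convexity and gradient results via the absolutely symmetric extension $\gamma\mapsto\sum_i\phi(\lvert\gamma_i\rvert)$ (\cref{lem:aux-sv-convexity,lem:aux-sv-gradient}), which is your Steps 1--3. Your Step 4 deduces continuity of the gradient from the fact that an everywhere-differentiable convex function is $C^1$, or alternatively from \cref{lem:aux-sv-lipschitz}; this makes explicit a point the paper's proof leaves implicit.
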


\begin{proof}
Write $\widetilde\Phi_{h,C}(M)=\sum_{i=1}^r\phi(\sigma_i(M))$ with $\phi(x)\coloneqq C\phi_h(x/C)$.
Since $h$ is continuous, nonnegative, and nondecreasing with $h(0)=0$, the function $\phi$ is convex, nondecreasing, and differentiable, and its derivative satisfies $\phi'(x)=h(x/C)$ and $\phi'(0)=0$.
Hence, by the convexity of separable singular-value functions~(\cref{lem:aux-sv-convexity}), the function $\widetilde\Phi_{h,C}$ is convex on $\R^{m\times n}$, and by the standard singular-value gradient formula~(\cref{lem:aux-sv-gradient}), its gradient is $\nabla\widetilde\Phi_{h,C}(M)=U\diag\left(h(\sigma_i(M)/C)\right)V^\top=\calH_h(M/C)$, which completes the proof.
\end{proof}

By~\cref{eq:spectral-gradient}, the action~\cref{eq:general-spectral-action} can be written as
\begin{equation}
  X_t^h
  =-D\calH_h\left(\frac{M_{t-1}}{C_t}\right)
  =-D\nabla\widetilde\Phi_{h,C_t}(M_{t-1}),
  \label{eq:general-gbpa-action}
\end{equation}
which is the GBPA update with the time-varying potential $\widetilde\Phi_t=\widetilde\Phi_{h,C_t}$ evaluated at the momentum $M_{t-1}$.
A GBPA also has a dual description: with a convex potential, it is follow-the-regularized-leader (FTRL) with the regularizer given by the Fenchel conjugate of the potential~\citep{abernethy14smoothing}, and~\cref{sec:ftrl} develops this correspondence for the spectral potentials $\widetilde\Phi_{h,C}$.

\subsubsection{Regret decomposition based on momentum for GBPA}

We now derive the discounted-regret bound of the GBPA update~\cref{eq:general-gbpa-action}.
Unlike the standard GBPA decomposition in terms of the cumulative gradients~\citep{abernethy14smoothing,abernethy16perturbation}, employed by~\citet{jiang26adaptive}, the following lemma decomposes the discounted regret directly in terms of the momentum sequence.

\begin{lemma}
\label{lem:momentum-gbpa}
Under~\cref{ass:general-spectral-link}, the discounted regret for the online learner~\cref{eq:general-gbpa-action} is decomposed as
\begin{align}
  \DReg_t(D)
  =&\frac{D}{1-\beta}\biggl(
    \Phi(M_t)-\widetilde\Phi_{h,C_t}(M_t)
    +\sum_{s=1}^t\beta^{t-s}B_{\widetilde\Phi_{h,C_s}}(M_s\Vert M_{s-1})
    +\sum_{s=1}^{t-1}\beta^{t-s-1}V_s
  \biggr),
  \label{eq:momentum-gbpa}
\end{align}
where
$V_s\coloneqq\widetilde\Phi_{h,C_{s+1}}(M_s)-\beta\widetilde\Phi_{h,C_s}(M_s)-(1-\beta)\inpr{M_s,\calH_h(M_s/C_{s+1})}$.
\end{lemma}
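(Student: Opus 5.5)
The plan is a direct algebraic computation. We rewrite both parts of the discounted regret in terms of the momentum sequence, then telescope. No inequality is used, so the identity~\cref{eq:momentum-gbpa} should hold exactly.

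\textbf{Comparator term.} Unrolling~\cref{eq:ema-momentum} with $M_0=0$ gives $M_t=(1-\beta)\sum_{s=1}^t\beta^{t-s}G_s$. Applying~\cref{eq:operator-nuclear-duality} to $-\sum_{s=1}^t\beta^{t-s}G_s$ gives
\[
  -\min_{\lVert X\rVert_{\mathrm{op}}\leq D}\sum_{s=1}^t\beta^{t-s}\inpr{G_s,X}
  =\frac{D}{1-\beta}\,\Phi(M_t),
\]
with $\Phi=\lVert\cdot\rVert_*$. Hence $\DReg_t(D)=\sum_{s=1}^t\beta^{t-s}\inpr{G_s,X_s}+\frac{D}{1-\beta}\Phi(M_t)$.

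\textbf{Learner term.} We substitute $G_s=(M_s-\beta M_{s-1})/(1-\beta)$ and the GBPA form $X_s=-D\nabla\widetilde\Phi_{h,C_s}(M_{s-1})=-D\calH_h(M_{s-1}/C_s)$ from~\cref{eq:general-gbpa-action}, which relies on \cref{lem:spectral-calculus}. Then we split $M_s-\beta M_{s-1}=(M_s-M_{s-1})+(1-\beta)M_{s-1}$. By the definition of the Bregman divergence (valid since $\widetilde\Phi_{h,C_s}$ is differentiable),
\[
  \inpr{M_s-M_{s-1},\nabla\widetilde\Phi_{h,C_s}(M_{s-1})}
  =\widetilde\Phi_{h,C_s}(M_s)-\widetilde\Phi_{h,C_s}(M_{s-1})-B_{\widetilde\Phi_{h,C_s}}(M_s\Vert M_{s-1}).
\]
Therefore each summand $\beta^{t-s}\inpr{G_s,X_s}$ equals $\frac{D}{1-\beta}\beta^{t-s}$ times
\[
  B_{\widetilde\Phi_{h,C_s}}(M_s\Vert M_{s-1})
  +\widetilde\Phi_{h,C_s}(M_{s-1})
  -\widetilde\Phi_{h,C_s}(M_s)
  -(1-\beta)\inpr{M_{s-1},\calH_h(M_{s-1}/C_s)}.
\]

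\textbf{Telescoping.} This is the only step needing care. The Bregman terms already appear in the claimed form. For the terms evaluated at $M_{s-1}$, we reindex $s\mapsto s+1$. They become $\sum_{s=0}^{t-1}\beta^{t-s-1}\bigl[\widetilde\Phi_{h,C_{s+1}}(M_s)-(1-\beta)\inpr{M_s,\calH_h(M_s/C_{s+1})}\bigr]$. The $s=0$ term vanishes because $M_0=0$ and $\widetilde\Phi_{h,C}(0)=0$; the latter holds since $\phi_h(0)=0$ and $\calH_h(0)=0$.

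For the terms $-\sum_{s=1}^t\beta^{t-s}\widetilde\Phi_{h,C_s}(M_s)$, we split off $s=t$, which gives $-\widetilde\Phi_{h,C_t}(M_t)$. The remaining terms are written as $-\sum_{s=1}^{t-1}\beta^{t-s-1}\cdot\beta\,\widetilde\Phi_{h,C_s}(M_s)$.

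Grouping the terms with weight $\beta^{t-s-1}$ for $s\in[t-1]$ yields exactly $V_s$. Adding the comparator term $\Phi(M_t)$ then gives~\cref{eq:momentum-gbpa}.

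The main care points are the index shift and the boundary terms at $s=0$ and $s=t$. No assumption on $(C_s)$ is needed for the identity itself.
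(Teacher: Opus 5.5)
Your proposal is correct and follows essentially the same route as the paper. Both rewrite the comparator via operator--nuclear duality as $\frac{D}{1-\beta}\Phi(M_t)$, split $(1-\beta)G_s=(M_s-M_{s-1})+(1-\beta)M_{s-1}$ to produce the Bregman terms, and telescope with the boundary terms at $s=0$ (using $M_0=0$ and $\widetilde\Phi_{h,C_1}(0)=0$) and at $s=t$ to collect the $V_s$. You are also right that the identity needs no condition on $(C_s)$.
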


On the right-hand side of~\cref{eq:momentum-gbpa}, the first term is a penalty term, the gap between the base potential $\Phi$ and the smoothed potential $\widetilde\Phi_{h,C_t}$, the second term is a stability term, the accumulated Bregman divergences along the momentum trajectory, and the last term is the variation of the potentials across rounds.
We show that the penalty term is controlled by the approximation error $\Delta(h)$, the stability term by the Lipschitz constant $\Lip(h)$, and the third term is nonpositive under the scale condition $C_{s+1}\geq\beta C_s$.

\begin{proof}
By the definition of the discounted regret in~\cref{eq:discounted-regret-definition} and $M_t=(1-\beta)\sum_{s=1}^t\beta^{t-s}G_s$, we have
\begin{align*}
  \DReg_t(D)
  &=\sum_{s=1}^t\beta^{t-s}\inpr{G_s,X_s}
  +\max_{\lVert X\rVert_{\mathrm{op}}\leq D}\inpr*{-\frac{M_t}{1-\beta},X}
  =\sum_{s=1}^t\beta^{t-s}\inpr{G_s,X_s}
  + \frac{D}{1 - \beta}\lVert M_t\rVert_*,
\end{align*}
where we used the fact that the maximum in the operator--nuclear duality~\cref{eq:operator-nuclear-duality} is attained at $X=-D\polar(M_t)$.
Substituting the actions~\cref{eq:general-gbpa-action} into the last equality and multiplying by $1-\beta$ gives
\begin{equation}
  (1-\beta)\DReg_t(D)
  =-D\sum_{s=1}^t(1-\beta)\beta^{t-s}
  \inpr{G_s,\nabla\widetilde\Phi_{h,C_s}(M_{s-1})} + D\lVert M_t\rVert_*.
  \label{eq:momentum-regret-start}
\end{equation}

We evaluate the first term in~\cref{eq:momentum-regret-start}.
By using $(1-\beta)G_s=M_s-M_{s-1}+(1-\beta)M_{s-1}$ and the definition of the Bregman divergence, we have
\begin{align}
  &(1-\beta)\inpr{G_s,\nabla\widetilde\Phi_{h,C_s}(M_{s-1})} \notag \\
  &=\inpr{M_s-M_{s-1}, \nabla\widetilde\Phi_{h,C_s}(M_{s-1})}
  +
  (1-\beta) \inpr{M_{s-1},\nabla\widetilde\Phi_{h,C_s}(M_{s-1})} \notag \\
  &=\widetilde\Phi_{h,C_s}(M_s)
  -\widetilde\Phi_{h,C_s}(M_{s-1})
  -B_{\widetilde\Phi_{h,C_s}}(M_s\Vert M_{s-1})
  +(1-\beta)\inpr{M_{s-1},\nabla\widetilde\Phi_{h,C_s}(M_{s-1})}. \label{eq:momentum-regret-1}
\end{align}
Now, by $M_0=0$ and $\widetilde\Phi_{h,C_1}(0)=0$, we have
\begin{equation}
  \!\!
  \sum_{s=1}^t\beta^{t-s}
  \left(\widetilde\Phi_{h,C_s}(M_s)-\widetilde\Phi_{h,C_s}(M_{s-1})\right)
  =
  \widetilde\Phi_{h,C_t}(M_t)
  +
  \sum_{s=1}^{t-1}\beta^{t-s-1}
  \left(\beta\widetilde\Phi_{h,C_s}(M_s)-\widetilde\Phi_{h,C_{s+1}}(M_s)\right).
  \label{eq:momentum-regret-2}
\end{equation}
We also have
\begin{equation}
  \sum_{s=1}^t\beta^{t-s}\inpr{M_{s-1},\nabla\widetilde\Phi_{h,C_s}(M_{s-1})}
  =\sum_{s=1}^{t-1}\beta^{t-s-1}\inpr{M_s,\nabla\widetilde\Phi_{h,C_{s+1}}(M_{s})},
  \label{eq:momentum-regret-3}
\end{equation}
where we used $M_0=0$.
Substituting~\cref{eq:momentum-regret-1,eq:momentum-regret-2,eq:momentum-regret-3} into the right-hand side of~\cref{eq:momentum-regret-start} and dividing by $1-\beta$ completes the proof.
\end{proof}

\subsubsection{Bounding the three terms}
The following lemma bounds the penalty term.
\begin{lemma}
\label{lem:penalty-term}
Under~\cref{ass:general-spectral-link}, for every $M\in\R^{m\times n}$ and $C>0$,
\[
  0\leq\Phi(M)-\widetilde\Phi_{h,C}(M)
  \leq rC\Delta(h).
\]
\end{lemma}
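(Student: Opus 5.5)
The argument reduces to a scalar inequality applied to each singular value. Since $\Phi(M)=\lVert M\rVert_*=\sum_{i=1}^r\sigma_i(M)$ and $\widetilde\Phi_{h,C}(M)=C\sum_{i=1}^r\phi_h(\sigma_i(M)/C)$, we can write
\[
  \Phi(M)-\widetilde\Phi_{h,C}(M)
  =C\sum_{i=1}^r\left(\frac{\sigma_i(M)}{C}-\phi_h\left(\frac{\sigma_i(M)}{C}\right)\right).
\]
It therefore suffices to show that the scalar function $g(x)\coloneqq x-\phi_h(x)$ satisfies $0\leq g(x)\leq\Delta(h)$ for every $x\geq0$. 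Summing this bound over the $r$ singular values and multiplying by $C$ then gives both inequalities.

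For the scalar claim, we use $x=\int_0^x1\,\mathrm{d}u$ to write
\[
  g(x)=\int_0^x\bigl(1-h(u)\bigr)\,\mathrm{d}u .
\]
By \cref{ass:general-spectral-link}, $h$ takes values in $[0,1]$, so the integrand $1-h(u)$ is nonnegative. This gives $g(x)\geq0$ immediately. Because the integrand is nonnegative, enlarging the domain of integration from $[0,x]$ to $[0,\infty)$ can only increase the integral, so $g(x)\leq\int_0^\infty(1-h(u))\,\mathrm{d}u=\Delta(h)$. This quantity is finite by the same assumption. The integrals are well defined because $h$ is continuous.

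No step is a genuine obstacle. The only point to check is that the nuclear norm and the smoothed potential are evaluated on the same singular values, with $\phi_h(0)=0$ covering any zero singular values. Both facts are immediate from the definitions in \cref{eq:general-spectral-potential}.
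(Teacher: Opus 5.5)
Your proof is correct and matches the paper's argument: both rewrite the gap singular-value-wise as $C\sum_{i=1}^r\int_0^{\sigma_i(M)/C}(1-h(u))\,\mathrm{d}u$ and bound each integral by $\Delta(h)$ using $h\in[0,1]$. You also spell out the nonnegativity of the integrand behind the lower bound $0\leq\Phi(M)-\widetilde\Phi_{h,C}(M)$, which the paper leaves implicit.
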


\begin{proof}
By~\cref{eq:general-spectral-potential} and the definition of $\Delta(h)$ in~\cref{ass:general-spectral-link},
\[
  \Phi(M)-\widetilde\Phi_{h,C}(M)
  =
  C \sum_{i=1}^r \left( \frac{\sigma_i(M)}{C} - \phi_h\left( \frac{\sigma_i(M)}{C} \right)\right)
  =
  C\sum_{i=1}^r\int_0^{\sigma_i(M)/C}(1-h(u))\,\mathrm{d}u
  \leq
  r C \Delta(h).
\]
This completes the proof.
\end{proof}

The stability term is controlled by the following smoothness estimate for the potential.
\begin{lemma}
\label{lem:bregman-upper}
Under~\cref{ass:general-spectral-link}, for every $C>0$ the gradient of $\widetilde\Phi_{h,C}$ is $(\Lip(h)/C)$-Lipschitz in the Frobenius norm.
Consequently,
\begin{equation}
  B_{\widetilde\Phi_{h,C}}(M'\Vert M)
  \leq\frac{\Lip(h)}{2C}\lVert M'-M\rVert_{\mathrm F}^2.
  \notag
\end{equation}
\end{lemma}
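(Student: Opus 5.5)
The plan is to reduce the statement to a Lipschitz property of the singular-value map $\calH_g$ for a scalar function $g$, with $g(x)\coloneqq h(x/C)$. By~\cref{lem:spectral-calculus}, $\nabla\widetilde\Phi_{h,C}(M)=\calH_h(M/C)=\calH_g(M)$. Here $g$ is nondecreasing, $g(0)=0$, and $\Lip(g)=\Lip(h)/C$. So it suffices to show that, for every $g$ satisfying these conditions, $\lVert\calH_g(M')-\calH_g(M)\rVert_{\mathrm F}\leq\Lip(g)\lVert M'-M\rVert_{\mathrm F}$. Once this holds, the Bregman bound follows from the standard descent lemma. Write $B(M'\Vert M)=\int_0^1\inpr{\nabla\widetilde\Phi_{h,C}(M+s(M'-M))-\nabla\widetilde\Phi_{h,C}(M),M'-M}\,\mathrm ds$, which is valid because $\widetilde\Phi_{h,C}$ is $C^1$. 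Then Cauchy--Schwarz and the Lipschitz bound give $\int_0^1 s\,\Lip(h)/C\,\lVert M'-M\rVert_{\mathrm F}^2\,\mathrm ds=\frac{\Lip(h)}{2C}\lVert M'-M\rVert_{\mathrm F}^2$.

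For the Lipschitz property of $\calH_g$, I will pass to symmetric matrices through the Jordan--Wielandt dilation
\[
  J(M)\coloneqq\begin{pmatrix}0&M\\ M^\top&0\end{pmatrix}\in\R^{(m+n)\times(m+n)}.
\]
Its eigenvalues are $\pm\sigma_i(M)$ together with $n-m$ zeros, with eigenvectors $(u_i,\pm v_i)/\sqrt2$. Extend $g$ to an odd function $\tilde g$ on $\R$, i.e.\ $\tilde g(-x)=-g(x)$. Since $g(0)=0$ and $g$ is Lipschitz on $[0,\infty)$, $\tilde g$ has the same Lipschitz constant on $\R$. The spectral calculus then gives $\tilde g(J(M))=J(\calH_g(M))$, because the $\pm$ eigenpairs combine into the off-diagonal blocks $U\diag(g(\sigma_i))V^\top$ and the zero eigenvalues are sent to zero. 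Moreover $\lVert J(X)\rVert_{\mathrm F}^2=2\lVert X\rVert_{\mathrm F}^2$. The claim therefore reduces to the statement that a scalar function $\tilde g$ that is $L$-Lipschitz on $\R$ induces a matrix function $A\mapsto\tilde g(A)$ that is $L$-Lipschitz in the Frobenius norm on symmetric matrices.

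This last fact is classical, and I will prove it directly. Take eigendecompositions $A=\sum_i\lambda_ip_ip_i^\top$ and $B=\sum_j\mu_jq_jq_j^\top$, and write $c_{ij}\coloneqq\inpr{p_i,q_j}^2$. Expanding in these bases gives
\[
  \lVert\tilde g(A)-\tilde g(B)\rVert_{\mathrm F}^2=\sum_{i,j}(\tilde g(\lambda_i)-\tilde g(\mu_j))^2c_{ij},
\]
and the same identity holds with $\tilde g$ replaced by the identity. Termwise, $(\tilde g(\lambda_i)-\tilde g(\mu_j))^2\leq L^2(\lambda_i-\mu_j)^2$, which yields the Frobenius Lipschitz bound. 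The expansion is a short computation using $\tr(\tilde g(A)\tilde g(B))=\sum_{i,j}\tilde g(\lambda_i)\tilde g(\mu_j)c_{ij}$ together with $\sum_jc_{ij}=\sum_ic_{ij}=1$.

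I expect the main obstacle to be the dilation bookkeeping. This includes checking that the zero singular values and the extra $n-m$ zero eigenvalues cause no trouble, which is where $h(0)=0$ and the odd extension are needed, and checking that $\tilde g(J(M))$ is well defined and equals $J(\calH_g(M))$ even when singular values are repeated. Everything else is routine. As an alternative, I could cite a known result on Lipschitz continuity of singular-value functions. I prefer the self-contained dilation argument, though, because it uses only~\cref{lem:spectral-calculus} and the definition~\cref{eq:singular-value-map}.
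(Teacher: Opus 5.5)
Your proof is correct, and its skeleton is the paper's: show that $\nabla\widetilde\Phi_{h,C}=\calH_h(\cdot/C)$ is Frobenius-Lipschitz with constant $\Lip(h)/C$, then integrate along the segment and apply Cauchy--Schwarz, with the factor $1/2$ coming from $\int_0^1 u\,\mathrm{d}u$.

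The genuine difference is how you get the Lipschitz bound on the singular-value map. The paper simply cites the singular-value functional calculus estimate of Andersson, Carlsson, and Perfekt (\cref{lem:aux-sv-lipschitz}). You instead prove it from scratch in two steps:
\begin{itemize}
\item The Jordan--Wielandt dilation turns $\calH_g$ into the ordinary spectral calculus of the odd extension $\tilde g$.
\item The symmetric case then follows from the doubly stochastic weights $c_{ij}=\inpr{p_i,q_j}^2$.
\end{itemize}

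Your bookkeeping checks out:
\begin{itemize}
\item The odd extension keeps the Lipschitz constant, because $g(0)=0$ gives $|g(x)|\le\Lip(g)\,x$, so $|g(x)+g(y)|\le\Lip(g)(x+y)$.
\item The $n-m$ extra zero eigenvalues, and any zero singular values, are sent to $\tilde g(0)=0$.
\item $J$ is linear with $\lVert J(X)\rVert_{\mathrm F}=\sqrt2\,\lVert X\rVert_{\mathrm F}$.
\item The spectral calculus of a symmetric matrix does not depend on the orthonormal eigenbasis. So $\tilde g(J(M))=J(\calH_g(M))$ holds for every thin SVD, even with repeated singular values, and this gives well-definedness of $\calH_g$ for free.
\end{itemize}

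The paper's route is shorter and rests on a published general estimate. Yours is self-contained and uses only elementary linear algebra. It also shows exactly where $h(0)=0$ enters, and that the Lipschitz step needs neither monotonicity nor boundedness of $h$; those properties are used only for the convexity and gradient formula in \cref{lem:spectral-calculus}.
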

\begin{proof}
By the Lipschitzness of singular-value maps~(\cref{lem:aux-sv-lipschitz}) applied to $h$ satisfying~\cref{ass:general-spectral-link},
for any $M,M'\in\R^{m\times n}$,
\begin{equation}
 \lVert\calH_h(M/C)-\calH_h(M'/C)\rVert_{\mathrm F}
 \leq\frac{\Lip(h)}{C}\lVert M-M'\rVert_{\mathrm F},
  \label{eq:spectral-map-lipschitz}
\end{equation}
and thus $\nabla\widetilde\Phi_{h,C}=\calH_h(\cdot/C)$ is $(\Lip(h)/C)$-Lipschitz.
By applying the fundamental theorem of calculus to $u\mapsto\widetilde\Phi_{h,C}(M+u(M'-M))$ on $[0,1]$ and then using the Cauchy--Schwarz inequality and~\Cref{eq:spectral-map-lipschitz}, for any $M,M'\in\R^{m\times n}$ we have
\begin{align}
  B_{\widetilde\Phi_{h,C}}(M'\Vert M)
  &=
  \int_0^1\inpr*{\nabla\widetilde\Phi_{h,C}(M+u(M'-M))-\nabla\widetilde\Phi_{h,C}(M),\,M'-M}\,\mathrm{d}u \notag \\
  &\leq
  \int_0^1
  \| \nabla\widetilde\Phi_{h,C}(M+u(M'-M))-\nabla\widetilde\Phi_{h,C}(M)\|_{\mathrm F} \| M'-M \|_{\mathrm F} \,\mathrm{d}u
  \leq
  \frac{\Lip(h)}{2 C} \| M'-M \|_{\mathrm F}^2
  ,
  \notag
\end{align}
which completes the proof.
\end{proof}

Finally, the last term in~\cref{eq:momentum-gbpa} is nonpositive under the scale condition due to the convexity of $\phi_h$ and $\phi_h(0) = 0$.

\begin{lemma}
\label{lem:scale-condition}
Let $0<\beta<1$ and $C,C'>0$ with $C'\geq\beta C$.
Under~\cref{ass:general-spectral-link}, for every $M\in\R^{m\times n}$,
\begin{equation}
  \widetilde\Phi_{h,C'}(M)
  -\beta\widetilde\Phi_{h,C}(M)
  -(1-\beta)\inpr{M,\nabla\widetilde\Phi_{h,C'}(M)}
  \leq0.
  \notag
\end{equation}
Consequently, $V_s\leq0$ whenever $C_{s+1}\geq\beta C_s$.
\end{lemma}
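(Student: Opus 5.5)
The inequality separates over singular values, so I would reduce it to a scalar inequality and prove that by convexity. By \cref{lem:spectral-calculus}, $\nabla\widetilde\Phi_{h,C'}(M)=\calH_h(M/C')=U\diag(h(\sigma_i(M)/C'))V^\top$ for a thin SVD $M=U\diag(\sigma_i(M))V^\top$. Hence
\[
\inpr{M,\nabla\widetilde\Phi_{h,C'}(M)}=\sum_{i=1}^r\sigma_i(M)\,h(\sigma_i(M)/C').
\]
The potentials are themselves sums over $i$ by \cref{eq:general-spectral-potential}. So it suffices to show, for each $x\coloneqq\sigma_i(M)\geq0$, that
\[
g(x)\coloneqq C'\phi_h(x/C')-\beta C\phi_h(x/C)-(1-\beta)\,x\,h(x/C')\leq0.
\]

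For the scalar inequality I would use two facts about $\phi_h$. It is convex, since $h$ is nondecreasing, and $\phi_h(0)=0$. Set $\psi(x)\coloneqq C'\phi_h(x/C')$ and $\chi(x)\coloneqq C\phi_h(x/C)$. Both are convex with $\psi'(x)=h(x/C')$.

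First, convexity of $\psi$ with $\psi(0)=0$ gives the tangent-line bound $\psi(x)\leq x\psi'(x)=x\,h(x/C')$. It therefore suffices to prove
\[
\beta\, x\,h(x/C')\leq\beta\chi(x),
\]
that is, $x\,h(x/C')\leq C\phi_h(x/C)$. This fails in general; for example, it fails as $C\to 0$. So the tangent bound is too lossy, and I would split more carefully.

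Instead, I would write
\[
g(x)=\bigl[\psi(x)-x\psi'(x)\bigr]+\beta\bigl[x\psi'(x)-\chi(x)\bigr].
\]
The first bracket is $\leq0$. The second bracket equals $x\,h(x/C')-C\phi_h(x/C)$. Use the scaling $C'\geq\beta C$: substituting,
\[
\beta\chi(x)=\beta C\phi_h(x/C)\geq C'\phi_h\!\left(\frac{\beta x}{C'}\right)\cdot\frac{\beta C}{C'}\cdot\frac{C'}{\beta C},
\]
which is messy. The cleaner route is to note that $\lambda\mapsto\lambda\phi_h(x/\lambda)$ is the perspective of a convex function, hence convex in $\lambda$. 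Its derivative in $\lambda$ is $\phi_h(x/\lambda)-(x/\lambda)h(x/\lambda)\leq0$, so it is nonincreasing in $\lambda$. Define $P(\lambda)\coloneqq\lambda\phi_h(x/\lambda)$. Then $\psi(x)=P(C')$ and $\chi(x)=P(C)$, and
\[
x\,h(x/C')=P(C')-C'P'(C').
\]
Hence
\[
g(x)=P(C')-\beta P(C)-(1-\beta)\bigl(P(C')-C'P'(C')\bigr)=\beta\bigl[P(C')-P(C)\bigr]+(1-\beta)C'P'(C').
\]
Using $C'\geq\beta C$ I would apply convexity of $P$:
\[
P(C)\geq P(C')+P'(C')(C-C').
\]
This gives
\[
g(x)\leq P'(C')\bigl[\beta(C'-C)+(1-\beta)C'\bigr]=P'(C')\,(C'-\beta C)\leq0,
\]
since $P'(C')\leq0$ and $C'-\beta C\geq0$. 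This is the key step.

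The main obstacle is exactly this reformulation: spotting the perspective function $P$ and justifying its convexity and differentiability at $\lambda=C'$. Differentiability follows from $\phi_h\in C^1$ with $\phi_h'=h$ continuous. I would also handle the case $x=0$ separately, where $g(0)=0$ trivially. The consequence $V_s\leq0$ then follows by taking $C=C_s$, $C'=C_{s+1}$, and $M=M_s$.
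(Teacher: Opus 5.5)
Your final argument is correct, but it is organized differently from the paper's. Both proofs use the same two consequences of the convexity of $\phi_h$ together with $\phi_h(0)=0$; they just apply them in the opposite order.

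In your notation (singular value $x$, $P(\lambda)=\lambda\phi_h(x/\lambda)$), the paper's two steps read as follows.
\begin{itemize}
\item The tangent-line step $\phi_h(\beta y)\ge\phi_h(y)-(1-\beta)y\,h(y)$ with $y=x/C'$, multiplied by $C'$, is exactly the supporting-line inequality of $P$ at $C'$ evaluated at $C'/\beta$, namely $\beta P(C'/\beta)\ge P(C')-(1-\beta)x\,h(x/C')$.
\item The step $\phi_h(\beta y)\le\theta\phi_h(\beta y/\theta)$ with $\theta=\beta C/C'$ is the monotonicity $P(C)\ge P(C'/\beta)$, which applies because $C\le C'/\beta$.
\end{itemize}
So the paper evaluates the supporting line at $C'/\beta$ and then slides down to $C$ by monotonicity. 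You instead evaluate the supporting line directly at $C$ and absorb the discrepancy into the factor $P'(C')(C'-\beta C)$.

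What each buys:
\begin{itemize}
\item The paper's version stays within one-variable convexity of $\phi_h$ and never differentiates in the scale variable.
\item Yours makes the role of the scale condition explicit: the slack is a nonpositive derivative times exactly $C'-\beta C$.
\end{itemize}
You also do not need to invoke joint convexity of the perspective. A direct computation gives $P(C)-P(C')-P'(C')(C-C')=C\bigl[\phi_h(x/C)-\phi_h(x/C')-h(x/C')(x/C-x/C')\bigr]\ge0$, which is just the tangent inequality for $\phi_h$.

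Two remarks on your exploratory part, which you should drop from a final write-up.
\begin{itemize}
\item Your claim that $x\,h(x/C')\le C\phi_h(x/C)$ fails as $C\to0$ is backwards. Since $C\phi_h(x/C)\ge x-C\Delta(h)\to x\ge x\,h(x/C')$, it holds in that limit. It fails for $C$ near $C'$: at $C=C'$, convexity gives $C'\phi_h(x/C')\le x\,h(x/C')$, typically strictly.
\item Your ``messy'' display, once the factor $\frac{\beta C}{C'}\cdot\frac{C'}{\beta C}=1$ is cancelled, reads $\beta C\phi_h(x/C)\ge C'\phi_h(\beta x/C')=\psi(\beta x)$. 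That is precisely the paper's first inequality. Combining it with the tangent bound $\psi(\beta x)\ge\psi(x)-(1-\beta)x\,h(x/C')$ would have finished the proof along the paper's route.
\end{itemize}
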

\begin{proof}
The left-hand side of the desired inequality decomposes over the singular values of $M$, so it suffices to show, for each singular value $\lambda$ of $M$, that
\[
  C'\phi_h(\lambda/C')-\beta C\phi_h(\lambda/C)-(1-\beta)\lambda\,h(\lambda/C')\leq0.
\]
Write $x\coloneqq\lambda/C'$ and $\theta\coloneqq\beta C/C'\in(0,1]$, so that $\lambda/C=\beta x/\theta$. 
Then, dividing the above inequality by $C'$, we can see that it suffices to show that
\[
  F(\theta)
  \coloneqq
  \phi_h(x)- \theta \phi_h(\beta x/\theta) - (1-\beta) x h(x)\leq0.
\]
This directly follows from the convexity of $\phi_h$ and $\phi_h(0) = 0$.
Indeed, these two properties imply $\phi_h(\beta x) = \phi_h(\theta \cdot (\beta x / \theta) + (1 - \theta) 0) \leq \theta \phi_h(\beta x / \theta)$.
Again by the convexity of $\phi_h$ and $\phi_h' = h$, we have
$\phi_h(\beta x) \geq \phi_h(x) + (\beta x - x) \phi_h'(x) = \phi_h(x) + (\beta x - x) h(x)$. Combining these two inequalities, we obtain
\begin{equation*}
  F(\theta) =
  \phi_h(x)- \theta \phi_h(\beta x/\theta) - (1-\beta) x h(x)
  \leq
  \phi_h(x) - \phi_h(\beta x) - (1 - \beta) x h(x) \leq 0
  ,
\end{equation*}
which completes the proof.
\end{proof}

We are now ready to prove~\Cref{thm:general-discounted-regret}.
\begin{proof}[Proof of~\Cref{thm:general-discounted-regret}]
We upper bound the right-hand side of~\cref{eq:momentum-gbpa} in~\cref{lem:momentum-gbpa}.
Since the first term of~\cref{eq:momentum-gbpa} is at most $DrC_t\Delta(h)/(1-\beta)$ by~\cref{lem:penalty-term} and $V_s\leq0$ by~\cref{lem:scale-condition}, we obtain~\cref{eq:bregman-innovation}.
For the second claim, let $C_s=C$ for all $s\in[t]$ and take expectations under~\cref{ass:oracle}.
Bounding each Bregman divergence in~\cref{eq:bregman-innovation} by~\cref{lem:bregman-upper}, we have
\begin{equation}
  \E\brk*{
  \sum_{s=1}^t\beta^{t-s}B_{\widetilde\Phi_{h,C_s}}(M_s\Vert M_{s-1})
  }
  \leq
  \frac{\Lip(h)}{2C} \,
  \E\brk*{
  \sum_{s=1}^t \beta^{t-s} \lVert M_s-M_{s-1} \rVert_{\mathrm F}^2
  }
  \leq
  \frac{2 (1 - \beta) \Gamma^2 \Lip(h)}{C}.
  \notag
\end{equation}
Here, the last inequality follows from $M_s-M_{s-1}=(1-\beta)(G_s-M_{s-1})$, the bound $\E[\lVert G_s-M_{s-1}\rVert_{\mathrm F}^2]\leq4\Gamma^2$, which follows from~\cref{ass:oracle} and Jensen's inequality, and $(1-\beta)\sum_{s=1}^t\beta^{t-s}\leq1$.
Substituting this bound into~\cref{eq:bregman-innovation} gives~\cref{eq:expected-discounted-regret}, which completes the proof.
\end{proof}

\section{Regret and stationarity guarantees for Muon with finite Newton--Schulz}
\label{sec:ns}

This section specializes the general theory of~\cref{sec:general} to the finite Newton--Schulz map and to~\cref{alg:momentum-muon}.
The proofs of this section are deferred to~\cref{app:ns-proofs,app:ftrl}.

\subsection{Discounted regret for Muon with finite Newton--Schulz}
\label{sec:ns-regret}

Recall the function $h_q$ from~\cref{eq:hq-definition} and the constant $A=15/8$.
Throughout this section, we use the constant normalization $C_t=G_{\mathrm{op}}$ and assume the operator-norm bound $\lVert G_s\rVert_{\mathrm{op}}\leq G_{\mathrm{op}}$ of~\cref{sec:muon}.
First, we verify that $h_q$ satisfies~\cref{ass:general-spectral-link}.
By~\cref{eq:hq-definition}, $h_q$ is the $q$-fold composition of $f$ on $[0,1]$, extended by the constant $1$ on $[1,\infty)$.
Since $f$ is a polynomial that is nondecreasing on $[0,1]$ with $f(0)=0$ and $f(1)=1$, $h_q\colon[0,\infty)\to[0,1]$ is continuous and nondecreasing, satisfies $h_q(0)=0$ and $h_q(x)=1$ for $x\geq1$, and has a finite Lipschitz constant.

Since the momentum update~\cref{eq:ema-momentum} and the operator-norm bound $\lVert G_s\rVert_{\mathrm{op}}\leq G_{\mathrm{op}}$ give $\lVert M_t\rVert_{\mathrm{op}}\leq(1-\beta^t)G_{\mathrm{op}}\leq G_{\mathrm{op}}$ for every $t$, all singular values of $M_t/G_{\mathrm{op}}$ lie in $[0,1]$, and the finite Newton--Schulz iteration coincides with the singular-value map $\calH_{h_q}(M_t/G_{\mathrm{op}})$.
Thus~\cref{alg:momentum-muon} is exactly the online learner of~\cref{eq:general-spectral-action} with normalization $C_t=G_{\mathrm{op}}$, which satisfies the scale condition $C_{t+1}=G_{\mathrm{op}}\geq\beta C_t$ and keeps the action feasible, $\lVert X_{t+1}\rVert_{\mathrm{op}}\leq D$.
Specializing~\cref{thm:general-discounted-regret} to $h=h_q$ and choosing the depth $q$ via our key lemma~(\cref{lem:scalar-ns} below), we obtain the following discounted-regret bound.
\begin{theorem}
\label{thm:fixed-depth-discounted-regret}
Suppose~\cref{ass:oracle} holds, $\Gamma>0$, and $\lVert G_s\rVert_{\mathrm{op}}\leq G_{\mathrm{op}}$ almost surely for every $s$.
If $G_{\mathrm{op}}\sqrt r/(\Gamma\sqrt{1-\beta})\geq1$, then for every $t\geq1$, the learner of~\cref{alg:momentum-muon} achieves
\begin{equation}
  \E[\DReg_t(D)]
  \leq (1+2A)D\Gamma\sqrt{\frac{r}{1-\beta}}
  \quad\text{with depth}\quad
  q=\left\lceil\log_A\frac{G_{\mathrm{op}}\sqrt r}{\Gamma\sqrt{1-\beta}}\right\rceil.
  \label{eq:fixed-depth-discounted-regret}
\end{equation}
\end{theorem}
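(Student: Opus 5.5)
The plan is to specialize \cref{eq:expected-discounted-regret} of \cref{thm:general-discounted-regret} to $h=h_q$ and $C=G_{\mathrm{op}}$, and then choose $q$ to balance the two terms. The discussion preceding the statement already shows that \cref{alg:momentum-muon} is the learner \cref{eq:general-spectral-action} with constant scale $C_t=G_{\mathrm{op}}$, that $h_q$ satisfies \cref{ass:general-spectral-link}, and that the scale condition holds. So it remains to control the two scalar quantities $\Lip(h_q)$ and $\Delta(h_q)$, which is exactly what the key lemma (\cref{lem:scalar-ns}) is meant to supply. I expect bounds of the form
\[
\Lip(h_q)\le A^q
\qquad\text{and}\qquad
\Delta(h_q)\le c\,A^{-q}
\]
with a constant $c\le 1$ (or a constant absorbed into the final $1+2A$).

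\textbf{Lipschitz constant.} The bound $\Lip(h_q)\le A^q$ follows from the chain rule once one checks $0\le f'(x)\le f'(0)=A$ on $[0,1]$. Indeed, $f'(x)=\tfrac{15}{8}(1-x^2)^2$, which is maximized at $x=0$. Since $h_q$ is constant on $[1,\infty)$, this gives the bound.

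\textbf{Approximation error.} This is the main obstacle. The decay of $\Delta(h_q)=\int_0^1(1-h_q(x))\,\mathrm{d}x$ requires showing that the iterates escape quickly from small values. Two facts drive the argument:
\begin{enumerate}
\item Near $0$, $f(x)\ge x\cdot(\text{something close to }A)$, so $h_q(x)$ reaches an $O(1)$ level once $x\gtrsim A^{-q}$.
\item Near $1$, convergence is super-linear, since $1-f(x)=O((1-x)^3)$.
\end{enumerate}
I would first try a self-similar recursion. Using $h_{q}=h_{q-1}\circ f$ and the change of variables $u=f(x)$, write
\[
\Delta(h_q)=\int_0^1\bigl(1-h_{q-1}(f(x))\bigr)\,\mathrm{d}x .
\]
Then use the concavity-type bound $f(x)\ge \min\{Ax/\kappa,1\}$-style comparisons, or simply $f^{-1}(u)\le u/\text{(slope)}$, to get a recursion of the form $\Delta(h_q)\le A^{-1}\Delta(h_{q-1})+(\text{small})$. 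If the exact constant is awkward, I will accept $\Delta(h_q)\le c\,A^{-q}$ with an explicit $c$ and check that it fits inside $(1+2A)$.

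\textbf{Combining the bounds.} Plugging into \cref{eq:expected-discounted-regret} gives
\[
\E[\DReg_t(D)]\le D\left(\frac{rG_{\mathrm{op}}\,c\,A^{-q}}{1-\beta}+\frac{2\Gamma^2A^q}{G_{\mathrm{op}}}\right).
\]
With $q=\lceil\log_A K\rceil$ and $K=G_{\mathrm{op}}\sqrt r/(\Gamma\sqrt{1-\beta})\ge1$ (so that $q\ge0$), we have $K\le A^q<AK$. The first term is then at most $c\,D\Gamma\sqrt{r/(1-\beta)}$, and the second is below $2A\,D\Gamma\sqrt{r/(1-\beta)}$. This yields $(1+2A)D\Gamma\sqrt{r/(1-\beta)}$ provided $c\le1$, which is the constant I would aim to establish in \cref{lem:scalar-ns}.
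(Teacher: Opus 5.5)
Your reduction to \cref{eq:expected-discounted-regret} with $C=G_{\mathrm{op}}$ is exactly the paper's proof. The same holds for your Lipschitz bound via $f'(x)=A(1-x^2)^2\le A$ and for the balancing step: $K\le A^q<AK$, first term at most $c\,D\Gamma\sqrt{r/(1-\beta)}$, second below $2A\,D\Gamma\sqrt{r/(1-\beta)}$. The paper simply invokes \cref{lem:scalar-ns}, which gives $\Delta(h_q)\le 1/(A^q+1)$, i.e.\ $c=A^q/(A^q+1)<1$. If you cite that lemma, you are done. Incidentally, $c\le 1$ is more than you need. The bound equals $D\Gamma\sqrt{r/(1-\beta)}\,(c/y+2y)$ with $y=A^q/K\in[1,A)$, and since this is convex in $y$, any $c\le A$ already yields $1+2A$.

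The gap is in your own plan for bounding $\Delta(h_q)$, which you leave open and which would not close as sketched. The change of variables $u=f(x)$ gives $\Delta(h_q)=\int_0^1(1-h_{q-1}(u))\,\mathrm{d}u/f'(f^{-1}(u))$. Since $f'\le A$, the Jacobian is at least $1/A$, so this identity yields $\Delta(h_q)\ge A^{-1}\Delta(h_{q-1})$, the wrong direction. Concretely, $\Delta(h_0)=1/2$ and $\Delta(h_1)=5/16>4/15=A^{-1}\Delta(h_0)$. The excess weight $1/f'(f^{-1}(u))-1/A$ blows up as $u\to1$. Controlling it therefore needs pointwise decay of $1-h_{q-1}$ away from $0$, and a recursion on integrals does not carry that information. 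A nonincreasing profile with integral $\Delta(h_{q-1})$ can be arbitrarily close to the constant $\Delta(h_{q-1})$ on $(0,1)$. For that profile the same identity returns exactly $\Delta(h_{q-1})$, so there is no decay at all. Your suggested comparison $f(x)\ge\min\{sx,1\}$ is also false for every $s>1$, since $f<1$ on $[0,1)$.

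The missing idea is a pointwise comparison with the family $g_L(x)=1-(1-x)^L$, which satisfies $g_L\circ g_M=g_{LM}$. First show $f\ge g_A$ on $[0,1]$. The function $\log(f'/g_A')=2\log(1+x)+(3-A)\log(1-x)$ vanishes at $0$, first increases, then decreases to $-\infty$. Hence $f-g_A$ rises and then falls, and it vanishes at both endpoints, so it is nonnegative. Monotonicity of $f$ and induction then give $h_{q+1}=f\circ h_q\ge f\circ g_{A^q}\ge g_A\circ g_{A^q}=g_{A^{q+1}}$. It follows that $\Delta(h_q)\le\int_0^1(1-x)^{A^q}\,\mathrm{d}x=1/(A^q+1)$. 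Your two heuristics, near-linear growth at $0$ and cubic convergence at $1$, support $\Delta(h_q)=O(A^{-q})$. They do not give the constant control you need without a sharp comparison of this kind.
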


The bound of~\cref{thm:fixed-depth-discounted-regret} is obtained from~\cref{eq:expected-discounted-regret} in~\cref{thm:general-discounted-regret} with $h=h_q$, by balancing a penalty term controlled by the approximation error $\Delta(h_q)$ against a stability term controlled by the Lipschitz constant $\Lip(h_q)$.
The following lemma shows that $\Lip(h_q)$ grows exactly as $A^q$, while $\Delta(h_q)$ remains within constant factors of $A^{-q}$.

\begin{lemma}
\label{lem:scalar-ns}
For every integer $q\geq0$, the function $h_q$ in~\cref{eq:hq-definition} satisfies
\begin{equation}
  \Lip(h_q)=A^q,
  \qquad
  \frac{1}{2A^q}
  \leq \Delta(h_q)
  \leq \frac{1}{A^q+1}
  \leq \frac{1}{A^q},
  \qquad
  A = \frac{15}{8}.
  \label{eq:delta-q-bounds}
\end{equation}
\end{lemma}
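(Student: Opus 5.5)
The plan is to prove the three claims separately using only elementary properties of the scalar map $f(x)=\frac{15}{8}x-\frac54x^3+\frac38x^5$ on $[0,1]$. Two facts about $f$ come first. Its derivative factors as $f'(x)=\frac{15}{8}(1-x^2)^2$, so $0\le f'\le A$ on $[0,1]$, with $f'(0)=A$ and $f'(1)=0$. Its second derivative is $f''(x)=-\frac{15}{2}x(1-x^2)\le0$, so $f$ is concave on $[0,1]$, nondecreasing, and maps $[0,1]$ onto $[0,1]$ with $f(0)=0$ and $f(1)=1$.

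For the Lipschitz constant, I apply the chain rule on $[0,1]$:
\[
h_q'(x)=\prod_{j=0}^{q-1}f'\bigl(f^{\circ j}(x)\bigr)\le A^q .
\]
Equality holds at $x=0$, because every iterate $f^{\circ j}(0)$ equals $0$. On $[1,\infty)$ the function $h_q$ is constant. At $x=1$ the two pieces match in value, and the one-sided derivative from the left is $0$ since $f'(1)=0$. Hence $h_q$ is piecewise $C^1$, and $\Lip(h_q)=\sup|h_q'|=A^q$.

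For the lower bound on $\Delta(h_q)=\int_0^1(1-h_q(x))\,\mathrm dx$, I use the area-above-the-graph identity $\int_0^1(1-h_q)\,\mathrm dx=\int_0^1 h_q^{-1}(y)\,\mathrm dy$. Here $h_q^{-1}=g^{\circ q}$ with $g\coloneqq f^{-1}$ on $[0,1]$. Since $f$ is concave and increasing, $g$ is convex and increasing, so $g^{\circ q}$ is convex as a composition of convex nondecreasing maps. Its slope at $0$ is $A^{-q}$, so convexity gives $g^{\circ q}(y)\ge y/A^q$. Integrating yields $\Delta(h_q)\ge 1/(2A^q)$.

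For the upper bound I will prove the pointwise bound $1-h_q(x)\le(1-x)^{A^q}$ on $[0,1]$. Integrating it gives exactly $\int_0^1(1-x)^{A^q}\,\mathrm dx=1/(A^q+1)$. The proof is by induction from the one-step inequality
\[
1-f(x)\le(1-x)^{A}\qquad(x\in[0,1]).
\]
Granting this, the inductive step reads
\[
1-h_{q+1}(x)=1-f(h_q(x))\le\bigl(1-h_q(x)\bigr)^{A}\le(1-x)^{A^{q+1}} ,
\]
where the last step uses the induction hypothesis together with monotonicity of $t\mapsto t^{A}$. For the one-step inequality, I factor $1-f(x)=(1-x)^3\bigl(1+\frac98x+\frac38x^2\bigr)$, which is checked by matching coefficients. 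The inequality then reduces to
\[
\psi(x)\coloneqq\frac98\log(1-x)+\log\Bigl(1+\frac98x+\frac38x^2\Bigr)\le0 .
\]
Since $\psi(0)=0$, it suffices to show $\psi'\le0$ on $[0,1)$. I expect this elementary but slightly fiddly calculus check to be the main obstacle. Clearing denominators, $\psi'\le0$ becomes
\[
-\frac98\Bigl(1+\frac98x+\frac38x^2\Bigr)+(1-x)\Bigl(\frac98+\frac34x\Bigr)\le0 .
\]
Expanding, the constant terms cancel, and the left side equals $-x\bigl(\frac{129}{64}x+\frac{113}{64}\bigr)$, which is $\le0$ for $x\ge0$. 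This gives $\psi'\le0$ and completes the chain of bounds, since $1/(A^q+1)\le1/A^q$ is trivial.
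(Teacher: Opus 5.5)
Your proof is correct and shares the paper's skeleton. The chain rule with $f'(x)=A(1-x^2)^2$ gives $\Lip(h_q)=A^q$, and the lower bound comes from $h_q(x)\le A^q x$. The upper bound comes from the pointwise comparison $1-h_q(x)\le(1-x)^{A^q}$, obtained by induction from the one-step inequality $1-f(x)\le(1-x)^A$. The paper writes this as $f\ge g_A$ with $g_L(x)=1-(1-x)^L$ and uses $g_L\circ g_M=g_{LM}$.

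Two sub-arguments differ.

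For the lower bound, you pass to $f^{-1}$ and invoke its convexity. But your bound $g^{\circ q}(y)\ge y/A^q$ is just a restatement of $h_q(x)\le A^q x$, which already follows from $\Lip(h_q)=A^q$ and $h_q(0)=0$. The paper uses this directly, so the concavity of $f$ is unnecessary. The same one-line argument gives $\Delta(h)\ge 1/(2\Lip(h))$ for every admissible $h$.

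For the one-step inequality, you factor $1-f(x)=(1-x)^3\bigl(1+\frac98x+\frac38x^2\bigr)$ and show that $\log\bigl((1-f(x))/(1-x)^A\bigr)$ is nonincreasing. The paper instead compares derivatives. It shows that $\log\bigl(f'(x)/g_A'(x)\bigr)=2\log(1+x)+(3-A)\log(1-x)$ first increases and then decreases (using $1<A<3$). Hence $f-g_A$ is unimodal, and since it vanishes at both endpoints, it is nonnegative. Your route is shorter and needs no unimodality, but it rests on the explicit factorization of the quintic. The paper's derivative comparison goes through verbatim for every Taylor order $\kappa$, which is how it extends the lemma beyond degree five.

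There are two slips, neither affecting validity. First, the expanded left-hand side in your last display equals $-x\bigl(\frac{75}{64}x+\frac{105}{64}\bigr)=-\frac{15}{64}x(5x+7)$, not $-x\bigl(\frac{129}{64}x+\frac{113}{64}\bigr)$; the sign conclusion is unchanged. Second, for $q=0$ the left derivative of $h_0$ at $x=1$ is $1$, not $0$. This is harmless, since only continuity of $h_q$ at $x=1$ is needed to get $\Lip(h_q)=\sup\lvert h_q'\rvert$.
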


\begin{proof}[Proof sketch of~\cref{thm:fixed-depth-discounted-regret}]
Substituting $\Lip(h_q)=A^q$ and $\Delta(h_q)\leq A^{-q}$ from~\cref{lem:scalar-ns} into~\cref{eq:expected-discounted-regret} in~\cref{thm:general-discounted-regret} with $C=G_{\mathrm{op}}$ gives
\[
  \E[\DReg_t(D)]
  \leq D\left(\frac{rG_{\mathrm{op}}}{(1-\beta)A^{q}}+\frac{2A^{q}\Gamma^{2}}{G_{\mathrm{op}}}\right).
\]
Balancing the two terms over the integer depth $q$ gives~\cref{eq:fixed-depth-discounted-regret}.
\end{proof}
This tradeoff explains the benefit of a finite Newton--Schulz iteration over exact orthogonalization.
As $q\to\infty$, $h_q$ approaches the exact polar map, and the update approaches the exact polar factor, the ideal Muon update.
In this limit, $\Delta(h_q)\to0$, but $\Lip(h_q)\to\infty$, which reflects the discontinuity of the polar map, and the stability term is no longer controlled.
\Cref{lem:general-scalar-ns} shows that the same tradeoff holds at any finite Taylor order and that our analysis therefore extends beyond the degree-five Newton--Schulz polynomial investigated here.
Moreover, the same penalty--stability tradeoff is unavoidable for general spectral maps, as every $h$ satisfying~\cref{ass:general-spectral-link} obeys $\Delta(h)\geq1/\left(2\Lip(h)\right)$.
By~\cref{lem:scalar-ns}, the finite Newton--Schulz map satisfies $\Delta(h_q)\leq1/\Lip(h_q)$ and hence attains this lower bound up to a factor of two.
\Cref{app:other-spectral-maps} proves this lower bound and instantiates our guarantees for a recently proposed smooth relaxation of the polar map~\citep{mustafi2026move,feoktistov26softsign}.
\Cref{app:numerical-experiments} illustrates this depth tradeoff numerically on a synthetic nonsmooth objective.

\subsection{Sample complexity in nonsmooth nonconvex optimization}
\label{sec:ns-convergence}

Substituting the discounted-regret bound in~\Cref{thm:fixed-depth-discounted-regret} into the online-to-nonconvex conversion framework of~\cref{lem:jiang-o2nc}, we can obtain the following stationarity guarantee for nonsmooth nonconvex optimization.

\begin{theorem}
\label{thm:ns-complexity}
Suppose~\cref{ass:oracle} holds, $\Gamma>0$, and $\lVert G_s\rVert_{\mathrm{op}}\leq G_{\mathrm{op}}$ almost surely for every~$s$, fix $\rho>0$ and $\epsilon>0$, and let $B_{\mathrm{NS}}\coloneqq\sqrt r((1+2A)\Gamma+\sigma)$.
Running~\cref{alg:generic-o2nc} using \cref{alg:momentum-muon} as its online learner $\mathcal A$ with $1-\beta=\min\{1/9,(\epsilon/3B_{\mathrm{NS}})^2\}$, $D=(1-\beta)\rho/(4\beta)$, and depth $q=\left\lceil\log_A\max\left\{1,\ 3B_{\mathrm{NS}}G_{\mathrm{op}}\sqrt r / (\Gamma\epsilon)\right\}\right\rceil$ gives
\begin{equation}
  \E\brk*{\lVert\nabla\mathcal L(\bar W_\tau)\rVert_*^{[\rho]}}
  \leq\frac{\epsilon}{3} + O\left(
    \frac{\Delta_{\mathcal L}\,r(\Gamma+\sigma)^2}{\rho\epsilon^2 T}
    +\frac{r(\Gamma+\sigma)^2}{\epsilon T}
  \right),
  \label{eq:stationarity-master-bound}
\end{equation}
so $\bar W_\tau$ is a $(\rho,\epsilon)$-stationary point in expectation with gradient-oracle complexity
\begin{equation}
  T=O\left(
    \max\left\{
      \frac{\Delta_{\mathcal L}\,r(\Gamma+\sigma)^2}{\rho\epsilon^3},\
      \frac{r(\Gamma+\sigma)^2}{\epsilon^2}
    \right\}
  \right).
  \label{eq:iteration-complexity}
\end{equation}
\end{theorem}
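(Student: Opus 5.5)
The plan is to combine \cref{lem:jiang-o2nc} with a discounted-regret bound for the specific depth $q$ in the statement. That depth differs from the one in \cref{thm:fixed-depth-discounted-regret}, so I will rebound the regret directly from \cref{eq:expected-discounted-regret}.

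\textbf{Regret at the chosen depth.} Take $C=G_{\mathrm{op}}$ and use \cref{lem:scalar-ns}:
\[
\E[\DReg_t(1)]\le \frac{rG_{\mathrm{op}}}{(1-\beta)A^q}+\frac{2A^q\Gamma^2}{G_{\mathrm{op}}}.
\]
Set $q^\star\coloneqq\log_A(G_{\mathrm{op}}\sqrt r/(\Gamma\sqrt{1-\beta}))$, the balancing point. With $\sqrt{1-\beta}\le\epsilon/(3B_{\mathrm{NS}})$ we have
\[
A^{q^\star}=\frac{G_{\mathrm{op}}\sqrt r}{\Gamma\sqrt{1-\beta}}\ge \frac{3B_{\mathrm{NS}}G_{\mathrm{op}}\sqrt r}{\Gamma\epsilon},
\]
so the prescribed depth is not balanced. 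I will therefore bound each term directly instead of invoking \cref{thm:fixed-depth-discounted-regret}.

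The penalty term uses $A^q\ge 3B_{\mathrm{NS}}G_{\mathrm{op}}\sqrt r/(\Gamma\epsilon)$, which gives
\[
\frac{rG_{\mathrm{op}}}{(1-\beta)A^q}\le \frac{\sqrt r\,\Gamma\epsilon}{3B_{\mathrm{NS}}(1-\beta)}.
\]
The stability term uses $A^q\le A\max\{1,3B_{\mathrm{NS}}G_{\mathrm{op}}\sqrt r/(\Gamma\epsilon)\}$, which gives
\[
\frac{2A^q\Gamma^2}{G_{\mathrm{op}}}\le 2A\max\Bigl\{\frac{\Gamma^2}{G_{\mathrm{op}}},\ \frac{3B_{\mathrm{NS}}\sqrt r\,\Gamma}{\epsilon}\Bigr\}.
\]

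\textbf{Plugging into \cref{lem:jiang-o2nc}.} The average $\frac1T[\DReg_T(1)+(1-\beta)\sum_{t<T}\DReg_t(1)]$ is at most $(1+1/T)$ times the uniform regret bound. The noise term is at most $\bigl((1-\beta)+\beta/T\bigr)\sqrt r\,\sigma/\sqrt{1-\beta}$, which equals $\sqrt{1-\beta}\,\sqrt r\,\sigma+O(\sqrt r\,\sigma/(T\sqrt{1-\beta}))$. The leading part is at most $\epsilon\sqrt r\,\sigma/(3B_{\mathrm{NS}})$, and this fits inside $\epsilon/3$ together with the regret pieces, since $B_{\mathrm{NS}}$ collects $(1+2A)\Gamma\sqrt r+\sqrt r\sigma$. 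The remaining pieces are the optimization term $4\Delta_{\mathcal L}/((1-\beta)\rho T)=O(\Delta_{\mathcal L}B_{\mathrm{NS}}^2/(\rho\epsilon^2T))$ and the $1/T$ noise remainder. With $1-\beta\asymp \epsilon^2/B_{\mathrm{NS}}^2$, the latter is $O(B_{\mathrm{NS}}^2/(\epsilon T))$. Using $B_{\mathrm{NS}}^2=O(r(\Gamma+\sigma)^2)$ gives \cref{eq:stationarity-master-bound}. Choosing $T$ so that each $O(\cdot)$ term is at most $\epsilon/3$ gives \cref{eq:iteration-complexity}. The $\min$ with $1/9$ only affects constants, and in that regime $\epsilon\gtrsim B_{\mathrm{NS}}$.

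\textbf{Main obstacle.} The regret enters \cref{lem:jiang-o2nc} without a $1/T$ factor, so it must itself be at most $O(\epsilon)$, yet the raw bounds above carry factors like $1/(1-\beta)$ and $1/\epsilon$. I must resolve this by tracking the normalization carefully. Reading the lemma, $\DReg_t(1)$ is regret per unit $D$, and it appears divided by $T$ and weighted by $(1-\beta)$. So the effective contribution is roughly $(1-\beta)\E[\DReg(1)]$ plus $\E[\DReg_T(1)]/T$. Multiplying the penalty by $(1-\beta)$ gives $\sqrt r\,\Gamma\epsilon/(3B_{\mathrm{NS}})\le\epsilon/3$ up to constants. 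The stability term becomes $(1-\beta)\cdot O(B_{\mathrm{NS}}\sqrt r\Gamma/\epsilon)=O(\epsilon\sqrt r\,\Gamma/B_{\mathrm{NS}})$, which is also $O(\epsilon)$. The unweighted $\DReg_T(1)/T$ piece goes into the $O(B_{\mathrm{NS}}^2/(\epsilon T))$ remainder. Getting the constants to land exactly at $\epsilon/3$ is the delicate part. If they do not, I would enlarge $B_{\mathrm{NS}}$'s role by a constant, which is harmless for the $O$-statement.
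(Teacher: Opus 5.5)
Your route is the paper's: bound $\E[\DReg_t(1)]$ through \cref{eq:expected-discounted-regret} and \cref{lem:scalar-ns} with $C=G_{\mathrm{op}}$, insert it into \cref{lem:jiang-o2nc}, and use that the regret enters with total weight $(1+(1-\beta)(T-1))/T=1-\beta+\beta/T$. Several intermediate claims need correcting.

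\emph{The depth is balanced.} When $\epsilon\le B_{\mathrm{NS}}$, the minimum gives $\sqrt{1-\beta}=\epsilon/(3B_{\mathrm{NS}})$ exactly. So the prescribed $q$ is precisely the depth of \cref{thm:fixed-depth-discounted-regret}, whose hypothesis holds since $\Gamma\le\sqrt r\,G_{\mathrm{op}}$. Your direct bounds therefore just reproduce $\E[\DReg_t(1)]\le(1+2A)\Gamma\sqrt{r/(1-\beta)}$, which the paper cites from that theorem.

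\emph{The factor $(1+1/T)$ in your plugging step is wrong.} It would sink the argument, because the regret bound is of order $\sqrt r\,\Gamma B_{\mathrm{NS}}/\epsilon$, not $O(\epsilon)$. Only the weight $1-\beta+\beta/T$ works, and that is what your last paragraph effectively uses.

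\emph{No constant adjustment is needed, and none is permitted,} since $\beta$ and $B_{\mathrm{NS}}$ are fixed by the statement. The three leading pieces are $\frac{\epsilon}{3B_{\mathrm{NS}}}\sqrt r\,\Gamma$, $\frac{\epsilon}{3B_{\mathrm{NS}}}2A\sqrt r\,\Gamma$ and $\frac{\epsilon}{3B_{\mathrm{NS}}}\sqrt r\,\sigma$. By the definition of $B_{\mathrm{NS}}$ they sum to exactly $\epsilon/3$.

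The genuine gap is the regime $\epsilon>B_{\mathrm{NS}}$, where $1-\beta=1/9$. It is not true that this ``only affects constants.'' There the optimization term is $36\Delta_{\mathcal L}/(\rho T)$. This is larger than $\Delta_{\mathcal L}r(\Gamma+\sigma)^2/(\rho\epsilon^2T)$ by a factor of order $\epsilon^2/B_{\mathrm{NS}}^2$, which is unbounded. Likewise, your $1/T$ regret and noise remainders are only $O(\epsilon/T)$ rather than $O(r(\Gamma+\sigma)^2/(\epsilon T))$. So neither \cref{eq:stationarity-master-bound} nor \cref{eq:iteration-complexity} follows from your estimates there. The paper disposes of this case separately and trivially. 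Since $B_{\mathrm{NS}}\ge(1+2A)\sqrt r\,\Gamma>3\sqrt r\,\Gamma$, every $W$ satisfies $\lVert\nabla\mathcal L(W)\rVert_*^{[\rho]}\le\lVert\nabla\mathcal L(W)\rVert_*\le\sqrt r\,\Gamma<\epsilon/3$, so the bound holds regardless of the run. With this case split and the corrections above, your proof coincides with the paper's.
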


The leading $\rho^{-1}\epsilon^{-3}$ term matches the bounds for Pion and Leon~\citep{jiang26adaptive}, and \cref{app:comparison} gives a detailed comparison.

\subsection{Sample complexity in smooth nonconvex optimization}
\label{sec:ns-smooth}

Using the same discounted O2NC reduction, we also obtain a guarantee for smooth objectives.
In addition to~\cref{ass:oracle} and the operator-norm bound $\lVert G_s\rVert_{\mathrm{op}}\leq G_{\mathrm{op}}$ of~\cref{sec:muon}, throughout this subsection we assume the following smoothness of $\mathcal L$ with respect to the operator norm.

\begin{assumption}
\label{ass:smooth}
There is a constant $L_{\mathrm{op}}>0$ such that
\begin{equation}
  \lVert\nabla\mathcal L(W)-\nabla\mathcal L(W')\rVert_*
  \leq L_{\mathrm{op}}\lVert W-W'\rVert_{\mathrm{op}}
  \qquad\text{for all }W,W'\in\R^{m\times n}.
  \label{eq:op-nuclear-smoothness}
\end{equation}
\end{assumption}

For smooth objectives we use the bound~\cref{eq:bregman-innovation} of~\cref{thm:general-discounted-regret}, stated in terms of the Bregman divergences.
Under~\cref{ass:smooth}, and for $h=h_q$ and $C_s\equiv G_{\mathrm{op}}$, each Bregman divergence $B_{\widetilde\Phi_{h_q,G_{\mathrm{op}}}}(M_s\Vert M_{s-1})$ is bounded more sharply, as shown in~\cref{app:smooth}.
The resulting bound on $\lVert\nabla\mathcal L(W)\rVert_*^{[\rho]}$ is converted to a bound on $\lVert\nabla\mathcal L(W)\rVert_*$ by the following lemma, whose proof is given in~\cref{app:smooth}.

\begin{lemma}
\label{lem:smooth-bridge}
Under~\cref{ass:smooth}, for every $W\in\R^{m\times n}$,
$\lVert\nabla\mathcal L(W)\rVert_*
\leq\lVert\nabla\mathcal L(W)\rVert_*^{[\rho]}+L_{\mathrm{op}}\rho$.
\end{lemma}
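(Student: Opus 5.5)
The plan is to exhibit, for every $W$, one distribution in $\mathcal P(W;\rho)$ that is good enough, and then use the smoothness assumption to move from $\E_{Y\sim p}\nabla\mathcal L(Y)$ back to $\nabla\mathcal L(W)$. Since $\lVert\nabla\mathcal L(W)\rVert_*^{[\rho]}$ is an infimum, it suffices to show
\[
\lVert\nabla\mathcal L(W)\rVert_*\le\lVert\E_{Y\sim p}\nabla\mathcal L(Y)\rVert_*+L_{\mathrm{op}}\rho
\qquad\text{for every } p\in\mathcal P(W;\rho).
\]
Taking the infimum over $p$ on the right-hand side then gives the claim.

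Fix $p\in\mathcal P(W;\rho)$. Because $p$ is finitely supported, the expectation is a finite convex combination. The triangle inequality for the nuclear norm gives
\[
\lVert\nabla\mathcal L(W)\rVert_*\le\lVert\E_p\nabla\mathcal L(Y)\rVert_*+\lVert\E_p[\nabla\mathcal L(W)-\nabla\mathcal L(Y)]\rVert_*.
\]
Applying the triangle inequality again inside the finite sum (equivalently, Jensen's inequality for the convex nuclear norm), the second term is at most $\E_p\lVert\nabla\mathcal L(W)-\nabla\mathcal L(Y)\rVert_*$.

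Next, invoke \cref{ass:smooth} with $W'=Y$. This bounds the previous expectation by $L_{\mathrm{op}}\E_p\lVert Y-W\rVert_{\mathrm{op}}$. By the defining constraint of $\mathcal P(W;\rho)$, that quantity is at most $L_{\mathrm{op}}\rho$.

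The mean constraint $\E_p[Y]=W$ is not actually needed for this direction of the inequality. There is no genuine obstacle: the only point to state carefully is that the expectation of a finitely supported distribution is a finite sum, so the triangle inequality applies without any measurability issues.
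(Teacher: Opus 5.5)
Your proof is correct and is essentially the paper's argument. For each $p\in\mathcal P(W;\rho)$, you split off $\E_{Y\sim p}\nabla\mathcal L(Y)$ by the triangle inequality, bound the remainder by $\E_{Y\sim p}\lVert\nabla\mathcal L(W)-\nabla\mathcal L(Y)\rVert_*\le L_{\mathrm{op}}\rho$ via \cref{ass:smooth}, and take the infimum over $p$. One small correction to your opening sentence: you do not exhibit a single good distribution, you bound every $p$, which is what the infimum requires.
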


Thus, the discounted O2NC conversion of~\cref{lem:jiang-o2nc} gives the following sample complexity guarantee.
\begin{theorem}
\label{thm:smooth-stationarity}
Suppose~\cref{ass:oracle} and~\cref{ass:smooth} hold, $\lVert G_s\rVert_{\mathrm{op}}\leq G_{\mathrm{op}}$ almost surely for every $s$, and fix $\epsilon>0$.
Running~\cref{alg:generic-o2nc} using \cref{alg:momentum-muon} as its online learner $\mathcal A$ with $1-\beta=\min\{1/2,\epsilon^2/(144r\sigma^2)\}$, $D=(1-\beta)\epsilon/(48\beta L_{\mathrm{op}})$, and depth $q=\left\lceil\log_A\max\left\{1,12rG_{\mathrm{op}}/
\epsilon\right\}\right\rceil$ gives
\begin{equation}
  \E\brk*{\lVert\nabla\mathcal L(\bar W_\tau)\rVert_*}
  \leq\frac{2}{3}\epsilon+O\left(
    \frac{\epsilon}{T}
    +\frac{L_{\mathrm{op}}\Delta_{\mathcal L}}{\epsilon T}
    +\frac{r\sigma^2}{\epsilon T}
    +\frac{r\sigma^2L_{\mathrm{op}}\Delta_{\mathcal L}}{\epsilon^3T}
  \right),
  \label{eq:smooth-stationarity}
\end{equation}
so $\bar W_\tau$ satisfies $\E\brk*{\lVert\nabla\mathcal L(\bar W_\tau)\rVert_*}\leq\epsilon$ with gradient-oracle complexity
\begin{equation}
  T
  =O\left(
    \max\left\{
      \frac{L_{\mathrm{op}}\Delta_{\mathcal L}}{\epsilon^2},\
      \frac{r\sigma^2}{\epsilon^2},\
      \frac{r\sigma^2 L_{\mathrm{op}}\Delta_{\mathcal L}}{\epsilon^4}
    \right\}
  \right).
  \label{eq:smooth-complexity}
\end{equation}
\end{theorem}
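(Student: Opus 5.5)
We will combine the conversion bound of \cref{lem:jiang-o2nc} with the Bregman-form regret bound \cref{eq:bregman-innovation} of \cref{thm:general-discounted-regret}, taking $h=h_q$ and $C_s\equiv G_{\mathrm{op}}$, and then pass from $\lVert\cdot\rVert_*^{[\rho]}$ to $\lVert\cdot\rVert_*$ via \cref{lem:smooth-bridge} with $\rho=4\beta D/(1-\beta)=\epsilon/(12L_{\mathrm{op}})$. The bridge then contributes only $L_{\mathrm{op}}\rho=\epsilon/12$. The penalty part of \cref{eq:bregman-innovation} is $rG_{\mathrm{op}}\Delta(h_q)/(1-\beta)\le rG_{\mathrm{op}}A^{-q}/(1-\beta)$ by \cref{lem:scalar-ns}. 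With $q\ge\log_A(12rG_{\mathrm{op}}/\epsilon)$ this is at most $\epsilon/(12(1-\beta))$. In $\DReg_t(1)$ this term appears with factor $1/(1-\beta)$ relative to the $(1-\beta)$-weighted sum in \cref{lem:jiang-o2nc}, so it produces an $O(\epsilon)$ share of the final $\frac23\epsilon$.

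The crux is the stability term. The coarse bound $\Lip(h_q)\Gamma^2/G_{\mathrm{op}}$ is too weak, because $\Lip(h_q)=A^q\sim rG_{\mathrm{op}}/\epsilon$ would give $r\Gamma^2/\epsilon$, which does not vanish as $\sigma\to0$. We will instead bound $B(M_s\Vert M_{s-1})\le \frac{A^q}{2G_{\mathrm{op}}}\lVert M_s-M_{s-1}\rVert_{\mathrm F}^2$ by \cref{lem:bregman-upper}, but control $M_s-M_{s-1}=(1-\beta)(G_s-M_{s-1})$ using smoothness and the small increments. Write
\[
G_s-M_{s-1}=(G_s-\nabla\mathcal L(\widetilde W_s))+(\nabla\mathcal L(\widetilde W_s)-\bar\nabla_{s-1})+(\bar\nabla_{s-1}-M_{s-1}),
\]
where $\bar\nabla_{s-1}$ is the $\beta$-weighted average of past true gradients. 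The noise parts have second moments of order $\sigma^2$ and $(1-\beta)\sigma^2$. The drift part is controlled by \cref{ass:smooth}: consecutive query points differ by at most $2D$ in operator norm, so the drift is $O(L_{\mathrm{op}}D/(1-\beta))=O(\epsilon)$ in nuclear norm, hence also in Frobenius norm.

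This gives $\E\lVert M_s-M_{s-1}\rVert_{\mathrm F}^2\lesssim(1-\beta)^2(\sigma^2+\epsilon^2)$. Substituting $A^q\lesssim rG_{\mathrm{op}}/\epsilon$, the per-round stability contribution to $\E\DReg_t(1)$ is of order $(1-\beta)\,r(\sigma^2+\epsilon^2)/\epsilon$. With $1-\beta\le\epsilon^2/(144r\sigma^2)$ this is $O(\epsilon)$ plus a term of order $(1-\beta)r\epsilon$. That residual needs care; if necessary we will use the Frobenius–nuclear comparison more tightly, or absorb it into the constant $\frac23$ through the choice of the factor $12$. Once the per-round contribution is $O(\epsilon)$, averaging over $t$ in \cref{lem:jiang-o2nc} keeps it $O(\epsilon)$.

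It remains to collect the other terms of \cref{lem:jiang-o2nc}. The initialization term is $4\Delta_{\mathcal L}/((1-\beta)\rho T)=48L_{\mathrm{op}}\Delta_{\mathcal L}/((1-\beta)\epsilon T)$. With $1/(1-\beta)\le 2+144r\sigma^2/\epsilon^2$ this yields the $L_{\mathrm{op}}\Delta_{\mathcal L}/(\epsilon T)$ and $r\sigma^2L_{\mathrm{op}}\Delta_{\mathcal L}/(\epsilon^3T)$ terms of \cref{eq:smooth-stationarity}. The noise term is $(1-\beta)\sqrt r\sigma/\sqrt{1-\beta^2}\le\sqrt{(1-\beta)r}\,\sigma\le\epsilon/12$. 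Its $\beta/T$ part is $O(\sqrt r\sigma/(\sqrt{1-\beta}\,T))$, which by AM–GM is at most $O(\epsilon/T+r\sigma^2/(\epsilon T))$. Summing all contributions gives \cref{eq:smooth-stationarity}, and requiring each $O(\cdot)$ term to be at most $\epsilon/3$ gives \cref{eq:smooth-complexity}. The main obstacle I expect is the sharper stability bound: the drift of the momentum has to be handled so that the $A^q\sim 1/\epsilon$ amplification multiplies only $(1-\beta)\sigma^2$ and $\epsilon^2$ quantities, not $\Gamma^2$.
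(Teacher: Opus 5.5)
\textbf{There is a genuine gap in the stability step.} Your scaffolding matches the paper: the Bregman form \cref{eq:bregman-innovation} with $h=h_q$ and $C_s\equiv G_{\mathrm{op}}$, then \cref{lem:jiang-o2nc}, then \cref{lem:smooth-bridge} with $\rho=\epsilon/(12L_{\mathrm{op}})$. The stability step, however, does not close with the prescribed parameters, and the residual you flag is a real obstruction rather than a constant to tune.

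You bound every Bregman divergence quadratically, $B(M_s\Vert M_{s-1})\le\frac{A^q}{2G_{\mathrm{op}}}\lVert M_s-M_{s-1}\rVert_{\mathrm F}^2$. The deterministic drift part of $M_s-M_{s-1}$ has nuclear (hence Frobenius) size up to $2L_{\mathrm{op}}D\le(1-\beta)\epsilon/12$ per step, and the penalty term forces $A^q/G_{\mathrm{op}}\ge 12r/\epsilon$. Multiplying the squared drift by this factor puts a term of order $r\epsilon$ into your bound on $\E[\DReg_t(1)]$, and a term of order $(1-\beta)r\epsilon$ into the final bound. When $\sigma^2\le\epsilon^2/(72r)$, in particular in the deterministic case, the statement fixes $1-\beta=1/2$. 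This term is then of order $r\epsilon$ and cannot be absorbed into $\frac23\epsilon$.

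A sharper Frobenius--nuclear comparison does not help. \cref{ass:smooth} controls the drift only in nuclear norm, and $\lVert\cdot\rVert_{\mathrm F}\le\lVert\cdot\rVert_*$ is tight for rank-one drift. Shrinking $\rho$ or $1-\beta$ by $r$-dependent factors would fix it only at the cost of an extra $\sqrt r$ or $r$ in the $L_{\mathrm{op}}\Delta_{\mathcal L}/\epsilon^2$ term.

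You also overlook the start-up transient. Since $M_0=0$, $g_s-M_{s-1}$ contains $\beta^{s-1}\nabla\mathcal L(W_0)$, which is not $O(L_{\mathrm{op}}D/(1-\beta))$. Squared and amplified by $A^q/G_{\mathrm{op}}$, it yields a term of order $r\lVert\nabla\mathcal L(W_0)\rVert_{\mathrm F}^2/(\epsilon T)$, again an extra factor $r$ relative to \cref{eq:smooth-stationarity}.

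\textbf{The missing idea} is the paper's \cref{lem:bregman-split}: split the increment into its conditional mean and its noise, and bound the two pieces differently. Let $g_s\coloneqq\nabla\mathcal L(\widetilde W_s)$, $\widehat M_s\coloneqq\beta M_{s-1}+(1-\beta)g_s$, and $\Psi\coloneqq\widetilde\Phi_{h_q,G_{\mathrm{op}}}$. The three-point identity
\[
B_\Psi(M_s\Vert M_{s-1})=B_\Psi(\widehat M_s\Vert M_{s-1})+B_\Psi(M_s\Vert\widehat M_s)+\inpr{\nabla\Psi(\widehat M_s)-\nabla\Psi(M_{s-1}),M_s-\widehat M_s}
\]
has a cross term with zero conditional mean. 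The noise piece is bounded quadratically, $B_\Psi(M_s\Vert\widehat M_s)\le\frac{A^q(1-\beta)^2}{2G_{\mathrm{op}}}\lVert G_s-g_s\rVert_{\mathrm F}^2$, so $A^q$ multiplies only $\sigma^2$, which the prescribed $1-\beta$ handles. The mean piece is bounded linearly, $B_\Psi(\widehat M_s\Vert M_{s-1})\le2\lVert\widehat M_s-M_{s-1}\rVert_*=2(1-\beta)\lVert g_s-M_{s-1}\rVert_*$, using only $\lVert\nabla\Psi\rVert_{\mathrm{op}}\le1$ and no Lipschitz constant at all.

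The tracking error $\E\lVert g_s-M_{s-1}\rVert_*\le\beta^{s-1}\lVert\nabla\mathcal L(W_0)\rVert_*+2L_{\mathrm{op}}D/(1-\beta)+\sigma\sqrt{r(1-\beta)/(1+\beta)}$ (\cref{lem:smooth-tracking}) then enters linearly. After the $(1-\beta)$ weighting, the drift contributes $L_{\mathrm{op}}\rho/\beta\le\epsilon/6$ and the noise contributes $2\sqrt{r(1-\beta)}\,\sigma\le\epsilon/6$, with no $r$ amplification. The transient averages to $4\lVert\nabla\mathcal L(W_0)\rVert_*/((1-\beta)T)$, and $\lVert\nabla\mathcal L(W_0)\rVert_*\le\sqrt{2L_{\mathrm{op}}\Delta_{\mathcal L}}$ by the descent inequality.

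A smaller point: the upper bound $A^q<12ArG_{\mathrm{op}}/\epsilon$ requires $12rG_{\mathrm{op}}/\epsilon\ge1$. The paper secures this by first disposing of the trivial case $\epsilon>\frac32\sqrt r\,\Gamma$.
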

The $\epsilon^{-2}$ dependence in the deterministic case matches the classical lower bound for smooth nonconvex optimization~\citep{carmon20lower}. Under the standard unbiased bounded-variance stochastic first-order oracle, the $\epsilon^{-4}$ dependence matches the lower bound of~\citet{arjevani23lower}.
The leading terms coincide with those established for exact-polar Muon~\citep{kovalev25orthogonalization,shen26convergence}, as summarized in~\cref{app:comparison}.
The resulting complexity has the same dependence as that of~\citet{kim26convergence}, except that the middle term improves from $r^2\sigma^2/\epsilon^2$ to $r\sigma^2/\epsilon^2$, though the algorithms under analysis are slightly different.
As in the analysis for nonsmooth objectives, the proof in~\cref{app:smooth} chooses $q$ by balancing the $q$-dependent terms in the regret bound, rather than by bounding the finite-step approximation error relative to the exact-polar update.

\subsection{Interpretation of Muon with finite Newton--Schulz as FTRL}
\label{sec:ftrl}

By Fenchel duality, the online learner~\cref{eq:general-spectral-action} is follow-the-regularized-leader (FTRL) on the discounted linear losses, with a spectral regularizer given by the Fenchel conjugate of the smoothed potential.
Define the Fenchel conjugates $\phi_h^*(a)\coloneqq\sup_{x\geq0}\{ax-\phi_h(x)\}$ and $\widetilde\Phi_{h,C}^*(W)\coloneqq\sup_{M\in\R^{m\times n}}\{\inpr{W,M}-\widetilde\Phi_{h,C}(M)\}$.

\begin{theorem}
\label{thm:general-ftrl}
Suppose~\cref{ass:general-spectral-link} holds and let $(C_t)_{t\geq1}$ be a positive sequence.
For any $t\geq1$, the action~\cref{eq:general-spectral-action} is the FTRL update
\begin{equation}
  X_t^h
  =-D\nabla\widetilde\Phi_{h,C_t}(M_{t-1})
  \in\argmin_{\lVert X\rVert_{\mathrm{op}}\leq D}
  \left\{
    \inpr*{\sum_{s=1}^{t-1}\beta^{-s}G_s,X}
    +\frac{R_{h,C_t}(X)}{(1-\beta)\beta^{t-1}}
  \right\},
  \label{eq:general-potential-action}
\end{equation}
where the regularizer $R_{h,C}$ is given by
\begin{equation}
  R_{h,C}(X)
  \coloneqq D\widetilde\Phi_{h,C}^*(X/D)
  =DC\sum_{i=1}^r
  \phi_h^*\left(\frac{\sigma_i(X)}{D}\right).
  \label{eq:general-ftrl-regularizer}
\end{equation}
\end{theorem}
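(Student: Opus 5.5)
The plan is a Fenchel-duality argument in three steps: write the GBPA action as a maximizer of a conjugate problem, rescale that problem into the FTRL form \cref{eq:general-potential-action}, and compute the conjugate spectrally to get \cref{eq:general-ftrl-regularizer}. By \cref{lem:spectral-calculus}, $\widetilde\Phi\coloneqq\widetilde\Phi_{h,C}$ is convex and continuously differentiable with $\nabla\widetilde\Phi(M)=\calH_h(M/C)$.

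\emph{Step 1: variational form of the gradient.} For a proper closed convex differentiable function, the Fenchel--Young equality case gives
\[
\nabla\widetilde\Phi(M)\in\argmax_W\{\inpr{W,M}-\widetilde\Phi^*(W)\}.
\]
Since $h$ takes values in $[0,1]$, every gradient has operator norm at most $1$. Hence $\widetilde\Phi$ is $1$-Lipschitz with respect to the nuclear norm, and $\widetilde\Phi^*(W)=+\infty$ whenever $\lVert W\rVert_{\mathrm{op}}>1$. The maximization can therefore be restricted to the unit operator-norm ball. Moreover, $\widetilde\Phi(-M)=\widetilde\Phi(M)$ because $M$ and $-M$ have the same singular values, so $\widetilde\Phi^*$ is even.

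\emph{Step 2: rescaling into FTRL form.} Take $M=M_{t-1}=(1-\beta)\beta^{t-1}\sum_{s=1}^{t-1}\beta^{-s}G_s$; for $t=1$ the sum is empty and $M_0=0$, and the argument below is unchanged. Substitute $W=-X/D$, so that $\lVert X\rVert_{\mathrm{op}}\leq D$. Using evenness of $\widetilde\Phi^*$, the point $X_t^h=-D\nabla\widetilde\Phi(M_{t-1})$ minimizes
\[
\frac{1}{D}\inpr{X,M_{t-1}}+\widetilde\Phi^*(X/D)
\]
over that ball. Multiplying by the positive constant $D/((1-\beta)\beta^{t-1})$ does not change the minimizer. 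This yields \cref{eq:general-potential-action} with $R_{h,C_t}(X)=D\widetilde\Phi^*_{h,C_t}(X/D)$, which is the first equality in \cref{eq:general-ftrl-regularizer}.

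\emph{Step 3: spectral form of the conjugate (main obstacle).} It remains to show $\widetilde\Phi^*(W)=C\sum_i\phi_h^*(\sigma_i(W))$. I would argue via von Neumann's trace inequality, $\inpr{W,M}\leq\sum_i\sigma_i(W)\sigma_i(M)$, with equality when $M$ shares a simultaneous SVD with $W$. This reduces the matrix supremum to a supremum over nonincreasing nonnegative vectors $x$ of $\sum_i\bigl(\sigma_i(W)x_i-C\phi_h(x_i/C)\bigr)$.

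\begin{itemize}[nosep]
\item The ordering constraint can be dropped. For any nonnegative $x$, sorting $x$ nonincreasingly does not decrease $\sum_i\sigma_i(W)x_i$ by the rearrangement inequality, and it leaves $\sum_i\phi_h(x_i/C)$ unchanged.
\item Without the ordering constraint the supremum separates across coordinates. Each coordinate gives $\sup_{x\geq0}\{ax-C\phi_h(x/C)\}=C\phi_h^*(a)$, by substituting $x=Cy$.
\end{itemize}

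This gives the second equality in \cref{eq:general-ftrl-regularizer}. It is consistent with Step 1: for $a>1$, finiteness of $\Delta(h)$ gives $ax-\phi_h(x)\geq(a-1)x\to\infty$, so $\phi_h^*(a)=+\infty$ exactly outside the ball.

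The delicate points are the von Neumann/rearrangement reduction and checking that Fenchel--Young applies. For the latter, $\widetilde\Phi$ is finite, convex and continuous, hence closed, so the equality case is valid. No condition on $(C_t)$ beyond positivity is needed, since each round uses only its own $C_t$.
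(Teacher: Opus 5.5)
Your proposal is correct and follows the paper's proof: Fenchel--Young with equality at $W=\nabla\widetilde\Phi_{h,C_t}(M_{t-1})$, the substitution $X=-DW$ using the evenness of $\widetilde\Phi_{h,C_t}^*$, and a positive rescaling of the objective; the only difference is that you derive $\widetilde\Phi_{h,C}^*(W)=C\sum_{i}\phi_h^*(\sigma_i(W))$ directly from von Neumann's trace inequality and rearrangement, whereas the paper cites Lewis's conjugacy theorem (\cref{lem:aux-conjugacy}) and changes variables in \cref{lem:spectral-conjugate}. One small slip in your closing consistency remark: the bound $ax-\phi_h(x)\geq(a-1)x$ comes from $h\leq1$ (so $\phi_h(x)\leq x$), while $\Delta(h)<\infty$ is what makes $\phi_h^*$ finite on $[0,1]$, and neither fact is needed for the theorem itself.
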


By~\cref{thm:general-ftrl} with $h=h_q$, the finite Newton--Schulz update of~\cref{alg:momentum-muon} corresponds to FTRL with the regularizer $R_{h_q,C_t}(X)$ controlled by the depth $q$.
At $q=0$, the regularizer is the Euclidean quadratic $(C_t/2D)\lVert X\rVert_{\mathrm F}^2$ on the ball, and as $q\to\infty$ it decays to zero on the ball, and the update approaches the follow-the-leader action $-D\polar(M_{t-1})$ in~\cref{eq:exact-polar-ftl}.
We discuss this in more detail in~\Cref{app:ftrl}.

\section{Conclusion and future work}
\label{sec:conclusion}

This paper analyzed Muon with momentum and finite Newton--Schulz through the discounted online-to-nonconvex conversion, viewing it as an online learner with a smoothed spectral potential.
In this view, the finite Newton--Schulz iteration is a smoothing mechanism rather than an approximation error, and the depth of the Newton--Schulz iteration governs a penalty--stability tradeoff between the polar-approximation error and the Lipschitz constant of the update.
Balancing the two terms at a depth that grows only logarithmically in the target accuracy yields sublinear discounted regret, and hence convergence to stationary points of nonsmooth nonconvex objectives, whereas Muon with the exact-polar update may fail to converge.
For smooth objectives, the same reduction yields rates whose leading terms coincide with those established for exact-polar Muon, and the regret bound extends to general spectral maps.

The analysis leaves a gap to deployed Muon.
It fixes the normalization to a constant known before the run, whereas practical Muon normalizes by a data-dependent quantity such as the current Frobenius norm of the momentum, which can violate the scale condition of~\cref{thm:general-discounted-regret}.
Our guarantees also assume the classical coefficients obtained from the truncation of the Taylor expansion, rather than the empirically tuned quintic used in the original Muon implementation of~\citet{jordan2024muon}.
Moreover, the conversion queries the gradient at random points between consecutive iterates, whereas practical Muon evaluates it at the current iterate.
Extending the analysis to these practical variants is left for future work.

\bibliographystyle{plainnat}
\bibliography{ref}

\begin{thebibliography}{59}
\providecommand{\natexlab}[1]{#1}
\providecommand{\url}[1]{\texttt{#1}}
\expandafter\ifx\csname urlstyle\endcsname\relax
  \providecommand{\doi}[1]{doi: #1}\else
  \providecommand{\doi}{doi: \begingroup \urlstyle{rm}\Url}\fi

\bibitem[Abernethy et~al.(2014)Abernethy, Lee, Sinha, and
  Tewari]{abernethy14smoothing}
Jacob Abernethy, Chansoo Lee, Abhinav Sinha, and Ambuj Tewari.
\newblock Online linear optimization via smoothing.
\newblock In \emph{Proceedings of the 27th Conference on Learning Theory},
  volume~35, pages 807--823, 2014.

\bibitem[Abernethy et~al.(2016)Abernethy, Lee, and
  Tewari]{abernethy16perturbation}
Jacob Abernethy, Chansoo Lee, and Ambuj Tewari.
\newblock Perturbation techniques in online learning and optimization.
\newblock In \emph{Perturbations, Optimization, and Statistics}, chapter~8,
  pages 233--264. MIT Press, 2016.

\bibitem[Ahn and Cutkosky(2024)]{ahn24model}
Kwangjun Ahn and Ashok Cutkosky.
\newblock {Adam} with model exponential moving average is effective for
  nonconvex optimization.
\newblock In \emph{Advances in Neural Information Processing Systems},
  volume~37, pages 94909--94933, 2024.

\bibitem[Ahn et~al.(2024)Ahn, Zhang, Kook, and Dai]{ahn24understanding}
Kwangjun Ahn, Zhiyu Zhang, Yunbum Kook, and Yan Dai.
\newblock Understanding {Adam} optimizer via online learning of updates: {Adam}
  is {FTRL} in disguise.
\newblock In \emph{Proceedings of the 41st International Conference on Machine
  Learning}, volume 235, pages 619--640, 2024.

\bibitem[Ahn et~al.(2025)Ahn, Magakyan, and Cutkosky]{ahn25general}
Kwangjun Ahn, Gagik Magakyan, and Ashok Cutkosky.
\newblock General framework for online-to-nonconvex conversion: Schedule-free
  {SGD} is also effective for nonconvex optimization.
\newblock In \emph{Proceedings of the 42nd International Conference on Machine
  Learning}, volume 267, pages 772--795, 2025.

\bibitem[Amsel et~al.(2026)Amsel, Persson, Musco, and Gower]{amsel26polar}
Noah Amsel, David Persson, Christopher Musco, and Robert~M. Gower.
\newblock The {Polar} {Express}: Optimal matrix sign methods and their
  application to the {Muon} algorithm.
\newblock In \emph{International Conference on Learning Representations}, 2026.

\bibitem[Andersson et~al.(2016)Andersson, Carlsson, and
  Perfekt]{andersson16operator}
Fredrik Andersson, Marcus Carlsson, and Karl-Mikael Perfekt.
\newblock {Operator-Lipschitz} estimates for the singular value functional
  calculus.
\newblock \emph{Proceedings of the American Mathematical Society}, 144\penalty0
  (5):\penalty0 1867--1875, 2016.

\bibitem[Arjevani et~al.(2023)Arjevani, Carmon, Duchi, Foster, Srebro, and
  Woodworth]{arjevani23lower}
Yossi Arjevani, Yair Carmon, John~C. Duchi, Dylan~J. Foster, Nathan Srebro, and
  Blake Woodworth.
\newblock Lower bounds for non-convex stochastic optimization.
\newblock \emph{Mathematical Programming}, 199\penalty0 (1--2):\penalty0
  165--214, 2023.

\bibitem[Bernstein and Newhouse(2024)]{bernstein24old}
Jeremy Bernstein and Laker Newhouse.
\newblock Old optimizer, new norm: An anthology.
\newblock In \emph{OPT 2024: 16th Annual Workshop on Optimization for Machine
  Learning}, 2024.

\bibitem[Bernstein and Newhouse(2025)]{bernstein25modular}
Jeremy Bernstein and Laker Newhouse.
\newblock Modular duality in deep learning.
\newblock In \emph{Proceedings of the 42nd International Conference on Machine
  Learning}, volume 267, pages 3920--3930, 2025.

\bibitem[Bj{\"o}rck and Bowie(1971)]{bjorck71iterative}
{\AA}ke Bj{\"o}rck and C.~Bowie.
\newblock An iterative algorithm for computing the best estimate of an
  orthogonal matrix.
\newblock \emph{SIAM Journal on Numerical Analysis}, 8\penalty0 (2):\penalty0
  358--364, 1971.

\bibitem[Braun et~al.(2026)Braun, Bao, Huang, and Imaizumi]{braun26spectral}
Guillaume Braun, Han Bao, Wei Huang, and Masaaki Imaizumi.
\newblock Spectral gradient descent mitigates anisotropy-driven misalignment: A
  case study in phase retrieval.
\newblock In \emph{International Conference on Machine Learning}, 2026.

\bibitem[Carlson et~al.(2015)Carlson, Collins, Hsieh, Carin, and
  Cevher]{carlson15spectral}
David~E. Carlson, Edo Collins, Ya-Ping Hsieh, Lawrence Carin, and Volkan
  Cevher.
\newblock Preconditioned spectral descent for deep learning.
\newblock In \emph{Advances in Neural Information Processing Systems},
  volume~28, pages 2971--2979, 2015.

\bibitem[Carmon et~al.(2020)Carmon, Duchi, Hinder, and Sidford]{carmon20lower}
Yair Carmon, John~C. Duchi, Oliver Hinder, and Aaron Sidford.
\newblock Lower bounds for finding stationary points {I}.
\newblock \emph{Mathematical Programming}, 184:\penalty0 71--120, 2020.

\bibitem[Chen et~al.(2026)Chen, Li, and Liu]{chen26spectral}
Lizhang Chen, Jonathan Li, and Qiang Liu.
\newblock {Muon} optimizes under spectral norm constraints.
\newblock \emph{Transactions on Machine Learning Research}, 2026.

\bibitem[Choudhury et~al.(2026)Choudhury, Cheng, Tak{\'a}{\v c}, Na, and
  Kolar]{choudhury26nesterov}
Sayantan Choudhury, Xiaoran Cheng, Martin Tak{\'a}{\v c}, Sen Na, and Mladen
  Kolar.
\newblock {Muon} with {Nesterov} momentum: Heavy-tailed noise and (randomized)
  inexact polar decomposition.
\newblock \emph{arXiv preprint arXiv:2605.06884}, 2026.

\bibitem[Cutkosky et~al.(2023)Cutkosky, Mehta, and Orabona]{cutkosky23optimal}
Ashok Cutkosky, Harsh Mehta, and Francesco Orabona.
\newblock Optimal stochastic non-smooth non-convex optimization through
  online-to-non-convex conversion.
\newblock In \emph{Proceedings of the 40th International Conference on Machine
  Learning}, volume 202, pages 6643--6670, 2023.

\bibitem[Davis and Drusvyatskiy(2025)]{davis25spectral}
Damek Davis and Dmitriy Drusvyatskiy.
\newblock When do spectral gradient updates help in deep learning?
\newblock \emph{arXiv preprint arXiv:2512.04299}, 2025.

\bibitem[Dong and Sawin(2026)]{dong26fractional}
Yihe Dong and Will Sawin.
\newblock {Muon}$^p$: {Muon} with fractional spectral powers.
\newblock \emph{arXiv preprint arXiv:2606.13867}, 2026.

\bibitem[Fan et~al.(2025)Fan, Schmidt, and Thrampoulidis]{fan25implicit}
Chen Fan, Mark Schmidt, and Christos Thrampoulidis.
\newblock Implicit bias of spectral descent and {Muon} on multiclass separable
  data.
\newblock In \emph{Advances in Neural Information Processing Systems},
  volume~38, 2025.

\bibitem[Feoktistov et~al.(2026)Feoktistov, Belinsky, Veprikov, Zainullin, and
  Beznosikov]{feoktistov26softsign}
Dmitrii Feoktistov, Timofey Belinsky, Andrey Veprikov, Amir Zainullin, and
  Aleksandr Beznosikov.
\newblock Softsign: Smooth sign in your optimizer for better parameter
  heterogeneity handling.
\newblock In \emph{International Conference on Machine Learning}, 2026.

\bibitem[{GLM-4.5 Team}(2025)]{glm25glm}
{GLM-4.5 Team}.
\newblock {GLM-4.5}: Agentic, reasoning, and coding ({ARC}) foundation models.
\newblock \emph{arXiv preprint arXiv:2508.06471}, 2025.

\bibitem[Goldstein(1977)]{goldstein77optimization}
A.~A. Goldstein.
\newblock Optimization of {Lipschitz} continuous functions.
\newblock \emph{Mathematical Programming}, 13\penalty0 (1):\penalty0 14--22,
  1977.

\bibitem[Gonon et~al.(2026)Gonon, Mu{\c s}at, and Boumal]{gonon26insights}
Antoine Gonon, Andreea-Alexandra Mu{\c s}at, and Nicolas Boumal.
\newblock Insights on {Muon} from simple quadratics.
\newblock \emph{arXiv preprint arXiv:2602.11948}, 2026.

\bibitem[Ji and Yuan(2026)]{ji26derandomized}
Fanfan Ji and Xiaotong Yuan.
\newblock Derandomized online-to-non-convex conversion for stochastic weakly
  convex optimization.
\newblock In \emph{International Conference on Learning Representations}, 2026.

\bibitem[Jiang et~al.(2026{\natexlab{a}})Jiang, Mhammedi, Mohri, and
  Mokhtari]{jiang26adaptive}
Ruichen Jiang, Zakaria Mhammedi, Mehryar Mohri, and Aryan Mokhtari.
\newblock Adaptive matrix online learning through smoothing with guarantees for
  nonsmooth nonconvex optimization.
\newblock In \emph{Proceedings of Thirty Ninth Conference on Learning Theory},
  volume 336, pages 3782--3824, 2026{\natexlab{a}}.

\bibitem[Jiang et~al.(2026{\natexlab{b}})Jiang, Semenov, and
  Stich]{jiang26clipping}
Xiaowen Jiang, Andrei Semenov, and Sebastian~U. Stich.
\newblock Enhancing {LLM} training via spectral clipping.
\newblock In \emph{International Conference on Machine Learning},
  2026{\natexlab{b}}.

\bibitem[Jordan et~al.(2024)Jordan, Jin, Boza, You, Cesista, Newhouse, and
  Bernstein]{jordan2024muon}
Keller Jordan, Yuchen Jin, Vlado Boza, Jiacheng You, Franz Cesista, Laker
  Newhouse, and Jeremy Bernstein.
\newblock Muon: An optimizer for hidden layers in neural networks, 2024.
\newblock URL \url{https://kellerjordan.github.io/posts/muon/}.

\bibitem[Kim and Oh(2026)]{kim26convergence}
Gyu~Yeol Kim and Min-hwan Oh.
\newblock Convergence of {Muon} with {Newton--Schulz}.
\newblock In \emph{International Conference on Learning Representations}, 2026.

\bibitem[{Kimi Team}(2025)]{kimi25k2}
{Kimi Team}.
\newblock {Kimi} {K2}: Open agentic intelligence.
\newblock \emph{arXiv preprint arXiv:2507.20534}, 2025.

\bibitem[Kornowski and Shamir(2022)]{kornowski22oracle}
Guy Kornowski and Ohad Shamir.
\newblock Oracle complexity in nonsmooth nonconvex optimization.
\newblock \emph{Journal of Machine Learning Research}, 23\penalty0
  (314):\penalty0 1--44, 2022.

\bibitem[Kovalev(2025)]{kovalev25orthogonalization}
Dmitry Kovalev.
\newblock Understanding gradient orthogonalization for deep learning via
  non-{Euclidean} trust-region optimization.
\newblock \emph{arXiv preprint arXiv:2503.12645}, 2025.

\bibitem[Kovarik(1970)]{kovarik70iterative}
Zdislav Kovarik.
\newblock Some iterative methods for improving orthonormality.
\newblock \emph{SIAM Journal on Numerical Analysis}, 7\penalty0 (3):\penalty0
  386--389, 1970.

\bibitem[Lewis(1995)]{lewis95convex}
Adrian~S. Lewis.
\newblock The convex analysis of unitarily invariant matrix functions.
\newblock \emph{Journal of Convex Analysis}, 2\penalty0 (1--2):\penalty0
  173--183, 1995.

\bibitem[Li and Hong(2025)]{li25note}
Jiaxiang Li and Mingyi Hong.
\newblock A note on the convergence of {Muon}.
\newblock \emph{arXiv preprint arXiv:2502.02900}, 2025.

\bibitem[Li et~al.(2026)Li, Zhang, Liu, and Bao]{li26denoise}
Xianliang Li, Zihan Zhang, Weiyang Liu, and Han Bao.
\newblock Denoise first, orthogonalize later: Understanding momentum in {Muon}
  via spectral filtering.
\newblock \emph{arXiv preprint arXiv:2606.03899}, 2026.

\bibitem[Liu et~al.(2025)Liu, Su, Yao, Jiang, Lai, Du, Qin, Xu, Lu, Yan,
  et~al.]{liu25scalable}
Jingyuan Liu, Jianlin Su, Xingcheng Yao, Zhejun Jiang, Guokun Lai, Yulun Du,
  Yidao Qin, Weixin Xu, Enzhe Lu, Junjie Yan, et~al.
\newblock {Muon} is scalable for {LLM} training.
\newblock \emph{arXiv preprint arXiv:2502.16982}, 2025.

\bibitem[Liu et~al.(2024)Liu, Wang, and Zhang]{liu24highprobability}
Langqi Liu, Yibo Wang, and Lijun Zhang.
\newblock High-probability bound for non-smooth non-convex stochastic
  optimization with heavy tails.
\newblock In \emph{Proceedings of the 41st International Conference on Machine
  Learning}, volume 235, pages 32122--32138, 2024.

\bibitem[Liu(2026)]{liu26online}
Zijian Liu.
\newblock Online convex optimization with heavy tails: Old algorithms, new
  regrets, and applications.
\newblock In \emph{Proceedings of The 37th International Conference on
  Algorithmic Learning Theory}, volume 313, pages 1--47, 2026.

\bibitem[Loshchilov and Hutter(2019)]{loshchilov19decoupled}
Ilya Loshchilov and Frank Hutter.
\newblock Decoupled weight decay regularization.
\newblock In \emph{International Conference on Learning Representations}, 2019.

\bibitem[Ma et~al.(2026)Ma, Huang, Chi, and Chen]{ma26preconditioning}
Jianhao Ma, Yu~Huang, Yuejie Chi, and Yuxin Chen.
\newblock Preconditioning benefits of spectral orthogonalization in {Muon}.
\newblock \emph{arXiv preprint arXiv:2601.13474}, 2026.

\bibitem[McMahan(2017)]{mcmahan17survey}
H.~Brendan McMahan.
\newblock A survey of algorithms and analysis for adaptive online learning.
\newblock \emph{Journal of Machine Learning Research}, 18\penalty0
  (90):\penalty0 1--50, 2017.

\bibitem[Mustafi et~al.(2026)Mustafi, Mukherjee, and
  Sriperumbudur]{mustafi2026move}
Aratrika Mustafi, Soumya Mukherjee, and Bharath~K. Sriperumbudur.
\newblock Move on {Muon}: A {Hamiltonian} probability gradient flow perspective
  of {Muon} optimizer.
\newblock \emph{arXiv preprint arXiv:2605.23871}, 2026.

\bibitem[Parshakova et~al.(2026)Parshakova, Khaled, Crawshaw, Garrigos, and
  Gower]{parshakova26muon}
Tetiana Parshakova, Ahmed Khaled, Michael Crawshaw, Guillaume Garrigos, and
  Robert~M. Gower.
\newblock {Muon} does not converge on convex {Lipschitz} functions.
\newblock \emph{arXiv preprint arXiv:2605.08980}, 2026.

\bibitem[Patitucci et~al.(2026)Patitucci, Jiang, and
  Mokhtari]{patitucci26improving}
Francisco Patitucci, Ruichen Jiang, and Aryan Mokhtari.
\newblock Improving online-to-nonconvex conversion for smooth optimization via
  double optimism.
\newblock In \emph{International Conference on Learning Representations}, 2026.

\bibitem[Pethick et~al.(2025)Pethick, Xie, Antonakopoulos, Zhu, Silveti-Falls,
  and Cevher]{pethick25lmo}
Thomas Pethick, Wanyun Xie, Kimon Antonakopoulos, Zhenyu Zhu, Antonio
  Silveti-Falls, and Volkan Cevher.
\newblock Training deep learning models with norm-constrained {LMOs}.
\newblock In \emph{Proceedings of the 42nd International Conference on Machine
  Learning}, volume 267, pages 49069--49104, 2025.

\bibitem[Qi et~al.(2026)Qi, Chen, Ye, He, and Xiao]{qi26delving}
Xianbiao Qi, Marco Chen, Jiaquan Ye, Yelin He, and Rong Xiao.
\newblock Delving into {Muon} and beyond: Deep analysis and extensions.
\newblock In \emph{International Conference on Machine Learning}, 2026.

\bibitem[Riabinin et~al.(2026)Riabinin, Shulgin, Gruntkowska, and
  Richt{\'a}rik]{riabinin26gluon}
Artem Riabinin, Egor Shulgin, Kaja Gruntkowska, and Peter Richt{\'a}rik.
\newblock From {Muon} to {Gluon}: Bridging theory and practice of {LMO}-based
  optimizers for {LLMs}.
\newblock In \emph{International Conference on Machine Learning}, 2026.

\bibitem[Sfyraki and Wang(2026)]{sfyraki26lions}
Maria-Eleni Sfyraki and Jun-Kun Wang.
\newblock Lions and {Muons}: Optimization via stochastic {Frank--Wolfe} under
  heavy-tailed noise.
\newblock In \emph{International Conference on Machine Learning}, 2026.

\bibitem[Shalev-Shwartz(2012)]{shalevshwartz12online}
Shai Shalev-Shwartz.
\newblock Online learning and online convex optimization.
\newblock \emph{Foundations and Trends in Machine Learning}, 4\penalty0
  (2):\penalty0 107--194, 2012.

\bibitem[Shen et~al.(2026)Shen, Huang, Huang, Shen, and
  Zhang]{shen26convergence}
Wei Shen, Ruichuan Huang, Minhui Huang, Cong Shen, and Jiawei Zhang.
\newblock On the convergence analysis of {Muon}.
\newblock \emph{Transactions on Machine Learning Research}, 2026.

\bibitem[Shulgin et~al.(2026{\natexlab{a}})Shulgin, Alrashed, Orabona, and
  Richt{\'a}rik]{shulgin26inexact}
Egor Shulgin, Sultan Alrashed, Francesco Orabona, and Peter Richt{\'a}rik.
\newblock Beyond the ideal: Analyzing the inexact {Muon} update.
\newblock In \emph{International Conference on Artificial Intelligence and
  Statistics}, volume 300, 2026{\natexlab{a}}.

\bibitem[Shulgin et~al.(2026{\natexlab{b}})Shulgin, Laing, Orvieto, and
  Richt{\'a}rik]{shulgin26quadratic}
Egor Shulgin, Sam Laing, Antonio Orvieto, and Peter Richt{\'a}rik.
\newblock A quadratic lens on {Muon}: Orthogonalization, invariance, and
  implicit preconditioning.
\newblock In \emph{HiLD 2026: 4th Workshop on High-dimensional Learning
  Dynamics}, 2026{\natexlab{b}}.

\bibitem[Tsuchiya et~al.(2023)Tsuchiya, Ito, and Honda]{tsuchiya23stability}
Taira Tsuchiya, Shinji Ito, and Junya Honda.
\newblock Stability-penalty-adaptive follow-the-regularized-leader: Sparsity,
  game-dependency, and best-of-both-worlds.
\newblock In \emph{Advances in Neural Information Processing Systems},
  volume~36, pages 47406--47437, 2023.

\bibitem[Wen et~al.(2026)Wen, Hall, Ma, and Liang]{wen26fantastic}
Kaiyue Wen, David Hall, Tengyu Ma, and Percy Liang.
\newblock Fantastic pretraining optimizers and where to find them.
\newblock In \emph{International Conference on Learning Representations}, 2026.

\bibitem[Wu et~al.(2026)Wu, Shah, Silwal, and Zhang]{wu26dynmuon}
Fangzhou Wu, Rikhav Shah, Sandeep Silwal, and Qiuyi Zhang.
\newblock {DynMuon}: A dynamic spectral shaping view of {Muon}.
\newblock In \emph{HiLD 2026: 4th Workshop on High-dimensional Learning
  Dynamics}, 2026.

\bibitem[Zhang et~al.(2020)Zhang, Lin, Jegelka, Sra, and
  Jadbabaie]{zhang20complexity}
Jingzhao Zhang, Hongzhou Lin, Stefanie Jegelka, Suvrit Sra, and Ali Jadbabaie.
\newblock Complexity of finding stationary points of nonconvex nonsmooth
  functions.
\newblock In \emph{Proceedings of the 37th International Conference on Machine
  Learning}, volume 119, pages 11173--11182, 2020.

\bibitem[Zhang and Cutkosky(2024)]{zhang24random}
Qinzi Zhang and Ashok Cutkosky.
\newblock Random scaling and momentum for non-smooth non-convex optimization.
\newblock In \emph{Proceedings of the 41st International Conference on Machine
  Learning}, volume 235, pages 58780--58799, 2024.

\bibitem[Zimmert and Seldin(2021)]{zimmert21tsallis}
Julian Zimmert and Yevgeny Seldin.
\newblock {Tsallis-INF}: An optimal algorithm for stochastic and adversarial
  bandits.
\newblock \emph{Journal of Machine Learning Research}, 22\penalty0
  (28):\penalty0 1--49, 2021.

\end{thebibliography}

\crefalias{section}{appendix}
\crefalias{subsection}{appendix}
\crefalias{subsubsection}{appendix}

\newpage

\appendix

\section{Comparison with prior convergence guarantees}
\label{app:comparison}

This appendix compares the guarantees of this work with those of prior work on Muon and related methods.

\begin{table}[htbp]
\centering\footnotesize
\renewcommand{\arraystretch}{1.25}\setlength{\tabcolsep}{3.5pt}
\caption{Gradient-oracle complexity for nonsmooth nonconvex objectives under the $(\rho,\epsilon)$-stationarity criterion of~\cref{def:rho-stationarity}.
Here $r=m$, $\nu=(n/(n-r-1))^{1/4}$, and NS abbreviates Newton--Schulz.
The bounds for Pion and Leon are stated for sufficiently small $\epsilon$.}
\label{tab:nonsmooth-comparison}
\begin{tabular}{llll}
\toprule
Reference & Smoothing & Polar computation & Gradient-oracle complexity \\
\midrule
Exact-polar Muon
& None
& Exact polar
& Not covered by discounted O2NC \\
\addlinespace
  Pion~\citep{jiang26adaptive}
& Gaussian perturbation
& Exact polar
& $\displaystyle O\left(\max\left\{\frac{\nu^2\lVert Q\rVert_*^2\Delta_{\mathcal L}}{\rho\epsilon^3},\frac{\nu^2\lVert Q\rVert_*^2}{\epsilon^2},\frac{\nu rG}{\epsilon}\right\}\right)$ \\
\addlinespace
Leon~\citep{jiang26adaptive}
& Hyperbolic smoothing
& Exact polar
& $\displaystyle O\left(\max\left\{\frac{\lVert Q\rVert_*^2\Delta_{\mathcal L}}{\rho\epsilon^3},\frac{\lVert Q\rVert_*^2}{\epsilon^2},\frac{rG}{\epsilon}\right\}\right)$ \\
\midrule
  \textbf{This work} (\cref{thm:ns-complexity})
& Finite NS
& $q=O(\log(1/\epsilon))$ NS steps
& $\displaystyle O\left(\max\left\{\frac{r(\Gamma+\sigma)^2\Delta_{\mathcal L}}{\rho\epsilon^3},\frac{r(\Gamma+\sigma)^2}{\epsilon^2}\right\}\right)$ \\
\bottomrule
\end{tabular}
\end{table}

\begin{table}[htbp]
\centering\footnotesize
\renewcommand{\arraystretch}{1.25}\setlength{\tabcolsep}{3.5pt}
\caption{Gradient-oracle complexity for finding $\epsilon$-stationary points of smooth nonconvex objectives.
The results are specialized to smoothness in the operator norm and stationarity in the nuclear norm.
The ``Deterministic'' and ``Stochastic'' columns give the complexity with exact gradients and with unbiased stochastic gradients of variance at most $\sigma^2$, respectively, and the bounds for the stochastic setting assume one sample per oracle query.
For~\citet{shulgin26inexact}, $\delta<1$ is a uniform additive error of the inexact linear minimization oracle.
For~\citet{kim26convergence}, $\chi_q=(1-\epsilon_q)^{-1}$, where $\epsilon_q$ is the uniform operator-norm error of the $q$-step Newton--Schulz output relative to the polar factor, and under their uniform condition on the initial residual, $\chi_q$ converges to one doubly exponentially in $q$.
NS abbreviates Newton--Schulz.}
\label{tab:smooth-comparison}
\begin{tabular}{llll}
\toprule
Reference & Deterministic & Stochastic & Polar computation \\
\midrule
\citet{kovalev25orthogonalization}
& $O\left(\displaystyle \frac{L_{\mathrm{op}}\Delta_{\mathcal L}}{\epsilon^2}\right)$
& $\displaystyle O\left(\max\left\{\frac{L_{\mathrm{op}}\Delta_{\mathcal L}}{\epsilon^2},\frac{\sqrt r\sigma}{\epsilon},\frac{r^{3/2}\sigma^3}{\epsilon^3},\frac{r\sigma^2L_{\mathrm{op}}\Delta_{\mathcal L}}{\epsilon^4}\right\}\right)$
& Exact polar \\
\addlinespace
\citet{shen26convergence}
& $\displaystyle O\left(\frac{L_{\mathrm{op}}\Delta_{\mathcal L}}{\epsilon^2}\right)$
& $\displaystyle O\left(\max\left\{\frac{L_{\mathrm{op}}\Delta_{\mathcal L}}{\epsilon^2},\frac{r^2\sigma^4}{L_{\mathrm{op}}\Delta_{\mathcal L}\epsilon^2},\frac{r\sigma^2L_{\mathrm{op}}\Delta_{\mathcal L}}{\epsilon^4}\right\}\right)$
& Exact polar \\
\addlinespace
\citet{shulgin26inexact}
& $\displaystyle O\left(\frac{(1+\delta)^2L_{\mathrm{op}}\Delta_{\mathcal L}}{(1-\delta)^2\epsilon^2}\right)$
& $\displaystyle O\left(\frac{(1+\delta)r\sigma^2L_{\mathrm{op}}\Delta_{\mathcal L}}{(1-\delta)^4\epsilon^4}\right)$
& Inexact LMO \\
\addlinespace
\citet{kim26convergence}
& $\displaystyle O\left(\frac{\chi_q^2L_{\mathrm{op}}\Delta_{\mathcal L}}{\epsilon^2}\right)$
& $\displaystyle O\left(\max\left\{\frac{\chi_q^2L_{\mathrm{op}}\Delta_{\mathcal L}}{\epsilon^2},\frac{\chi_q^2r^2\sigma^2}{\epsilon^2},\frac{\chi_q^4r\sigma^2L_{\mathrm{op}}\Delta_{\mathcal L}}{\epsilon^4}\right\}\right)$
& Finite NS \\
\midrule
  \textbf{This work}~(\cref{thm:smooth-stationarity})
& $\displaystyle O\left(\frac{L_{\mathrm{op}}\Delta_{\mathcal L}}{\epsilon^2}\right)$
& $\displaystyle O\left(\max\left\{\frac{L_{\mathrm{op}}\Delta_{\mathcal L}}{\epsilon^2},\frac{r\sigma^2}{\epsilon^2},\frac{r\sigma^2L_{\mathrm{op}}\Delta_{\mathcal L}}{\epsilon^4}\right\}\right)$
& Finite NS \\
\bottomrule
\end{tabular}
\end{table}

In~\cref{tab:nonsmooth-comparison}, we summarize the gradient-oracle complexity and the treatment of the polar computation for nonsmooth nonconvex objectives.
The leading $\rho^{-1}\epsilon^{-3}$ term in the bound of this work matches the bounds for Pion and Leon.
The bound for Pion assumes an exact expectation of polar factors, and the Monte Carlo implementation of Pion uses $k=T$ samples per round to preserve this bound.
For exact-polar Muon, the discounted O2NC framework alone does not provide a general stationarity guarantee for nonsmooth objectives, as discussed in~\cref{sec:o2nc}.
For Pion and Leon, the matrix $Q\succeq0$ and the constant $G>0$ satisfy $\E[G_tG_t^\top]\preceq Q^2$ and $\lVert G_t\rVert_{\mathrm{op}}\leq G$ almost surely.
The quantities $\lVert Q\rVert_*^2$ and $r(\Gamma+\sigma)^2$ reflect different oracle assumptions and are not directly comparable.

In~\cref{tab:smooth-comparison}, we summarize the gradient-oracle complexity and the treatment of the polar computation for smooth nonconvex objectives.
The leading terms of this work coincide with those established for exact-polar Muon~\citep{kovalev25orthogonalization,shen26convergence}.
Compared with~\citet{kim26convergence}, the bounds of this work involve no factor $\chi_q$, and the middle term improves from $r^2\sigma^2/\epsilon^2$ to $r\sigma^2/\epsilon^2$, though the algorithms under analysis are slightly different.
The output criteria follow the respective references: an expected minimum for~\citet{kovalev25orthogonalization}, an average expected gradient norm for the other prior works, and the randomized output in~\cref{thm:smooth-stationarity} for this work.
The bound for~\citet{shulgin26inexact} in the stochastic setting specializes their norm-compatibility factor to the nuclear norm, whose square is $r$.
Their guarantee for the inexact LMO requires a uniform error $\delta<1$, which is not automatically available for a fixed-depth polar approximation near rank deficiency.
This work additionally assumes an almost-sure operator-norm bound and a Frobenius second-moment bound for the stochastic gradients.

\section{Auxiliary lemmas on singular-value functions}
\label{app:auxiliary-singular-values}

For ease of reference, this appendix collects the standard singular-value lemmas used in this paper.
Some of them are stated in the special forms in which we use them.

The following three lemmas specialize results of \citet{lewis95convex} to separable functions of the singular values.

\begin{lemma}[Convexity of separable singular-value functions]
\label{lem:aux-sv-convexity}
Let $\phi\colon[0,\infty)\to\R$ be convex and nondecreasing.
Then $S\mapsto\sum_{i=1}^r\phi(\sigma_i(S))$ is convex on $\R^{m\times n}$.
\end{lemma}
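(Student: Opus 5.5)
The plan is to reduce the matrix statement to Lewis's theory of unitarily invariant functions, which the text above says this lemma specializes. Define the symmetric gauge-type function on $\R^r$ by
\[
g(x)\coloneqq\sum_{i=1}^r\phi(\lvert x_i\rvert).
\]
Then the map in question is $S\mapsto g(\sigma(S))$. Lewis (1995) shows that if $g$ is absolutely symmetric (invariant under permutations and sign changes of coordinates) and convex, then $g\circ\sigma$ is convex on $\R^{m\times n}$. Absolute symmetry of $g$ is immediate from its form, so the work is to check convexity of $g$ and then invoke Lewis.

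For convexity of $g$, it suffices to treat a single coordinate, since a sum of convex functions is convex. The coordinate function is $x\mapsto\phi(\lvert x\rvert)$ on $\R$. The absolute value is convex, and $\phi$ is convex and nondecreasing on $[0,\infty)$. A nondecreasing convex function composed with a convex function is convex:
\[
\phi\bigl(\lvert\lambda x+(1-\lambda)y\rvert\bigr)\leq\phi\bigl(\lambda\lvert x\rvert+(1-\lambda)\lvert y\rvert\bigr)\leq\lambda\phi(\lvert x\rvert)+(1-\lambda)\phi(\lvert y\rvert).
\]
The first inequality uses the triangle inequality and monotonicity of $\phi$; the second uses convexity of $\phi$. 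This is the only place monotonicity is needed, and it is essential: without it, $\phi(\lvert\cdot\rvert)$ can fail to be convex, as $\phi(u)=(u-1)^2$ shows.

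As a backup, in case a self-contained argument is preferred over citing Lewis, I would use a sup representation. By Fenchel duality, $\phi(u)=\sup_a\{au-\phi^*(a)\}$ for $u\geq0$, with the supremum over $a\geq0$ because $\phi$ is nondecreasing (this needs $\phi$ closed, which a finite convex function on $[0,\infty)$ can be assumed to be after lower semicontinuous regularization at $0$). Then apply von Neumann's trace inequality,
\[
\sum_i a_i\sigma_i(S)=\max_{U,V}\inpr{U\diag(a)V^\top,S}
\]
for sorted nonnegative $a$. Together these write $\sum_i\phi(\sigma_i(S))$ as a supremum of affine functions of $S$, which is convex. The sorting of $a$ is handled by the rearrangement inequality, because both $\sigma$ and the optimal $a$ can be taken in decreasing order.

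I expect the main obstacle to be only the careful invocation of Lewis's theorem, namely stating absolute symmetry correctly for rectangular matrices with $r=m\leq n$. Given that, the proof is short.
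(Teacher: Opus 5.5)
Your proposal is correct and follows the paper's proof. Both define the absolutely symmetric function $\sum_{i}\phi(\lvert x_i\rvert)$, obtain convexity of $t\mapsto\phi(\lvert t\rvert)$ from the convexity and monotonicity of $\phi$, and then invoke Lewis (1995, Corollary~2.6). One small detail: that corollary is stated for lower semicontinuous functions, and this holds automatically here because your $g$ is a finite convex function on all of $\R^r$ and hence continuous.
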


\begin{proof}
Define $f\colon\R^r\to\R$ by $f(\gamma)\coloneqq\sum_{i=1}^r\phi(|\gamma_i|)$.
Since $\phi$ is convex and nondecreasing, the map $t\mapsto\phi(|t|)$ is convex.
Thus, $f$ is convex, lower semicontinuous, and absolutely symmetric.
Applying \citet[Corollary~2.6]{lewis95convex} to $f$ shows that the function
$S\mapsto(f\circ\sigma)(S)=\sum_{i=1}^r\phi(\sigma_i(S))$ is convex, which completes the proof.
\end{proof}

The next lemma gives the corresponding gradient formula.
\begin{lemma}[{Singular-value gradient formula}]
\label{lem:aux-sv-gradient}
Let $\phi\colon[0,\infty)\to\R$ be convex and differentiable with $\phi'(0)=0$, and define $F(S)\coloneqq\sum_{i=1}^r\phi(\sigma_i(S))$.
Then $F$ is differentiable on all of $\R^{m\times n}$, and for any thin singular value decomposition $S=U\diag(\sigma_1(S),\ldots,\sigma_r(S))V^\top$,
\[
  \nabla F(S)
  =U\diag\left(\phi'(\sigma_1(S)),\ldots,\phi'(\sigma_r(S))\right)V^\top.
\]
\end{lemma}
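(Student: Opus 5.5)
The plan is to deduce the formula from Lewis's subdifferential characterization of unitarily invariant convex functions, in the same way \cref{lem:aux-sv-convexity} was deduced from his convexity result. As there, set $f\colon\R^r\to\R$, $f(\gamma)\coloneqq\sum_{i=1}^r\phi(|\gamma_i|)$, so that $F=f\circ\sigma$.

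The first step is to check the hypotheses on $f$.

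\begin{itemize}
\item Since $\phi$ is convex and differentiable on $[0,\infty)$, its derivative $\phi'$ is nondecreasing. Together with $\phi'(0)=0$ this gives $\phi'\geq0$, so $\phi$ is also nondecreasing.
\item Hence $f$ is convex, finite, and absolutely symmetric, exactly as in \cref{lem:aux-sv-convexity}, and $F$ is convex.
\item The scalar map $g(t)\coloneqq\phi(|t|)$ is differentiable on all of $\R$. For $t\neq0$ we have $g'(t)=\operatorname{sign}(t)\,\phi'(|t|)$.
\item At $t=0$, the right derivative of $g$ is $\phi'(0)=0$. The left derivative is $-\phi'(0)=0$. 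This is the one place where the hypothesis $\phi'(0)=0$ is essential: without it $g$ would have a kink at the origin.
\item Therefore $f$ is differentiable everywhere. At any $\gamma\geq0$ (componentwise) its gradient is $\nabla f(\gamma)=(\phi'(\gamma_1),\ldots,\phi'(\gamma_r))$.
\end{itemize}

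The second step invokes \citet{lewis95convex}: for a convex absolutely symmetric $f$ and $S\in\R^{m\times n}$,
\[
\partial(f\circ\sigma)(S)=\{U\diag(\mu)V^\top:\ \mu\in\partial f(\sigma(S)),\ S=U\diag(\sigma(S))V^\top\text{ a thin SVD}\}.
\]
Lewis states this for rectangular matrices via the singular values. If only the square version were available, I would reduce to it by padding $S$ with zero rows.

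Since $\partial f(\sigma(S))=\{\nabla f(\sigma(S))\}$ is a singleton, every element of $\partial F(S)$ has the form $U\diag(\phi'(\sigma_i(S)))V^\top$. This matrix does not depend on the chosen thin SVD, by the argument given after \cref{eq:singular-value-map}, which relies on $\phi'(0)=0$ to neutralize the freedom in the singular vectors attached to zero singular values.

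Hence $\partial F(S)$ is a singleton. A finite convex function on $\R^{m\times n}$ with a singleton subdifferential at a point is differentiable there, with gradient equal to that element. This yields the formula at every $S$.

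I expect the main obstacle to be bookkeeping rather than analysis. Specifically, one must make sure that:
\begin{itemize}
\item the cited Lewis characterization applies to rectangular $m\times n$ matrices with $m\leq n$;
\item the nonuniqueness of singular vectors, both for repeated singular values and for zero singular values, does not make the subdifferential larger.
\end{itemize}
The zero-singular-value case is exactly where $\phi'(0)=0$ enters a second time. If one also wants continuity of $\nabla F$, as used in \cref{lem:spectral-calculus}, it follows for free, because a differentiable convex function on a finite-dimensional space is automatically continuously differentiable.
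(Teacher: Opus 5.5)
Your proposal is correct and takes the same route as the paper: the auxiliary function $f(\gamma)=\sum_{i}\phi(|\gamma_i|)$, the check that $\phi'(0)=0$ makes $f$ convex, absolutely symmetric and differentiable with $\nabla f(\sigma(S))=(\phi'(\sigma_i(S)))_i$, and an appeal to \citet{lewis95convex}. The paper simply cites Lewis's Theorem~3.1 for the differentiability conclusion, whereas you derive it from Lewis's subdifferential characterization via the singleton subdifferential and the independence of the chosen SVD.
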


\begin{proof}
Define $f\colon\R^r\to\R$ by $f(\gamma)\coloneqq\sum_{i=1}^r\phi(|\gamma_i|)$.
Since $\phi$ is convex and $\phi'(0)=0$, it is nondecreasing, and hence $f$ is convex and absolutely symmetric.
The condition $\phi'(0)=0$ also makes $t\mapsto\phi(|t|)$ differentiable at $0$, so $f$ is differentiable on $\R^r$ and
$\nabla f(\sigma(S))=(\phi'(\sigma_1(S)),\ldots,\phi'(\sigma_r(S)))$.
Applying \citet[Theorem~3.1]{lewis95convex} to $f$ shows that $F=f\circ\sigma$ is differentiable and has the above gradient, which completes the proof.
\end{proof}

The following lemma gives the Fenchel conjugate used in the FTRL interpretation.
\begin{lemma}[{Fenchel conjugate formula for singular-value functions}]
\label{lem:aux-conjugacy}
Let $\phi\colon[0,\infty)\to(-\infty,+\infty]$ be proper and convex, and write $\phi^*(a)\coloneqq\sup_{x\geq0}\{ax-\phi(x)\}$.
Then the Fenchel conjugate of $S\mapsto\sum_{i=1}^r\phi(\sigma_i(S))$ is $W\mapsto\sum_{i=1}^r\phi^*(\sigma_i(W))$.
\end{lemma}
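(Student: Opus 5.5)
We reduce to the classical separable case via Lewis's theorem. Write $F(S)\coloneqq\sum_{i=1}^r\phi(\sigma_i(S))$ and define the absolutely symmetric function $f\colon\R^r\to(-\infty,+\infty]$ by $f(\gamma)\coloneqq\sum_{i=1}^r\phi(|\gamma_i|)$, so that $F=f\circ\sigma$.

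The key external result is \citet[Theorem~2.3]{lewis95convex}. It states that for any absolutely symmetric $f$, the Fenchel conjugate of $f\circ\sigma$ (taken with respect to the Frobenius inner product) equals $f^*\circ\sigma$. Here $f^*$ is the conjugate on $\R^r$ with respect to the Euclidean inner product, and it is again absolutely symmetric. This step rests on von Neumann's trace inequality, $\inpr{W,S}\leq\sum_i\sigma_i(W)\sigma_i(S)$, with equality for simultaneous SVDs. Granting that theorem, it remains to compute $f^*$.

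By separability, $f^*(w)=\sum_{i=1}^r g^*(w_i)$, where $g(t)\coloneqq\phi(|t|)$ on $\R$. For $a\in\R$,
\[
g^*(a)=\sup_{t\in\R}\{at-\phi(|t|)\}=\sup_{x\geq0}\{|a|x-\phi(x)\}=\phi^*(|a|).
\]
The middle equality holds because, for fixed $|t|=x$, the best sign of $t$ gives $|a|x$. Thus $f^*(w)=\sum_i\phi^*(|w_i|)$. Evaluating at $w=\sigma(W)\geq0$ gives $\sum_i\phi^*(\sigma_i(W))$, which is the claim.

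The main obstacle is checking the hypotheses under which Lewis's conjugacy formula applies. We need $f$ to be absolutely symmetric, which is clear from its form, and we need the correct conjugate convention. If one prefers a self-contained argument, we would prove the two inequalities directly:
\begin{itemize}[leftmargin=*]
\item \emph{Upper bound.} By von Neumann, $\inpr{W,S}-F(S)\leq\sum_i\bigl(\sigma_i(W)\sigma_i(S)-\phi(\sigma_i(S))\bigr)\leq\sum_i\phi^*(\sigma_i(W))$.
\item \emph{Lower bound.} Take $S=U\diag(x_i)V^\top$, where $W=U\diag(\sigma_i(W))V^\top$ and the $x_i\geq0$ are arbitrary, and maximize over each $x_i$ separately. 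The ordering constraint on singular values is harmless here, because $\sigma(S)$ is just a permutation of $(x_i)$ and $F$ is permutation-invariant.
\end{itemize}
Properness and convexity of $\phi$ are not even needed for this equality; they only ensure that the conjugates are meaningful.
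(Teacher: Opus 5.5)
Your proof is correct and takes the same route as the paper. You write $F=f\circ\sigma$ with the absolutely symmetric $f(\gamma)=\sum_i\phi(\lvert\gamma_i\rvert)$, invoke Lewis's conjugacy formula $(f\circ\sigma)^*=f^*\circ\sigma$ (which the paper cites as Theorem~2.4 of \citet{lewis95convex}), and compute $f^*$ coordinatewise via $\sup_{t\in\R}\{at-\phi(\lvert t\rvert)\}=\phi^*(\lvert a\rvert)$. Your optional self-contained argument is also valid: the upper bound comes from von Neumann's trace inequality, and the lower bound is attained by $S=U\diag(x_i)V^\top$ sharing the singular vectors of $W$, which reproves exactly the special case of Lewis's theorem needed here.
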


\begin{proof}
Define $f\colon\R^r\to(-\infty,+\infty]$ and
$F\colon\R^{m\times n}\to(-\infty,+\infty]$ by
$f(\gamma)\coloneqq\sum_{i=1}^r\phi(|\gamma_i|)$ and
$F(S)\coloneqq\sum_{i=1}^r\phi(\sigma_i(S))$.
The function $f$ is invariant under sign changes and coordinate permutations, and $F=f\circ\sigma$.
Hence, \citet[Theorem~2.4]{lewis95convex} gives $F^*=f^*\circ\sigma$.
For every $\mu\in\R^r$,
\begin{align*}
  f^*(\mu)
  &=
  \sup_{\gamma\in\R^r}
  \sum_{i=1}^r\{\mu_i\gamma_i-\phi(|\gamma_i|)\}
  =
  \sum_{i=1}^r
  \sup_{t\in\R}\{\mu_i t-\phi(|t|)\}\\
  &=
  \sum_{i=1}^r
  \sup_{x\geq0}\{|\mu_i|x-\phi(x)\}
  =
  \sum_{i=1}^r\phi^*(|\mu_i|).
\end{align*}
Since $\sigma_i(W)\geq0$, we obtain $F^*(W)=f^*(\sigma(W))=\sum_{i=1}^r\phi^*(\sigma_i(W))$, which completes the proof.
\end{proof}

The following lemma for singular-value maps is used to control the Lipschitz constant of the gradient of the spectral potential.
\begin{lemma}[{Lipschitz bound for singular-value maps, \citealp[Proposition~4.1 and Theorem~4.2]{andersson16operator}}]
\label{lem:aux-sv-lipschitz}
Let $h\colon[0,\infty)\to\R$ satisfy $h(0)=0$ and be $\Lip(h)$-Lipschitz.
Then the singular-value map $\calH_h$ of~\cref{eq:singular-value-map} is well defined and, for all $X,Y\in\R^{m\times n}$,
\[
  \lVert\calH_h(X)-\calH_h(Y)\rVert_{\mathrm F}
  \leq \Lip(h) \lVert X-Y\rVert_{\mathrm F}.
\]
\end{lemma}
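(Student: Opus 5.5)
The plan is to reduce the rectangular singular-value calculus to the symmetric eigenvalue calculus via the Jordan--Wielandt dilation, and then to use the classical fact that functional calculus is Frobenius-Lipschitz with the scalar Lipschitz constant. Well-definedness of $\calH_h$ was already argued after~\cref{eq:singular-value-map} using $h(0)=0$, so only the Lipschitz estimate needs proof.

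\emph{Step 1: odd extension.} Define $g\colon\R\to\R$ by $g(\lambda)\coloneqq\operatorname{sign}(\lambda)\,h(|\lambda|)$, with $g(0)=0$. I first check that $\Lip(g)\le\Lip(h)$. If $\lambda,\mu$ have the same sign, this is immediate. If $\lambda>0>\mu$, then using $h(0)=0$,
\[
|g(\lambda)-g(\mu)|\le|h(\lambda)|+|h(|\mu|)|\le\Lip(h)(\lambda+|\mu|)=\Lip(h)(\lambda-\mu).
\]

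\emph{Step 2: dilation.} For $X\in\R^{m\times n}$, set
\[
J(X)\coloneqq\begin{pmatrix}0&X\\X^\top&0\end{pmatrix}.
\]
From a full SVD of $X$, the matrix $J(X)$ has orthonormal eigenvectors $(u_i,\pm v_i)/\sqrt2$ with eigenvalues $\pm\sigma_i(X)$, together with eigenvalue $0$ on the remaining directions. Since $g$ is odd and $g(0)=0$, applying $g$ spectrally gives $g(J(X))=J(\calH_h(X))$. Moreover, $\lVert J(Z)\rVert_{\mathrm F}=\sqrt2\,\lVert Z\rVert_{\mathrm F}$ for every $Z$. It therefore suffices to prove
\[
\lVert g(A)-g(B)\rVert_{\mathrm F}\le\Lip(g)\,\lVert A-B\rVert_{\mathrm F}
\]
for symmetric $A,B$; the factors $\sqrt2$ cancel.

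\emph{Step 3: symmetric case.} Write $A=\sum_i a_i u_iu_i^\top$ and $B=\sum_j b_j v_jv_j^\top$ with orthonormal bases $(u_i)$ and $(v_j)$. For any symmetric $Z$, expanding in these bases gives $\lVert Z\rVert_{\mathrm F}^2=\sum_{i,j}(u_i^\top Zv_j)^2$. Applying this to $Z=g(A)-g(B)$, and using $u_i^\top(g(A)-g(B))v_j=(g(a_i)-g(b_j))\,u_i^\top v_j$, yields
\[
\lVert g(A)-g(B)\rVert_{\mathrm F}^2=\sum_{i,j}(g(a_i)-g(b_j))^2(u_i^\top v_j)^2\le\Lip(g)^2\sum_{i,j}(a_i-b_j)^2(u_i^\top v_j)^2=\Lip(g)^2\lVert A-B\rVert_{\mathrm F}^2 .
\]
Combining Steps~1--3 gives the claim.

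The main obstacle is the bookkeeping in Step~2: verifying that $g(J(X))=J(\calH_h(X))$ exactly, including the zero-eigenvalue block of dimension $n-m$ and the case of repeated singular values. This is handled by working with an explicit eigenbasis built from a full SVD together with $h(0)=0$. The rest is routine.
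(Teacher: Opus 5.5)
Your proof is correct. It differs from the paper mainly in that the paper gives no argument of its own: it imports both the well-definedness and the Frobenius estimate from Andersson, Carlsson, and Perfekt (Proposition~4.1 and Theorem~4.2), who work with the singular value functional calculus for operators on Hilbert spaces.

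Your route is a self-contained finite-dimensional reduction in three steps:
\begin{itemize}
\item The odd extension $g$ satisfies $\Lip(g)\le\Lip(h)$. The hypothesis $h(0)=0$ is used exactly in the opposite-sign case.
\item The Jordan--Wielandt dilation turns $\calH_h$ into ordinary symmetric functional calculus.
\item The double-basis identity $\lVert g(A)-g(B)\rVert_{\mathrm F}^2=\sum_{i,j}(g(a_i)-g(b_j))^2(u_i^\top v_j)^2$ then gives the constant $\Lip(h)$ directly.
\end{itemize}

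Two points are worth making explicit.

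First, the identity $g(J(X))=J(\calH_h(X))$ relies on $h$ being real-valued. The lower-left block is $\sum_i h(\sigma_i)v_iu_i^\top=\calH_h(X)^\top$, so both off-diagonal blocks contribute equally and the two factors $\sqrt2$ cancel exactly. This is precisely the setting of the statement, where $h$ maps into $\R$.

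Second, $g(J(X))$ does not depend on the choice of orthonormal eigenbasis. You can run your Step~2 with an arbitrary thin SVD, completed to a full one. This re-derives the well-definedness of $\calH_h$ rather than relying on the paper's earlier remark.

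What the citation buys is generality, covering infinite-dimensional operators, at the cost of a black box. What your argument buys is a short, fully checkable proof in exactly the matrix setting the paper uses.
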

\section{Proof of~\Cref{lem:jiang-o2nc}}
\label{app:o2nc-proof}

This appendix provides the proof of~\Cref{lem:jiang-o2nc}.

\begin{proof}
Let $\mathcal F_0$ be the $\sigma$-algebra generated by the learner's initialization and its internal randomness, and, for $t\geq1$, let $\mathcal F_t$ be the $\sigma$-algebra generated by $\mathcal F_0$ and all randomness through round $t$.
By construction, $W_t$ and $X_t$ are $\mathcal F_{t-1}$-measurable.
Let $E_t\coloneqq G_t-\nabla\mathcal L(\widetilde W_t)$ denote the oracle noise.
Specializing the proof of Proposition~25 of~\citet{jiang26adaptive} to the operator and nuclear norms and using $\mathcal L(W_T)\geq\inf_W\mathcal L(W)$ gives
\begin{align*}
  \E\brk*{\lVert\nabla\mathcal L(\bar W_\tau)\rVert_*^{[\rho]}}
  &\leq
  \frac{4\Delta_{\mathcal L}}{(1-\beta)\rho T}
  +\frac1T\E\left[
      \DReg_T(1)+(1-\beta)\sum_{t=1}^{T-1}\DReg_t(1)
    \right]\\
  &\qquad
  +\frac1T\E\brk*{\left\lVert\sum_{t=1}^T\beta^{T-t}E_t\right\rVert_*}
  +\frac{1-\beta}{T}\sum_{t=1}^{T-1}
    \E\brk*{\left\lVert\sum_{s=1}^t\beta^{t-s}E_s\right\rVert_*}.
\end{align*}
Each $E_t$ is $\mathcal F_t$-measurable.
Conditional on $\mathcal F_{t-1}$ and $u_t$, \cref{eq:oracle-bounds} of~\cref{ass:oracle} gives $\E[E_t\mid\mathcal F_{t-1},u_t]=0$ and $\E[\lVert E_t\rVert_{\mathrm{F}}^2\mid\mathcal F_{t-1},u_t]\leq\sigma^2$.
The tower property therefore gives $\E[E_t\mid\mathcal F_{t-1}]=0$ and $\E[\lVert E_t\rVert_{\mathrm{F}}^2]\leq\sigma^2$, so $(E_t)_{t\geq1}$ is a matrix-valued martingale difference sequence.
Consequently, the conditional orthogonality of the martingale differences gives, for every $t\geq1$,
\[
  \E\brk*{\left\lVert\sum_{s=1}^t\beta^{t-s}E_s\right\rVert_{\mathrm{F}}^2}
  =\sum_{s=1}^t\beta^{2(t-s)}\E\brk*{\lVert E_s\rVert_{\mathrm{F}}^2}
  \leq\sigma^2\frac{1-\beta^{2t}}{1-\beta^2}
  \leq\frac{\sigma^2}{1-\beta^2}.
\]
The inequality $\lVert\cdot\rVert_*\leq\sqrt r\,\lVert\cdot\rVert_{\mathrm{F}}$ and Jensen's inequality give, for every $t\geq1$,
\[
  \E\brk*{\left\lVert\sum_{s=1}^t\beta^{t-s}E_s\right\rVert_*}
  \leq\sqrt r\left(\E\brk*{\left\lVert\sum_{s=1}^t\beta^{t-s}E_s\right\rVert_{\mathrm{F}}^2}\right)^{1/2}
  \leq\frac{\sqrt r\,\sigma}{\sqrt{1-\beta^2}}.
\]
Therefore,
\begin{align*}
  &\frac1T\E\brk*{\left\lVert\sum_{t=1}^T\beta^{T-t}E_t\right\rVert_*}
  +\frac{1-\beta}{T}\sum_{t=1}^{T-1}
    \E\brk*{\left\lVert\sum_{s=1}^t\beta^{t-s}E_s\right\rVert_*}
  \\
  &\leq
  \left(\frac1T+\frac{(1-\beta)(T-1)}{T}\right)
  \frac{\sqrt r\,\sigma}{\sqrt{1-\beta^2}}
  =
  \left(1-\beta+\frac{\beta}{T}\right)
  \frac{\sqrt r\,\sigma}{\sqrt{1-\beta^2}}.
\end{align*}
Combining the preceding bounds completes the proof.
\end{proof}

\section{Deferred proofs from~\Cref{sec:ns}}
\label{app:ns-proofs}

This appendix provides the proofs deferred from~\cref{sec:ns}.

\subsection{Newton--Schulz iterations at a general Taylor order}
The analysis in~\cref{sec:ns} specializes to the degree-five Newton--Schulz polynomial used in~\cref{alg:momentum-muon}.
We begin by defining the Newton--Schulz iteration obtained by truncating the Taylor expansion at an arbitrary order $\kappa\geq1$ and then show that the same penalty--stability tradeoff holds for every such order. 
For an integer $\kappa\geq1$, recall from~\cref{eq:ns-taylor} in~\cref{sec:setup} the polynomial $p_\kappa$ and coefficients $c_s$, and define
\[
f_\kappa(x)
\coloneqq
x p_\kappa(x^2)
=
x\sum_{s=0}^{\kappa}
c_s(1-x^2)^s,
\qquad x\in[0,1].
\]
The corresponding Newton--Schulz iteration is $x_{j+1}=f_\kappa(x_j)$, where $f_\kappa$ has degree $2\kappa+1$.
The slope of $f_\kappa$ at the origin is
\[
A_\kappa
\coloneqq
f_\kappa'(0)
=
(2\kappa+1)c_\kappa
=
\frac{(2\kappa+1)(2\kappa)!}
     {4^\kappa(\kappa!)^2}.
\]
For an integer $q\geq0$, define
\[
h_{\kappa,q}
\coloneqq
f_\kappa^{\circ q}
\quad\text{on }[0,1],
\qquad
h_{\kappa,q}(x)\coloneqq1
\quad\text{for }x\ge1.
\]
The map considered in~\cref{sec:ns} is recovered by setting $\kappa=2$, in which case
\[
f_2(x)
=
\frac{15}{8}x-\frac54x^3+\frac38x^5,
\qquad
A_2=\frac{15}{8},
\qquad
h_{2,q}=h_q.
\]
The following lemma generalizes~\cref{lem:scalar-ns} to every $\kappa\geq1$.
Thus, for every Taylor order $\kappa$, the penalty term in~\cref{eq:expected-discounted-regret} decreases at rate $A_\kappa^{-q}$, while the stability term grows at rate $A_\kappa^q$.
Consequently, the argument in~\cref{sec:ns} applies with $A_\kappa$ in place of $A$, and our analysis extends to the Newton--Schulz polynomials obtained by truncating the Taylor expansion at any finite order.

The following lemma establishes the bounds for every $\kappa\geq1$ and
yields~\cref{lem:scalar-ns} when $\kappa=2$.
\begin{lemma}[Extension of~\Cref{lem:scalar-ns}]
\label{lem:general-scalar-ns}
For every integer $\kappa\geq1$ and $q\geq0$,
\[
  \Lip(h_{\kappa,q})=A_\kappa^q,
  \qquad
  \frac{1}{2A_\kappa^q}
  \leq \Delta(h_{\kappa,q})
  \leq \frac{1}{A_\kappa^q+1}
  \leq \frac{1}{A_\kappa^q}.
\]
\end{lemma}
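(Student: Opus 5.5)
The plan rests on one closed-form identity for the derivative of $f_\kappa$:
\[
f_\kappa'(x)=A_\kappa(1-x^2)^\kappa\qquad(x\in[0,1]).
\]
I would verify it from the Taylor structure. Write $f_\kappa(x)=x\,p_\kappa(x^2)$, where $p_\kappa$ is the degree-$\kappa$ truncation of $(1-z)^{-1/2}$ in $z=1-\lambda$. Differentiating gives $p_\kappa(x^2)+2x^2p_\kappa'(x^2)$. All terms of order $\le\kappa-1$ in $(1-x^2)$ cancel, because for the untruncated series $x\,(x^2)^{-1/2}\equiv1$ has zero derivative. Only the top coefficient survives, and it equals $(2\kappa+1)c_\kappa=A_\kappa$. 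For $\kappa=2$ this is just $f'(x)=\frac{15}{8}(1-x^2)^2$, which can be checked directly. Consequences of the identity:
\begin{itemize}
\item $f_\kappa$ is nondecreasing and concave on $[0,1]$.
\item $f_\kappa(0)=0$ and $f_\kappa(1)=1$, so $f_\kappa$ maps $[0,1]$ onto $[0,1]$ and $f_\kappa(x)\ge x$.
\item $0\le f_\kappa'\le A_\kappa$, with equality exactly at $x=0$.
\end{itemize}

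\textbf{Lipschitz constant.} By the chain rule, $h_{\kappa,q}'(x)=\prod_{j=0}^{q-1}f_\kappa'(f_\kappa^{\circ j}(x))\le A_\kappa^q$ on $[0,1]$. At $x=0$ every iterate stays at $0$, so the bound is attained there. On $[1,\infty)$ the map $h_{\kappa,q}$ is constant, and it is continuous at $1$ because $f_\kappa^{\circ q}(1)=1$. Hence $\Lip(h_{\kappa,q})=A_\kappa^q$.

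\textbf{Lower bound on $\Delta$.} Since $h_{\kappa,q}(0)=0$ and $\Lip(h_{\kappa,q})=A_\kappa^q$, we have $h_{\kappa,q}(x)\le\min\{1,A_\kappa^qx\}$. Therefore
\[
\Delta(h_{\kappa,q})\ge\int_0^{A_\kappa^{-q}}(1-A_\kappa^qx)\,\mathrm dx=\frac{1}{2A_\kappa^q}.
\]
Only the Lipschitz bound is used here, so the same argument gives $\Delta(h)\ge1/(2\Lip(h))$ for every admissible $h$.

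\textbf{Upper bound on $\Delta$.} Since $\int_0^1(1-x)^a\,\mathrm dx=1/(a+1)$, it suffices to prove the pointwise bound
\[
1-h_{\kappa,q}(x)\le(1-x)^{A_\kappa^q}\qquad(x\in[0,1]).
\]
I would prove this by induction on $q$ from the one-step inequality $1-f_\kappa(y)\le(1-y)^{A_\kappa}$. The induction step is
\[
1-h_{\kappa,q}(x)=1-f_\kappa(h_{\kappa,q-1}(x))\le\bigl(1-h_{\kappa,q-1}(x)\bigr)^{A_\kappa}\le\bigl((1-x)^{A_\kappa^{q-1}}\bigr)^{A_\kappa},
\]
where the last step uses that $z\mapsto z^{A_\kappa}$ is increasing on $[0,1]$.

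For the one-step inequality, set $u(x)\coloneqq(1-x)^{A_\kappa}-1+f_\kappa(x)$. Then $u(0)=u(1)=0$, and by the derivative identity
\[
u'(x)=A_\kappa\bigl[(1-x)^\kappa(1+x)^\kappa-(1-x)^{A_\kappa-1}\bigr]=A_\kappa(1-x)^{A_\kappa-1}\bigl(e^{\psi(x)}-1\bigr),
\]
with $\psi(x)\coloneqq\kappa\log(1+x)+(\kappa+1-A_\kappa)\log(1-x)$. The function $\psi$ has the following properties:
\begin{itemize}
\item $\psi$ is concave.
\item $\psi(0)=0$ and $\psi'(0)=A_\kappa-1>0$.
\item $\psi\to-\infty$ as $x\to1$, provided $A_\kappa<\kappa+1$.
\end{itemize}
Hence $\psi$ is positive and then negative with a single sign change, so $u$ first increases and then decreases. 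Together with $u(0)=u(1)=0$, this forces $u\ge0$ on $[0,1]$.

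The condition $A_\kappa<\kappa+1$ holds because $c_\kappa\le1/\sqrt{\pi\kappa}$ (or by induction via $c_{\kappa}=c_{\kappa-1}\frac{2\kappa-1}{2\kappa}$), which gives $A_\kappa=O(\sqrt\kappa)$; for example $A_1=3/2$ and $A_2=15/8$.

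The step I expect to be the main obstacle is this one-step comparison $1-f_\kappa(x)\le(1-x)^{A_\kappa}$, specifically controlling the sign of $u'$. If the single-crossing argument through $\psi$ turns out to be delicate, I would fall back to checking $\kappa=2$ directly, factoring out the $(1-x)$ powers, and then treating the general case by the same concavity argument.
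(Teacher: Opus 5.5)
Your proposal is correct and follows essentially the same route as the paper. You use the identity $f_\kappa'(x)=A_\kappa(1-x^2)^\kappa$, the chain-rule and tent-function arguments for $\Lip(h_{\kappa,q})$ and the lower bound, and the one-step comparison $1-f_\kappa(y)\le(1-y)^{A_\kappa}$ via the same $\psi_\kappa$ and single-crossing argument, followed by induction. The differences are cosmetic: you use concavity of $\psi_\kappa$ rather than the sign of the numerator of $\psi_\kappa'$, and the recursion $A_\kappa=A_{\kappa-1}\frac{2\kappa+1}{2\kappa}$ rather than $A_\kappa^{-1}=\int_0^1(1-x^2)^\kappa\,\mathrm{d}x$ to get $A_\kappa<\kappa+1$. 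That same recursion also gives $A_\kappa\geq A_1=3/2>1$ for all $\kappa\geq1$, which you need for $\psi_\kappa'(0)>0$ but only stated for $\kappa=1,2$.
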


\begin{proof}
Fix an integer $\kappa\geq1$ and write $P_\kappa(u)\coloneqq\sum_{s=0}^{\kappa}c_su^s$.
Then $f_\kappa(x)=xP_\kappa(1-x^2)$.
With $u\coloneqq1-x^2$, differentiation and the identity $2(s+1)c_{s+1}=(2s+1)c_s$ give
\begin{align*}
  f_\kappa'(x)
  &=P_\kappa(u)-2(1-u)P_\kappa'(u) \\
  &=\sum_{s=0}^{\kappa}c_su^s-2(1-u)\sum_{s=1}^{\kappa}sc_su^{s-1} \\
  &=(2\kappa+1)c_\kappa u^\kappa
    +\sum_{s=0}^{\kappa-1}\bigl((2s+1)c_s-2(s+1)c_{s+1}\bigr)u^s \\
  &=(2\kappa+1)c_\kappa u^\kappa
  =A_\kappa(1-x^2)^\kappa.
\end{align*}
Therefore $0\leq f_\kappa'(x)\leq A_\kappa$ on $[0,1]$, with maximum $A_\kappa$ at $x=0$.
Since $f_\kappa'\geq0$, $f_\kappa(0)=0$, and $f_\kappa(1)=1$, the map $f_\kappa$ sends $[0,1]$ into itself, and hence so does every $h_{\kappa,q}$.
The case $q=0$ is immediate, and for $q\geq1$ the chain rule $h_{\kappa,q}'(x)=\prod_{j=0}^{q-1}f_\kappa'(h_{\kappa,j}(x))$ gives $0\leq h_{\kappa,q}'(x)\leq A_\kappa^q$ on $[0,1]$.
At $x=0$ every factor equals $f_\kappa'(0)=A_\kappa$ because $h_{\kappa,j}(0)=0$, and hence $h_{\kappa,q}'(0)=A_\kappa^q$.
Extending $h_{\kappa,q}$ by the constant $1$ on $[1,\infty)$ leaves this bound unchanged, so $\Lip(h_{\kappa,q})=A_\kappa^q$.

For the lower bound on $\Delta(h_{\kappa,q})$, the properties $h_{\kappa,q}(0)=0$, $\Lip(h_{\kappa,q})=A_\kappa^q$, and $h_{\kappa,q}\leq1$ give $h_{\kappa,q}(x)\leq\min\{A_\kappa^qx,1\}$.
Therefore
\[
  \Delta(h_{\kappa,q}) = \int_0^\infty (1 - h_{\kappa,q}(x))\,\mathrm{d}x
  \geq \int_0^{A_\kappa^{-q}}(1 - A_\kappa^q x)\,\mathrm{d}x
  =\frac{1}{A_\kappa^q} - \frac{1}{2A_\kappa^q}
  =\frac{1}{2A_\kappa^q}.
\]

For the upper bound on $\Delta(h_{\kappa,q})$, define for $L\geq1$ and $x\in[0,1]$
\[
  g_L(x)\coloneqq1-(1-x)^L.
\]
This family maps $[0,1]$ into itself and satisfies the composition identity $g_L\circ g_M=g_{LM}$ for $L,M\geq1$.
We first show that $f_\kappa(x)\geq g_{A_\kappa}(x)$ on $[0,1]$.
Both $f_\kappa'(x)=A_\kappa(1-x^2)^\kappa$ and $g_{A_\kappa}'(x)=A_\kappa(1-x)^{A_\kappa-1}$ are positive for $x\in[0,1)$, so $(f_\kappa-g_{A_\kappa})'(x)$ has the same sign as
\[
  \psi_\kappa(x)
  \coloneqq\log\frac{f_\kappa'(x)}{g_{A_\kappa}'(x)}
  =\kappa\log(1+x)+(\kappa+1-A_\kappa)\log(1-x),
\]
whose derivative is
\[
  \psi_\kappa'(x)
  =\frac{A_\kappa-1-(2\kappa+1-A_\kappa)x}{1-x^2}.
\]
To determine the sign of the numerator, we bound $A_\kappa$.
Since $f_\kappa(1)-f_\kappa(0)=1$, integrating $f_\kappa'(x)=A_\kappa(1-x^2)^\kappa$ over $[0,1]$ gives $A_\kappa^{-1}=\int_0^1(1-x^2)^\kappa\,\mathrm{d}x$.
Combining this with $(1-x)^\kappa<(1-x^2)^\kappa<1$ on $(0,1)$ gives
\[
  \frac{1}{\kappa+1}
  =\int_0^1(1-x)^\kappa\,\mathrm{d}x
  <\int_0^1(1-x^2)^\kappa\,\mathrm{d}x
  =\frac{1}{A_\kappa}
  <1,
\]
so $1<A_\kappa<\kappa+1$.
Hence the numerator is positive at $x=0$ and negative at $x=1$, and it has a unique zero in $(0,1)$.
The denominator is positive on $[0,1)$, so $\psi_\kappa$ first increases and then decreases.
Since $\psi_\kappa(0)=0$ and $\psi_\kappa(x)\to-\infty$ as $x\to1$, there is a unique $\bar x\in(0,1)$ such that
\[
  \psi_\kappa(x)\geq0\quad\text{for }x\in[0,\bar x],
  \qquad
  \psi_\kappa(x)\leq0\quad\text{for }x\in[\bar x,1).
\]
Hence $f_\kappa-g_{A_\kappa}$ is nondecreasing on $[0,\bar x]$ and nonincreasing on $[\bar x,1]$, and since $f_\kappa(0)-g_{A_\kappa}(0)=f_\kappa(1)-g_{A_\kappa}(1)=0$, we conclude $f_\kappa(x)\geq g_{A_\kappa}(x)$ on $[0,1]$.

We next prove $h_{\kappa,q}\geq g_{A_\kappa^q}$ for every $q\geq0$ by induction on $q$.
The base case $h_{\kappa,0}=g_1$ is immediate.
Assuming $h_{\kappa,q}\geq g_{A_\kappa^q}$, the monotonicity of $f_\kappa$, the bound $f_\kappa\geq g_{A_\kappa}$, and the composition identity give
\[
  h_{\kappa,q+1}
  =f_\kappa\circ h_{\kappa,q}
  \geq f_\kappa\circ g_{A_\kappa^q}
  \geq g_{A_\kappa}\circ g_{A_\kappa^q}
  =g_{A_\kappa^{q+1}},
\]
which completes the induction.
Finally, $h_{\kappa,q}\geq g_{A_\kappa^q}$ and $h_{\kappa,q}(x)=1$ for $x\geq1$ give
\[
  \Delta(h_{\kappa,q})
  =\int_0^1(1-h_{\kappa,q}(x))\,\mathrm{d}x
  \leq\int_0^1(1-x)^{A_\kappa^q}\,\mathrm{d}x
  =\frac{1}{A_\kappa^q+1}
  \leq\frac{1}{A_\kappa^q},
\]
which completes the proof.
\end{proof}

\subsection{Proof of~\Cref{thm:fixed-depth-discounted-regret}}

\begin{proof}[Proof of~\Cref{thm:fixed-depth-discounted-regret}]
By the definition of the depth in~\cref{thm:fixed-depth-discounted-regret},
\[
  \frac{G_{\mathrm{op}}\sqrt r}{\Gamma\sqrt{1-\beta}}\leq A^q<\frac{A G_{\mathrm{op}}\sqrt r}{\Gamma\sqrt{1-\beta}}.
\]
Then, by $C=G_{\mathrm{op}}$ together with $\Delta(h_q)\leq A^{-q}$ and $\Lip(h_q)=A^q$ from~\cref{lem:scalar-ns}, the first term in~\cref{eq:expected-discounted-regret} of~\cref{thm:general-discounted-regret} is bounded by
\[
  \frac{r G_{\mathrm{op}}}{(1-\beta)A^q}
  \leq\frac{r G_{\mathrm{op}}}{(1-\beta)\left(G_{\mathrm{op}}\sqrt r/(\Gamma\sqrt{1-\beta})\right)}
  =\Gamma\sqrt{\frac{r}{1-\beta}}.
\]
The second term satisfies
\[
  \frac{2A^q\Gamma^2}{G_{\mathrm{op}}}
  <\frac{2A\left(G_{\mathrm{op}}\sqrt r/(\Gamma\sqrt{1-\beta})\right)\Gamma^2}{G_{\mathrm{op}}}
  =2A\Gamma\sqrt{\frac{r}{1-\beta}}.
\]
Adding these two bounds yields~\cref{eq:fixed-depth-discounted-regret},
which completes the proof.
\end{proof}

\subsection{Proof of~\Cref{thm:ns-complexity}}
\begin{proof}[Proof of~\Cref{thm:ns-complexity}]
If $\epsilon>B_{\mathrm{NS}}$, the definition of
$B_{\mathrm{NS}}$ and $A=15/8$ give
$
  B_{\mathrm{NS}}
  \geq (1+2A)\sqrt r\,\Gamma
  > 3\sqrt r\,\Gamma.
$
Hence, every point $W$ satisfies
\[
  \lVert\nabla\mathcal L(W)\rVert_*^{[\rho]}
  \leq \lVert\nabla\mathcal L(W)\rVert_*
  \leq \sqrt r\,\lVert\nabla\mathcal L(W)\rVert_{\mathrm F}
  \leq \sqrt r\,\Gamma
  < \frac{\epsilon}{3}.
\]
Consequently,
$\E\brk{\lVert\nabla\mathcal L(\bar W_\tau)\rVert_*^{[\rho]}}
<\epsilon/3$, so~\cref{eq:stationarity-master-bound} holds in this case.

It remains to consider $\epsilon\leq B_{\mathrm{NS}}$.
Then $1-\beta=(\epsilon/3B_{\mathrm{NS}})^2\leq1/9$.
We first verify the condition of~\cref{thm:fixed-depth-discounted-regret}. $\Gamma\leq\sqrt r G_{\mathrm{op}}$ (from $\lVert G_s\rVert_{\mathrm F}\leq\sqrt r\lVert G_s\rVert_{\mathrm{op}}\leq\sqrt r G_{\mathrm{op}}$) gives $G_{\mathrm{op}}\sqrt r/(\Gamma\sqrt{1-\beta})\geq1/\sqrt{1-\beta}>1$, so, using $\sqrt{1-\beta}=\epsilon/(3B_{\mathrm{NS}})$, the depth specified in~\cref{thm:ns-complexity} equals the depth required in~\cref{thm:fixed-depth-discounted-regret}.
Hence, $\DReg_t(D)=D\DReg_t(1)$ gives
\[
  \E[\DReg_t(1)]\leq(1+2A)\Gamma\sqrt{\frac{r}{1-\beta}}.
\]
Substituting this into the regret term of~\cref{lem:jiang-o2nc} and using $\left(1+(1-\beta)(T-1)\right)/T=1-\beta+\beta/T$ gives
\[
  \frac{1}{T}\E\brk*{\DReg_T(1)+(1-\beta)\sum_{t=1}^{T-1}\DReg_t(1)}
  \leq\left(1-\beta+\frac{\beta}{T}\right)(1+2A)\Gamma\sqrt{\frac{r}{1-\beta}}.
\]
Since $\sqrt{1-\beta^2}\geq\sqrt{1-\beta}$, it holds that
\[
  \frac{1-\beta+\beta/T}{\sqrt{1-\beta^2}}
  \leq\frac{1-\beta+\beta/T}{\sqrt{1-\beta}}
  =\sqrt{1-\beta}+\frac{\beta}{T\sqrt{1-\beta}},
\]
and $B_{\mathrm{NS}}=\sqrt r\left((1+2A)\Gamma+\sigma\right)$ gives
\[
  \frac{1}{T}\E\brk*{\DReg_T(1)+(1-\beta)\sum_{t=1}^{T-1}\DReg_t(1)}
  +\left(1-\beta+\frac{\beta}{T}\right)\frac{\sqrt r\,\sigma}{\sqrt{1-\beta^2}}
  \leq B_{\mathrm{NS}}\left(\sqrt{1-\beta}+\frac{\beta}{T\sqrt{1-\beta}}\right).
\]
Thus~\cref{lem:jiang-o2nc} gives
\[
  \E\brk*{\lVert\nabla\mathcal L(\bar W_\tau)\rVert_*^{[\rho]}}
  \leq\frac{4\Delta_{\mathcal L}}{(1-\beta)\rho T}
  +B_{\mathrm{NS}}\left(\sqrt{1-\beta}+\frac{\beta}{T\sqrt{1-\beta}}\right).
\]
Here $B_{\mathrm{NS}}\sqrt{1-\beta}=\epsilon/3$, the first term is $36\Delta_{\mathcal L}B_{\mathrm{NS}}^2/(\rho\epsilon^2T)$, and the third term is $3\beta B_{\mathrm{NS}}^2/(\epsilon T)$.
Since $B_{\mathrm{NS}}^2=O(r(\Gamma+\sigma)^2)$, this proves~\cref{eq:stationarity-master-bound}, and requiring each of the two $T$-dependent terms to be at most $\epsilon/3$ gives~\cref{eq:iteration-complexity}, which completes the proof.
\end{proof}

\subsection{Deferred proofs from~\Cref{sec:ns-smooth}}
\label{app:smooth}

Here, we prove the results of~\cref{sec:ns-smooth} under~\cref{ass:oracle,ass:smooth}.
Write $g_s\coloneqq\nabla\mathcal L(\widetilde W_s)$ for the gradient at the query point of round $s$, and let $\mathcal Q_s$ be the $\sigma$-algebra of the history up to and including the choice of $\widetilde W_s$, so that $g_s$ and $M_{s-1}$ are $\mathcal Q_s$-measurable.
Thus, \cref{ass:oracle} with $W=\widetilde W_s$ gives $\E[G_s\mid\mathcal Q_s]=g_s$ and $\E[\lVert G_s-g_s\rVert_{\mathrm F}^2\mid\mathcal Q_s]\leq\sigma^2$.

\begin{proof}[Proof of~\cref{lem:smooth-bridge}]
For any $p\in\mathcal P(W;\rho)$, the triangle inequality and~\cref{eq:op-nuclear-smoothness} give
\[
  \lVert\nabla\mathcal L(W)\rVert_*
  \leq\lVert\E_{Y\sim p}\brk*{\nabla\mathcal L(Y)}\rVert_*
  +\E_{Y\sim p}\brk*{\lVert\nabla\mathcal L(W)-\nabla\mathcal L(Y)\rVert_*}
  \leq\lVert\E_{Y\sim p}\brk*{\nabla\mathcal L(Y)}\rVert_*+L_{\mathrm{op}}\rho ,
\]
using $\E_{Y\sim p}\brk*{\lVert Y-W\rVert_{\mathrm{op}}}\leq\rho$.
Taking the infimum over $p$ completes the proof.
\end{proof}

\begin{lemma}
\label{lem:bregman-split}
Under~\cref{ass:oracle}, for every $s\geq1$,
\begin{equation}
  \E\left[B_{\widetilde\Phi_{h_q,G_{\mathrm{op}}}}(M_s\Vert M_{s-1})\,\middle|\,\mathcal Q_s\right]
  \leq
  2(1-\beta)\lVert g_s-M_{s-1}\rVert_*
  +\frac{A^q(1-\beta)^2}{2G_{\mathrm{op}}}\sigma^2 .
  \label{eq:bregman-split}
\end{equation}
\end{lemma}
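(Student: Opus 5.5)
We need to bound $\E[B(M_s\Vert M_{s-1})\mid\mathcal Q_s]$, where $\Phi\coloneqq\widetilde\Phi_{h_q,G_{\mathrm{op}}}$ and $B$ is its Bregman divergence. Write $M_s-M_{s-1}=(1-\beta)(G_s-M_{s-1})=(1-\beta)(g_s-M_{s-1})+(1-\beta)E_s$ with $E_s\coloneqq G_s-g_s$. We then split the Bregman divergence into a deterministic part and a noise part. Set $\bar M_s\coloneqq M_{s-1}+(1-\beta)(g_s-M_{s-1})$, the $\mathcal Q_s$-measurable ``noiseless'' momentum. Then
\[
  B(M_s\Vert M_{s-1})=B(\bar M_s\Vert M_{s-1})+B(M_s\Vert \bar M_s)+\inpr{\nabla\Phi(\bar M_s)-\nabla\Phi(M_{s-1}),M_s-\bar M_s},
\]
which is the standard three-point identity for Bregman divergences. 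The cross term has conditional mean zero: $M_s-\bar M_s=(1-\beta)E_s$ with $\E[E_s\mid\mathcal Q_s]=0$, and the other factor is $\mathcal Q_s$-measurable. The noise term is bounded by \cref{lem:bregman-upper} with $\Lip(h_q)=A^q$ from \cref{lem:scalar-ns}:
\[
  B(M_s\Vert\bar M_s)\leq \frac{A^q}{2G_{\mathrm{op}}}(1-\beta)^2\lVert E_s\rVert_{\mathrm F}^2,
\]
whose conditional expectation is at most $A^q(1-\beta)^2\sigma^2/(2G_{\mathrm{op}})$. This gives the second term of \cref{eq:bregman-split}.

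The deterministic term $B(\bar M_s\Vert M_{s-1})$ must be bounded by a quantity linear in $(1-\beta)\lVert g_s-M_{s-1}\rVert_*$, with no $A^q$ factor. For this we avoid smoothness and use the gradient bound instead. Since $\nabla\Phi=\calH_{h_q}(\cdot/G_{\mathrm{op}})$ has operator norm at most $1$, $\Phi$ is $1$-Lipschitz with respect to the nuclear norm. Hence
\[
  B(M'\Vert M)=\Phi(M')-\Phi(M)-\inpr{\nabla\Phi(M),M'-M}\leq \lVert M'-M\rVert_*+\lVert\nabla\Phi(M)\rVert_{\mathrm{op}}\lVert M'-M\rVert_*\leq 2\lVert M'-M\rVert_*,
\]
using operator--nuclear duality \cref{eq:operator-nuclear-duality}. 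With $M'-M=(1-\beta)(g_s-M_{s-1})$ this yields $2(1-\beta)\lVert g_s-M_{s-1}\rVert_*$.

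Combining the three pieces and taking the conditional expectation gives \cref{eq:bregman-split}. The main point to check is that $\bar M_s$ and $\nabla\Phi(\bar M_s)$ are $\mathcal Q_s$-measurable, so that the cross term vanishes; this holds because $g_s$ and $M_{s-1}$ are $\mathcal Q_s$-measurable. The argument also needs no operator-norm bound on $\bar M_s$, since \cref{lem:bregman-upper} and the gradient bound hold on all of $\R^{m\times n}$. Beyond that, the only delicate step is choosing the split: the deterministic part must use the crude Lipschitz bound (constant $2$, no $A^q$), while the noise part uses smoothness.
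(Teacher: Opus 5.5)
Your proof is correct and follows the paper's argument essentially step for step. Your $\bar M_s$ is the paper's intermediate point $\widehat M_s=\beta M_{s-1}+(1-\beta)g_s$, you use the same three-point identity with a conditionally mean-zero cross term, you bound the noise part by \cref{lem:bregman-upper} with $\Lip(h_q)=A^q$, and you bound the deterministic part by $2\lVert\cdot\rVert_*$ using $\lVert\nabla\widetilde\Phi_{h_q,G_{\mathrm{op}}}\rVert_{\mathrm{op}}\leq 1$.
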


\begin{proof}
Let $\Psi\coloneqq\widetilde\Phi_{h_q,G_{\mathrm{op}}}$ and $\widehat M_s\coloneqq\beta M_{s-1}+(1-\beta)g_s$ for simplicity.
First, the update $M_s$ is related to $\widehat M_s$ by
\[
  M_s=\widehat M_s+(1-\beta)(G_s-g_s),
  \qquad
  \widehat M_s=\beta M_{s-1}+(1-\beta)g_s .
\]
By the definition of the Bregman divergence, the points $M_{s-1},\widehat M_s,M_s$ satisfy
\begin{align}
  B_\Psi(M_s\Vert M_{s-1})
  =&B_\Psi(\widehat M_s\Vert M_{s-1})
  +B_\Psi(M_s\Vert\widehat M_s)
  +\inpr{\nabla\Psi(\widehat M_s)-\nabla\Psi(M_{s-1}),M_s-\widehat M_s}.
  \label{eq:three-point}
\end{align}
The last inner product has zero conditional expectation, since $\E[M_s-\widehat M_s\mid\mathcal Q_s]=(1-\beta)\E[G_s-g_s\mid\mathcal Q_s]=0$.

For $B_\Psi(\widehat M_s\Vert M_{s-1})$, using $\widehat M_s-M_{s-1}=(1-\beta)(g_s-M_{s-1})$, we have
\begin{align*}
  B_\Psi(\widehat M_s\Vert M_{s-1})
  &=\Psi(\widehat M_s)-\Psi(M_{s-1})-\inpr{\nabla\Psi(M_{s-1}),\widehat M_s-M_{s-1}}\\
  &\leq\left|\Psi(\widehat M_s)-\Psi(M_{s-1})\right|
  +\left|\inpr{\nabla\Psi(M_{s-1}),\widehat M_s-M_{s-1}}\right|\\
  &\leq2\lVert\widehat M_s-M_{s-1}\rVert_*
  =2(1-\beta)\lVert g_s-M_{s-1}\rVert_* ,
\end{align*}
where the last inequality follows from operator--nuclear duality and $\lVert\nabla\Psi(M)\rVert_{\mathrm{op}}=\lVert\calH_{h_q}(M/G_{\mathrm{op}})\rVert_{\mathrm{op}}\leq1$, which holds since $h_q\in[0,1]$.

For $B_\Psi(M_s\Vert\widehat M_s)$, \cref{lem:bregman-upper} with $\Lip(h_q)=A^q$ gives
\[
  B_\Psi(M_s\Vert\widehat M_s)
  \leq\frac{A^q}{2G_{\mathrm{op}}}\lVert M_s-\widehat M_s\rVert_{\mathrm F}^2
  =\frac{A^q(1-\beta)^2}{2G_{\mathrm{op}}}\lVert G_s-g_s\rVert_{\mathrm F}^2 .
\]
Taking the conditional expectation of~\cref{eq:three-point} and using $\E[\lVert G_s-g_s\rVert_{\mathrm F}^2\mid\mathcal Q_s]\leq\sigma^2$ gives~\cref{eq:bregman-split}, which completes the proof.
\end{proof}

\begin{lemma}
\label{lem:smooth-tracking}
Under~\cref{ass:oracle,ass:smooth}, for every $s\geq1$,
\[
  \E\brk*{\lVert g_s-M_{s-1}\rVert_*}
  \leq
  \beta^{s-1}\lVert\nabla\mathcal L(W_0)\rVert_*
  +\frac{2L_{\mathrm{op}}D}{1-\beta}
  +\sigma\sqrt{\frac{r(1-\beta)}{1+\beta}} .
\]
\end{lemma}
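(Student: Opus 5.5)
We will bound $g_s-M_{s-1}$ by splitting the momentum into a deterministic drift part and a noise part. Write $M_{s-1}=(1-\beta)\sum_{k=1}^{s-1}\beta^{s-1-k}G_k$ and decompose $G_k=g_k+E_k$ with $E_k\coloneqq G_k-g_k$. Using the weight identity $\beta^{s-1}+(1-\beta)\sum_{k=1}^{s-1}\beta^{s-1-k}=1$, we write
\[
  g_s-M_{s-1}
  =\beta^{s-1}g_s+(1-\beta)\sum_{k=1}^{s-1}\beta^{s-1-k}(g_s-g_k)
  -(1-\beta)\sum_{k=1}^{s-1}\beta^{s-1-k}E_k .
\]
We then bound the three terms separately in nuclear norm and take expectations.

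\textbf{Drift term.} By \cref{ass:smooth}, $\lVert g_s-g_k\rVert_*\leq L_{\mathrm{op}}\lVert\widetilde W_s-\widetilde W_k\rVert_{\mathrm{op}}$. The query points satisfy $\widetilde W_j=W_{j-1}+u_jX_j$ with $\lVert X_j\rVert_{\mathrm{op}}\leq D$, so consecutive query points differ by at most $2D$ in operator norm. Hence $\lVert\widetilde W_s-\widetilde W_k\rVert_{\mathrm{op}}\leq 2D(s-k)$. Summing with the geometric series $(1-\beta)\sum_{j\geq1}j\beta^{j}\leq\beta/(1-\beta)\leq 1/(1-\beta)$ gives at most $2L_{\mathrm{op}}D/(1-\beta)$.

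\textbf{Initial term.} The term $\beta^{s-1}\lVert g_s\rVert_*$ should become $\beta^{s-1}\lVert\nabla\mathcal L(W_0)\rVert_*$. To get this, we replace $g_s$ by $\nabla\mathcal L(W_0)$ and absorb the error $\beta^{s-1}L_{\mathrm{op}}\lVert\widetilde W_s-W_0\rVert_{\mathrm{op}}\leq\beta^{s-1}L_{\mathrm{op}}sD$ into the drift budget. We do this by distributing over the same weights: redo the decomposition with $g_s-g_k$ replaced by a telescoping from $W_0$, so that the total weighted distance is still at most $2D\sum_j j\beta^{j-1}(1-\beta)+\dots$. The constant $2$ absorbs it, because $\beta^{s-1}s+(1-\beta)\sum_k\beta^{s-1-k}(s-k)$ is bounded by $1/(1-\beta)$ times a constant, and we check that this constant is at most $2$.

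\textbf{Noise term.} The $E_k$ form a martingale difference sequence with $\E\lVert E_k\rVert_{\mathrm F}^2\leq\sigma^2$, exactly as in the proof of \cref{lem:jiang-o2nc}. Orthogonality gives
\[
  \E\Bigl\lVert\sum_k\beta^{s-1-k}E_k\Bigr\rVert_{\mathrm F}^2\leq\sigma^2/(1-\beta^2).
\]
Then $\lVert\cdot\rVert_*\leq\sqrt r\lVert\cdot\rVert_{\mathrm F}$ and Jensen give $(1-\beta)\sqrt r\,\sigma/\sqrt{1-\beta^2}=\sigma\sqrt{r(1-\beta)/(1+\beta)}$.

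The main obstacle is the bookkeeping of the drift term together with the $\beta^{s-1}$ initialization term, so that the constants come out as exactly $\beta^{s-1}\lVert\nabla\mathcal L(W_0)\rVert_*$ plus $2L_{\mathrm{op}}D/(1-\beta)$. If the direct split is too loose, we will instead track $e_s\coloneqq g_{s+1}-M_s$ recursively. The recursion is
\[
  e_s=\beta e_{s-1}+(g_{s+1}-g_s)-(1-\beta)E_s,
  \qquad e_0=g_1 .
\]
Unrolling it gives drift $\sum_j\beta^j\cdot 2L_{\mathrm{op}}D\leq 2L_{\mathrm{op}}D/(1-\beta)$ directly. The initial term is $\beta^{s-1}g_1$, and $g_1$ is handled via $\lVert g_1-\nabla\mathcal L(W_0)\rVert_*\leq L_{\mathrm{op}}D$, which is folded into the drift sum since $\lVert\widetilde W_1-W_0\rVert_{\mathrm{op}}\leq D$. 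This recursive route is the one we expect to produce the stated constants most cleanly.
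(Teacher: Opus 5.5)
Your proposal is correct. The recursive route you end up preferring is exactly the paper's proof: unrolling $e_s=\beta e_{s-1}+(g_{s+1}-g_s)-(1-\beta)E_s$ from $e_0=g_1$ gives the paper's identity $g_s-(1-\beta)\sum_{j=1}^{s-1}\beta^{s-1-j}g_j=\beta^{s-1}g_1+\sum_{j=2}^{s}\beta^{s-j}(g_j-g_{j-1})$. The per-step bound $\lVert g_j-g_{j-1}\rVert_*\leq 2L_{\mathrm{op}}D$ and the martingale noise bound are identical to the paper's.

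The one place you work harder than needed is $g_1$. \Cref{alg:momentum-muon} sets $X_1=0$, so $\widetilde W_1=W_0$ and $g_1=\nabla\mathcal L(W_0)$ exactly, which is what the paper uses. Your folding of $\lVert g_1-\nabla\mathcal L(W_0)\rVert_*\leq L_{\mathrm{op}}D$ also works, but only if you keep the exact partial sum. The inequality $2(1-\beta^{s-1})/(1-\beta)+\beta^{s-1}\leq 2/(1-\beta)$ holds, whereas bounding the drift sum crudely by $2L_{\mathrm{op}}D/(1-\beta)$ first leaves no room.

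The same caveat applies to your direct split, which bounds the same vector without telescoping. Its weights are $(1-\beta)\sum_{j=1}^{s-1}j\beta^{j-1}$, with exponent $j-1$, not $j$. The bookkeeping you promise is the identity $s\beta^{s-1}+(1-\beta)\sum_{j=1}^{s-1}j\beta^{j-1}=(1-\beta^s)/(1-\beta)$. Together with $\lVert\widetilde W_s-W_0\rVert_{\mathrm{op}}\leq sD$, it bounds drift plus initial error by $2L_{\mathrm{op}}D(1-\beta^s)/(1-\beta)$. So that route closes too, but the telescoping form makes this automatic. Neither of your variants uses $X_1=0$, which is a minor generality gain.
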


\begin{proof}
For $s\geq2$, since $\widetilde W_s=W_{s-1}+u_sX_s$, $\widetilde W_{s-1}=W_{s-2}+u_{s-1}X_{s-1}$, and $W_{s-1}=W_{s-2}+X_{s-1}$ with $u_s\in[0,1]$ in~\cref{alg:generic-o2nc}, $\widetilde W_s-\widetilde W_{s-1}=(1-u_{s-1})X_{s-1}+u_sX_s$.
Since $\lVert X_t\rVert_{\mathrm{op}}\leq D$, this and~\cref{ass:smooth} give 
\begin{equation}
  \lVert\widetilde W_s-\widetilde W_{s-1}\rVert_{\mathrm{op}}\leq2D,
  \qquad
  \lVert g_s-g_{s-1}\rVert_*\leq2L_{\mathrm{op}}D.
  \label{eq:query-drift}
\end{equation}
The definition of $M_{s-1}$ and $G_j=g_j+(G_j-g_j)$ give
\begin{equation}
  g_s -  M_{s-1}
  =\left(g_s - (1-\beta)\sum_{j=1}^{s-1}\beta^{s-1-j}g_j\right)
  -
  (1-\beta)\sum_{j=1}^{s-1}\beta^{s-1-j}(G_j-g_j).
  \label{eq:ema-signal-noise}
\end{equation}
The first term of~\cref{eq:ema-signal-noise} can be rewritten as
\[
  g_s-(1-\beta)\sum_{j=1}^{s-1}\beta^{s-1-j}g_j
  =\beta^{s-1}g_1
  +\sum_{j=2}^s\beta^{s-j}(g_j-g_{j-1}).
\]
Since $X_1=0$ we have $\widetilde W_1=W_0$ and $g_1=\nabla\mathcal L(W_0)$, so~\cref{eq:query-drift} yields
\begin{align}
  \left\lVert g_s-(1-\beta)\sum_{j=1}^{s-1}\beta^{s-1-j}g_j\right\rVert_*
  &\leq\beta^{s-1}\lVert\nabla\mathcal L(W_0)\rVert_*
  +2L_{\mathrm{op}}D\sum_{j=2}^s\beta^{s-j}
  \notag\\
  &\leq\beta^{s-1}\lVert\nabla\mathcal L(W_0)\rVert_*
  +\frac{2L_{\mathrm{op}}D}{1-\beta}.
  \label{eq:signal-track}
\end{align}
For the second term of~\cref{eq:ema-signal-noise}, $\E[\lVert G_s-g_s\rVert_{\mathrm F}^2\mid\mathcal Q_s]\leq\sigma^2$
and $\E[G_s - g_s\mid\mathcal Q_s] = 0$ give
\begin{align*}
  \E\brk*{\left\lVert(1-\beta)\sum_{j=1}^{s-1}\beta^{s-1-j}(G_j-g_j)\right\rVert_{\mathrm F}^2}
  &=(1-\beta)^2\sum_{j=1}^{s-1}\beta^{2(s-1-j)}\E\brk*{\lVert G_j-g_j\rVert_{\mathrm F}^2}\\
  &\leq\frac{1-\beta}{1+\beta}\sigma^2 .
\end{align*}
Thus, Jensen's inequality and $\lVert Z\rVert_*\leq\sqrt r\lVert Z\rVert_{\mathrm F}$ give
\begin{equation}
  \E\brk*{\left\lVert(1-\beta)\sum_{j=1}^{s-1}\beta^{s-1-j}(G_j-g_j)\right\rVert_*}
  \leq\sigma\sqrt{\frac{r(1-\beta)}{1+\beta}} .
  \label{eq:noise-ema}
\end{equation}
Combining~\cref{eq:ema-signal-noise}, \cref{eq:signal-track}, and~\cref{eq:noise-ema} completes the proof.
\end{proof}

\begin{proof}[Proof of~\Cref{thm:smooth-stationarity}]
If $\epsilon>\frac{3}{2}\sqrt r\,\Gamma$, then Jensen's inequality and
\cref{eq:oracle-bounds} of~\cref{ass:oracle} imply that every point $W$ satisfies
\[
  \lVert\nabla\mathcal L(W)\rVert_*
  \leq \sqrt r\,\lVert\nabla\mathcal L(W)\rVert_{\mathrm F}
  \leq \sqrt r\,\Gamma
  < \frac{2}{3} \epsilon.
\]
Consequently,~\cref{eq:smooth-stationarity} holds in this case.

It remains to consider $\epsilon\leq\frac{3}{2}\sqrt r\,\Gamma$.
Let $\Psi\coloneqq\widetilde\Phi_{h_q,G_{\mathrm{op}}}$ for simplicity.
The choice $C_s = G_{\mathrm{op}}$ satisfies
$C_{s+1}\geq\beta C_s$, so~\cref{eq:bregman-innovation} of~\cref{thm:general-discounted-regret} applies with
$h=h_q$.
Using $\Delta(h_q)\leq A^{-q}$ from \cref{lem:scalar-ns} and
$\DReg_t(D)=D\DReg_t(1)$, dividing by $D$ and taking expectations gives
\begin{align*}
  \E[\DReg_t(1)]
  &\leq
  \frac{rG_{\mathrm{op}}}{(1-\beta)A^q}
  +\frac{1}{1-\beta}\sum_{s=1}^t\beta^{t-s}
    \E\brk*{B_\Psi(M_s\Vert M_{s-1})}\\
  &\leq
  \frac{rG_{\mathrm{op}}}{(1-\beta)A^q}
  +\frac{A^q(1-\beta)\sigma^2}{2G_{\mathrm{op}}}
    \sum_{s=1}^t\beta^{t-s}
  +2\sum_{s=1}^t\beta^{t-s}
    \E\brk*{\lVert g_s-M_{s-1}\rVert_*}\\
  &\leq
  \frac{rG_{\mathrm{op}}}{(1-\beta)A^q}
  +\frac{A^q(1-\beta)\sigma^2}{2G_{\mathrm{op}}}
    \sum_{s=1}^t\beta^{t-s}\\
  &\qquad
  +2\sum_{s=1}^t\beta^{t-s}
    \left(
      \beta^{s-1}\lVert\nabla\mathcal L(W_0)\rVert_*
      +\frac{2L_{\mathrm{op}}D}{1-\beta}
      +\sigma\sqrt{\frac{r(1-\beta)}{1+\beta}}
    \right),
\end{align*}
where in the second inequality we used \cref{lem:bregman-split} and the tower property, 
and in the last inequality we used \cref{lem:smooth-tracking}.
Using $\sum_{s=1}^t\beta^{t-s}\leq1/(1-\beta)$ and
$\sum_{s=1}^t\beta^{t-s}\beta^{s-1}=t\beta^{t-1}$, we further bound this as
\begin{align}
  \E[\DReg_t(1)]
  &\leq
  \frac{rG_{\mathrm{op}}}{(1-\beta)A^q}
  +\frac{A^q\sigma^2}{2G_{\mathrm{op}}}
  +2t\beta^{t-1}\lVert\nabla\mathcal L(W_0)\rVert_*
  +\frac{4L_{\mathrm{op}}D}{(1-\beta)^2}
  +\frac{2\sqrt r\,\sigma}{\sqrt{1-\beta^2}}.
  \label{eq:smooth-regret-terminal}
\end{align}

The term $2t\beta^{t-1}\lVert\nabla\mathcal L(W_0)\rVert_*$ contributes to the bound of~\cref{lem:jiang-o2nc} as 
\begin{align*}
  &\frac1T\left(2T\beta^{T-1}+2(1-\beta)\sum_{t=1}^{T-1}t\beta^{t-1}\right)\lVert\nabla\mathcal L(W_0)\rVert_*\\
  &\leq\frac1T\left(\frac{2}{1-\beta}+\frac{2}{1-\beta}\right)\lVert\nabla\mathcal L(W_0)\rVert_*
  =\frac{4\lVert\nabla\mathcal L(W_0)\rVert_*}{T(1-\beta)},
\end{align*}
where the inequality uses $T(1-\beta)\beta^{T-1}\leq(1-\beta)\sum_{t=1}^{T}\beta^{t-1}=1-\beta^{T}\leq1$ and $\sum_{t\geq1}t\beta^{t-1}\leq(1-\beta)^{-2}$.

Set $\rho\coloneqq\epsilon/(12L_{\mathrm{op}})$, so that $D=(1-\beta)\rho/(4\beta)$.
Substituting~\cref{eq:smooth-regret-terminal} into~\cref{lem:jiang-o2nc} and using $(1+(1-\beta)(T-1))/T=1-\beta+\beta/T$ gives
\begin{align*}
  \E\brk*{\lVert\nabla\mathcal L(\bar W_\tau)\rVert_*^{[\rho]}}
  &\leq
  \frac{4\Delta_{\mathcal L}}{(1-\beta)\rho T}
  +\frac{4\lVert\nabla\mathcal L(W_0)\rVert_*}{T(1-\beta)}\\
  &\qquad
  +\left(1-\beta+\frac{\beta}{T}\right)\left(
    \frac{rG_{\mathrm{op}}}{(1-\beta)A^q}
    +\frac{4L_{\mathrm{op}}D}{(1-\beta)^2}
    +\frac{3\sqrt r\,\sigma}{\sqrt{1-\beta^2}}
    +\frac{A^q\sigma^2}{2G_{\mathrm{op}}}
  \right).
\end{align*}
Then, using~\cref{lem:smooth-bridge} and substituting $D=(1-\beta)\rho/(4\beta)$ gives
\begin{align*}
  \E\brk*{\lVert\nabla\mathcal L(\bar W_\tau)\rVert_*}
  &\leq
  \frac{4\Delta_{\mathcal L}}{(1-\beta)\rho T}
  +\frac{4\lVert\nabla\mathcal L(W_0)\rVert_*}{T(1-\beta)}
  +\left(1+\frac{\beta}{T(1-\beta)}\right)\frac{rG_{\mathrm{op}}}{A^q}\\
  &\qquad
  +L_{\mathrm{op}}\rho\left(1+\frac1\beta+\frac{1}{T(1-\beta)}\right)
  +3\sqrt r\sigma\frac{1-\beta+\beta/T}{\sqrt{1-\beta^2}}
  +\left(1-\beta+\frac{\beta}{T}\right)\frac{A^q\sigma^2}{2G_{\mathrm{op}}} .
\end{align*}
We next bound the six terms one by one.
The choices of $\rho$ and $\beta$ give $L_{\mathrm{op}}\rho=\epsilon/12$ and $1/(1-\beta)=\max\{2,144r\sigma^2/\epsilon^2\}\leq2+144r\sigma^2/\epsilon^2$.
Thus, the first term satisfies
\begin{align*}
  \frac{4\Delta_{\mathcal L}}{(1-\beta)\rho T}
  &=O\left(
    \frac{L_{\mathrm{op}}\Delta_{\mathcal L}}{\epsilon T}
    +\frac{r\sigma^2L_{\mathrm{op}}\Delta_{\mathcal L}}{\epsilon^3T}
  \right).
\end{align*}

For the second term, we first bound $\lVert\nabla\mathcal L(W_0)\rVert_*$.
Applying the fundamental theorem of calculus along the segment from $W$ to $W'$, followed by operator--nuclear duality and~\cref{ass:smooth}, gives
\begin{equation}
  \mathcal L(W')\leq\mathcal L(W)+\inpr{\nabla\mathcal L(W),W'-W}+\frac{L_{\mathrm{op}}}{2}\lVert W'-W\rVert_{\mathrm{op}}^2.
  \label{eq:descent-ineq}
\end{equation}
Take $W=W_0$ and $W'=W_0-\lVert\nabla\mathcal L(W_0)\rVert_*\polar(\nabla\mathcal L(W_0))/L_{\mathrm{op}}$.
Then $\inpr{\nabla\mathcal L(W_0),W'-W_0}=-\lVert\nabla\mathcal L(W_0)\rVert_*^2/L_{\mathrm{op}}$ and $\lVert W'-W_0\rVert_{\mathrm{op}}=\lVert\nabla\mathcal L(W_0)\rVert_*/L_{\mathrm{op}}$, so \cref{eq:descent-ineq} gives $\mathcal L(W')\leq\mathcal L(W_0)-\lVert\nabla\mathcal L(W_0)\rVert_*^2/(2L_{\mathrm{op}})$.
Combining this with $\mathcal L(W')\geq\mathcal L(W_0)-\Delta_{\mathcal L}$ yields $\lVert\nabla\mathcal L(W_0)\rVert_*\leq\sqrt{2L_{\mathrm{op}}\Delta_{\mathcal L}}\leq\epsilon+L_{\mathrm{op}}\Delta_{\mathcal L}/\epsilon$ and then
\begin{align*}
  \frac{4\lVert\nabla\mathcal L(W_0)\rVert_*}{(1-\beta)T}
  &\leq\frac{4\sqrt{2L_{\mathrm{op}}\Delta_{\mathcal L}}}{T}\Big(2+\frac{144r\sigma^2}{\epsilon^2}\Big)
  =O\left(
    \frac{\epsilon}{T}
    +\frac{L_{\mathrm{op}}\Delta_{\mathcal L}}{\epsilon T}
    +\frac{r\sigma^2}{\epsilon T}
    +\frac{r\sigma^2L_{\mathrm{op}}\Delta_{\mathcal L}}{\epsilon^3T}
  \right).
\end{align*}

For the third term, $\Gamma\leq\sqrt r\,G_{\mathrm{op}}$ and
$\epsilon\leq\frac{3}{2}\sqrt r\,\Gamma$ imply
$12rG_{\mathrm{op}}/\epsilon\geq8>1$.
Thus, $q=\lceil\log_A(12rG_{\mathrm{op}}/\epsilon)\rceil$ and
$12rG_{\mathrm{op}}/\epsilon\leq A^q<12ArG_{\mathrm{op}}/\epsilon$.
It follows that
\begin{align*}
  \Big(1+\frac{\beta}{T(1-\beta)}\Big)\frac{rG_{\mathrm{op}}}{A^q}
  &\leq\frac{\epsilon}{12}+\frac{\epsilon}{12T}\Big(2+\frac{144r\sigma^2}{\epsilon^2}\Big)
  =\frac{\epsilon}{12}+O\left(\frac{\epsilon}{T}+\frac{r\sigma^2}{\epsilon T}\right).
\end{align*}

For the fourth term, $1-\beta\leq1/2$ implies $1/\beta\leq2$.
Together with $L_{\mathrm{op}}\rho=\epsilon/12$, this gives
\begin{align*}
  L_{\mathrm{op}}\rho\Big(1+\frac1\beta+\frac{1}{T(1-\beta)}\Big)
  &\leq\frac{\epsilon}{4}+\frac{\epsilon}{12T}\Big(2+\frac{144r\sigma^2}{\epsilon^2}\Big)
  \leq\frac{\epsilon}{4}+O\left(\frac{\epsilon}{T}+\frac{r\sigma^2}{\epsilon T}\right).
\end{align*}

For the fifth term, the definition of $\beta$ ensures $3\sqrt r\sigma\sqrt{1-\beta}\leq\epsilon/4$.
Moreover, $\sqrt{1-\beta^2}\geq\sqrt{1-\beta}$, $1/\sqrt{1-\beta}\leq\sqrt2+12\sqrt r\sigma/\epsilon$, and $\sqrt{r\sigma^2}\leq(\epsilon+r\sigma^2/\epsilon)/2$ give
\begin{align*}
  3\sqrt r\sigma\frac{1-\beta+\beta/T}{\sqrt{1-\beta^2}}
  &\leq3\sqrt r\sigma\sqrt{1-\beta}
  +\frac{3\sqrt r\sigma}{T\sqrt{1-\beta}}
  \leq\frac{\epsilon}{4}+O\left(\frac{\epsilon}{T}+\frac{r\sigma^2}{\epsilon T}\right).
\end{align*}

For the sixth term, $(1-\beta)\sigma^2\leq\epsilon^2/(144r)$ and the upper bound on $A^q$ above give
\begin{align*}
  \Big(1-\beta+\frac{\beta}{T}\Big)\frac{A^q\sigma^2}{2G_{\mathrm{op}}}
  &\leq\frac{\epsilon}{12}+O\left(\frac{r\sigma^2}{\epsilon T}\right).
\end{align*}
Therefore,
\begin{align*}
  \E\brk*{\lVert\nabla\mathcal L(\bar W_\tau)\rVert_*}
  &\leq
  \frac{2}{3}\epsilon + O\left(
    \frac{\epsilon}{T}
    +\frac{L_{\mathrm{op}}\Delta_{\mathcal L}}{\epsilon T}
    +\frac{r\sigma^2}{\epsilon T}
    +\frac{r\sigma^2L_{\mathrm{op}}\Delta_{\mathcal L}}{\epsilon^3T}
  \right).
\end{align*}
This gives~\cref{eq:smooth-stationarity}, and requiring the $1/T$ terms to be at most $\epsilon/3$ gives~\cref{eq:smooth-complexity}, which completes the proof.
\end{proof}

\section{Deferred details on the FTRL interpretation (\Cref{sec:ftrl})}
\label{app:ftrl}

For the FTRL interpretation, we compute the Fenchel conjugate of the smoothed potential $\widetilde\Phi_{h,C}$ and the regularizer it induces, and then specialize the results to the Newton--Schulz map $h_q$.
The regularizer is a sum of the scalar conjugate over the singular values.

\subsection{General spectral map $h$ (proof of~\Cref{thm:general-ftrl})}
\label{sec:general-ftrl}
\begin{lemma}
\label{lem:general-scalar-conjugate}
Extend $\phi_h$ to a proper convex function on $\R$ by setting $\phi_h(x)=+\infty$ for $x<0$, and let $\phi_h^*(a)\coloneqq\sup_{x\geq0}\{ax-\phi_h(x)\}$ be its Fenchel conjugate.
Under~\cref{ass:general-spectral-link}, $\phi_h^*(a)<\infty$ if and only if $a\leq1$.
\end{lemma}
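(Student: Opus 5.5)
The plan is to compare $\phi_h$ with the identity, using $0\le h\le 1$ and $\Delta(h)<\infty$. First, $\phi_h(x)=\int_0^x h(u)\,\mathrm{d}u$ is convex, finite and continuous on $[0,\infty)$. This holds because $h$ is nondecreasing and bounded. Extending $\phi_h$ by $+\infty$ on $(-\infty,0)$ therefore gives a proper, lower semicontinuous, convex function on $\R$, and the conjugate is exactly $\sup_{x\geq0}\{ax-\phi_h(x)\}$. The key two-sided estimate, already computed in the proof of \cref{lem:penalty-term} with $C=1$, is
\[
  x-\Delta(h)\;\le\;x-\int_0^x(1-h(u))\,\mathrm{d}u\;=\;\phi_h(x)\;\le\;x
  \qquad\text{for } x\geq0,
\]
where the upper bound uses $h\le1$ and the lower bound uses $\int_0^x(1-h)\le\Delta(h)$.

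For $a\le1$, the lower bound gives $ax-\phi_h(x)\le (a-1)x+\Delta(h)\le\Delta(h)$ for every $x\ge0$. Hence $\phi_h^*(a)\le\Delta(h)<\infty$. I would also note that $\phi_h^*(a)\ge -\phi_h(0)=0$, taking $x=0$, so the conjugate is genuinely finite rather than $-\infty$. This part is immediate.

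For $a>1$, the upper bound $\phi_h(x)\le x$ gives $ax-\phi_h(x)\ge (a-1)x\to\infty$ as $x\to\infty$. Hence $\phi_h^*(a)=+\infty$. Together the two cases give the equivalence.

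There is no real obstacle. The only subtle point is that finiteness at the boundary value $a=1$ genuinely requires $\Delta(h)<\infty$ from \cref{ass:general-spectral-link}: without it, $x-\phi_h(x)=\int_0^x(1-h)$ could diverge and the effective domain would be $(-\infty,1)$ instead of $(-\infty,1]$. I would state this explicitly, since it is what makes the FTRL regularizer in \cref{eq:general-ftrl-regularizer} finite on the whole closed operator-norm ball $\lVert X\rVert_{\mathrm{op}}\le D$.
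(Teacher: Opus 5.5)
Your proof is correct and is essentially the paper's argument: for $a\le1$ you bound $ax-\phi_h(x)\le x-\phi_h(x)=\int_0^x(1-h(u))\,\mathrm{d}u\le\Delta(h)$, and for $a>1$ you use $\phi_h(x)\le x$ to get $ax-\phi_h(x)\ge(a-1)x\to\infty$. One small caveat concerns only your closing side remark, not the proof: without $\Delta(h)<\infty$, the effective domain of $\phi_h^*$ is $(-\infty,\bar h)$ or $(-\infty,\bar h]$ with $\bar h\coloneqq\lim_{x\to\infty}h(x)$, so it equals $(-\infty,1)$ only when $h(x)\to1$.
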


\begin{proof}
For $a\leq1$ and $x\geq0$,
\[
  ax-\phi_h(x)
  \leq x-\phi_h(x)
  =\int_0^x(1-h(u))\,\mathrm{d}u
  \leq
  \int_0^\infty (1-h(u))\,\mathrm{d}u
  =
  \Delta(h),
\]
so $\phi_h^*(a)\leq\Delta(h)<\infty$.
For $a>1$, the bound $\phi_h(x)\leq x$ gives $ax-\phi_h(x)\geq(a-1)x\to\infty$ as $x\to\infty$, so $\phi_h^*(a)=+\infty$, which completes the proof.
\end{proof}

Lifting the scalar conjugate to matrices via~\cref{lem:aux-conjugacy} gives the Fenchel conjugate of the smoothed potential.

\begin{lemma}
\label{lem:spectral-conjugate}
Under~\cref{ass:general-spectral-link}, for every $C>0$, the Fenchel conjugate of $\widetilde\Phi_{h,C}$ is
\begin{equation}
  \widetilde\Phi_{h,C}^*(W)
  =C\sum_{i=1}^r\phi_h^*(\sigma_i(W)).
  \notag
\end{equation}
\end{lemma}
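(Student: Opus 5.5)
The plan is to write $\widetilde\Phi_{h,C}(M)=\sum_{i=1}^r\phi(\sigma_i(M))$ with the rescaled scalar function $\phi(x)\coloneqq C\phi_h(x/C)$ for $x\geq0$. Then we apply \cref{lem:aux-conjugacy} and compute the scalar conjugate of $\phi$ by the usual scaling rule.

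First, verify the hypotheses of \cref{lem:aux-conjugacy}. Extend $\phi$ by $+\infty$ on $(-\infty,0)$, as in \cref{lem:general-scalar-conjugate}. Under \cref{ass:general-spectral-link}, $\phi_h$ is finite and convex on $[0,\infty)$, since its derivative $h$ is nondecreasing. Hence $\phi$ is proper and convex. \cref{lem:aux-conjugacy} then gives
\[
  \widetilde\Phi_{h,C}^*(W)=\sum_{i=1}^r\phi^*(\sigma_i(W)),
  \qquad
  \phi^*(a)=\sup_{x\geq0}\{ax-\phi(x)\}.
\]
The lemma is stated for the supremum over all $S\in\R^{m\times n}$, which matches the definition of $\widetilde\Phi_{h,C}^*$ in \cref{sec:ftrl}. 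The restriction $x\geq0$ in the scalar conjugate is exactly the convention of that lemma.

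Second, compute $\phi^*$ by the substitution $x=Cy$:
\[
  \phi^*(a)=\sup_{y\geq0}\{aCy-C\phi_h(y)\}=C\sup_{y\geq0}\{ay-\phi_h(y)\}=C\phi_h^*(a).
\]
This holds as an identity in $(-\infty,+\infty]$, including the case $a>1$, where both sides are $+\infty$ by \cref{lem:general-scalar-conjugate}. Summing over $i$ yields the claim.

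The only potential subtlety is the bookkeeping of the domain. We must check that \cref{lem:aux-conjugacy} allows extended-valued convex $\phi$, which it does since it is stated for proper convex $\phi\colon[0,\infty)\to(-\infty,+\infty]$. We must also make sure no lower-semicontinuity issue arises. It does not, because $\phi$ is continuous on $[0,\infty)$, so its extension by $+\infty$ on the negative axis is closed. Beyond this, the argument is routine.
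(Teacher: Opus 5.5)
Your proposal is correct and is essentially the paper's proof: write $\widetilde\Phi_{h,C}$ as $\sum_{i=1}^r\phi(\sigma_i(\cdot))$ with $\phi(x)=C\phi_h(x/C)$, apply \cref{lem:aux-conjugacy}, and use the change of variables $x=Cu$ to obtain $\phi^*(a)=C\phi_h^*(a)$. Your extra checks that $\phi$ is proper, convex, and closed are correct. They are not strictly needed, since the conjugacy formula cited from \citet{lewis95convex} only requires absolute symmetry.
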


\begin{proof}
Apply the Fenchel conjugate formula for singular-value functions~(\cref{lem:aux-conjugacy}) to $\phi(x)\coloneqq C\phi_h(x/C)$.
This expresses the Fenchel conjugate of $\widetilde\Phi_{h,C}$ as $W\mapsto\sum_{i=1}^r\phi^*(\sigma_i(W))$.
The change of variables $x=Cu$ gives $\phi^*(a)=\sup_{x\geq0}\{ax-C\phi_h(x/C)\}=C\phi_h^*(a)$, which completes the proof.
\end{proof}

We are now ready to prove~\Cref{thm:general-ftrl}.

\begin{proof}[Proof of~\Cref{thm:general-ftrl}]
By~\cref{eq:general-gbpa-action}, it holds that
\[
  X_t^h
  =-D\calH_h\left(\frac{M_{t-1}}{C_t}\right)
  =-D\nabla\widetilde\Phi_{h,C_t}(M_{t-1}).
\]
It remains to show that $X_t^h$ minimizes the FTRL objective.
The definition of $M_{t-1}$ gives $M_{t-1}=(1-\beta)\beta^{t-1}\sum_{s=1}^{t-1}\beta^{-s}G_s$.
Hence the objective in~\cref{eq:general-potential-action} is
$\big(\inpr{M_{t-1},X}+R_{h,C_t}(X)\big)/\big((1-\beta)\beta^{t-1}\big)$.
Substituting $X=-DW$ and using $R_{h,C_t}(X)=D\widetilde\Phi_{h,C_t}^*(X/D)$ together with the evenness of $\widetilde\Phi_{h,C_t}^*$ (which holds since $\sigma_i(-W)=\sigma_i(W)$) gives
\[
  \inpr{M_{t-1},X}+R_{h,C_t}(X)
  =-D
  \left(\inpr{M_{t-1},W}-\widetilde\Phi_{h,C_t}^*(W)\right).
\]
Thus minimizing the left-hand side over $\lVert X\rVert_{\mathrm{op}}\leq D$ is the same as maximizing $\inpr{M_{t-1},W}-\widetilde\Phi_{h,C_t}^*(W)$ over $\lVert W\rVert_{\mathrm{op}}\leq1$.
By the Fenchel--Young inequality,
\[
  \inpr{M_{t-1},W}-\widetilde\Phi_{h,C_t}^*(W)
  \leq\widetilde\Phi_{h,C_t}(M_{t-1}),
\]
with equality if and only if $W=\nabla\widetilde\Phi_{h,C_t}(M_{t-1})$.
The minimizer is therefore $X=-D\nabla\widetilde\Phi_{h,C_t}(M_{t-1})=X_t^h$, which completes the proof.
\end{proof}

\paragraph{Closed form of the regularizer.}

The regularizer~\cref{eq:general-ftrl-regularizer} is a sum of the scalar conjugate $\phi_h^*$ over the singular values.
In terms of the generalized inverse $h^{-1}(v)\coloneqq\inf\{x\geq0:h(x)\geq v\}$ with $\inf\emptyset\coloneqq+\infty$, the scalar conjugate is
\begin{equation}
  \phi_h^*(a)=
  \begin{cases}
    0, & a\leq0,\\[1mm]
    \displaystyle\int_0^a h^{-1}(v)\,\mathrm{d}v, & 0\leq a\leq1,\\[3mm]
    +\infty, & a>1.
  \end{cases}
  \label{eq:scalar-conjugate}
\end{equation}
Indeed, for $a\leq0$ the map $x\mapsto ax-\phi_h(x)$ is nonincreasing, so $\phi_h^*(a)=0$, and \cref{lem:general-scalar-conjugate} gives $\phi_h^*(a)=+\infty$ for $a>1$.
For $a\in[0,1]$, the integrand of $ax-\phi_h(x)=\int_0^x(a-h(u))\,\mathrm{d}u$ is positive exactly when $u<h^{-1}(a)$, so
\[
  \phi_h^*(a)
  =\int_0^{h^{-1}(a)}(a-h(u))\,\mathrm{d}u
  =\int_0^a h^{-1}(v)\,\mathrm{d}v,
\]
where the second equality exchanges the order of integration over $\{(u,v):h(u)<v\leq a\}$.
Taking $a=1$ gives
\[
  \phi_h^*(1)
  =\int_0^{h^{-1}(1)}(1-h(u))\,\mathrm{d}u
  =\int_0^\infty(1-h(u))\,\mathrm{d}u
  =\Delta(h),
\]
where the second equality uses $1-h(u)=0$ for $u>h^{-1}(1)$.
Thus, since $\phi_h^*$ is nondecreasing, $0\leq\phi_h^*(a)\leq\Delta(h)$ for every $a\in[0,1]$.

\begin{figure}[t]
  \centering
  \includegraphics[width=0.53\linewidth]{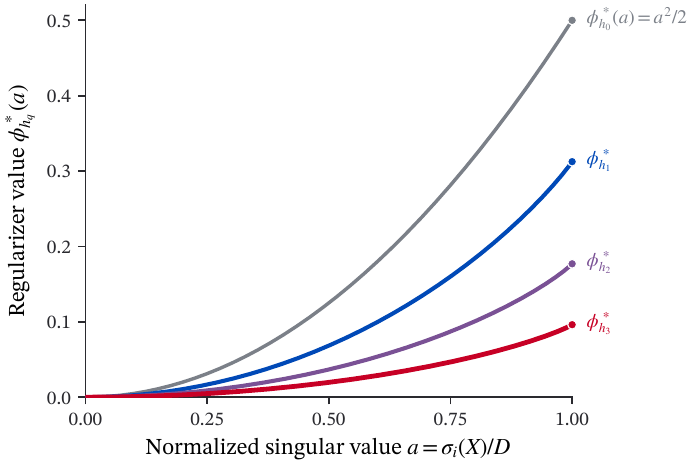}
  \caption{The scalar contribution $\phi_{h_q}^*(a)$ to the FTRL regularizer for depths $q=0,\ldots,3$, where $a=\sigma_i(X)/D\in[0,1]$.
  At depth zero, $\phi_{h_0}^*(a)=a^2/2$.
  The curves are pointwise nonincreasing in $q$, each marked endpoint equals $\phi_{h_q}^*(1)=\Delta(h_q)$, and these endpoint values converge to zero as $q\to\infty$.}
  \label{fig:ns-regularizer}
\end{figure}

\subsection{Specialization to Newton--Schulz $h_q$}
\label{app:ns-ftrl-details}

Write $Y=M_t/G_{\mathrm{op}}=U\diag(x_i)V^\top$ for a thin singular value decomposition, with $x_i\in[0,1]$ because $\lVert Y\rVert_{\mathrm{op}}\leq1$.
Then $(YY^\top)^kY=U\diag(x_i^{2k+1})V^\top$.
Using this for $k=0,1,2$, we see that the Newton--Schulz step of Line~\ref{algline:ns-step} acts as
\[
  \frac{15}{8}Y-\frac54(YY^\top)Y+\frac38(YY^\top)^2Y
  =U\diag\left(f(x_i)\right)V^\top.
\]
Since $f$ maps $[0,1]$ into itself, the singular values stay in $[0,1]$, so $q$ steps replace each $x_i$ by $f^{\circ q}(x_i)=h_q(x_i)$ from~\cref{eq:hq-definition}, and therefore the $q$-step iteration equals the singular-value map $\calH_{h_q}(Y)$ of~\cref{eq:ns-spectral-map}, as stated in~\cref{sec:muon}.

For the constant normalization $C_t=G_{\mathrm{op}}$, \cref{thm:general-ftrl} gives
\[
  X_t
  =-D\nabla\widetilde\Phi_{h_q,G_{\mathrm{op}}}(M_{t-1})
  \in\argmin_{\lVert X\rVert_{\mathrm{op}}\leq D}
  \left\{
    \inpr*{\sum_{s=1}^{t-1}\beta^{-s}G_s,X}
    +\frac{R_{h_q,G_{\mathrm{op}}}(X)}{(1-\beta)\beta^{t-1}}
  \right\}.
\]
Thus the update that applies $q$ Newton--Schulz steps to the momentum is the FTRL update for the discounted linear losses.

\paragraph{Vanishing regularizer as $q\to\infty$.}

By the closed form~\cref{eq:scalar-conjugate}, $\phi_{h_q}^*$ is nondecreasing with $0\leq\phi_{h_q}^*(a)\leq\phi_{h_q}^*(1)=\Delta(h_q)$ on $[0,1]$.
Since $\Delta(h_q)\to0$ as $q\to\infty$ by~\cref{eq:delta-q-bounds} of~\cref{lem:scalar-ns}, the FTRL update approaches the follow-the-leader action $-D\polar(M_{t-1})$ of exact-polar Muon.
\Cref{fig:ns-regularizer} shows $\phi_{h_q}^*$ for depths $q=0,\ldots,3$.

\section{Guarantees for other spectral maps}
\label{app:other-spectral-maps}

\Cref{sec:ns} focuses on the finite Newton--Schulz map, while the analysis of~\Cref{sec:general} applies to any spectral map satisfying~\cref{ass:general-spectral-link}.
We illustrate this generality using a recently proposed smooth relaxation of the polar map~\citep{mustafi2026move,feoktistov26softsign} as an example.
The relaxation inherits the stationarity guarantees for nonsmooth nonconvex objectives from~\cref{sec:o2nc,sec:general}, where the bound balances the approximation error $\Delta(h)$ against the Lipschitz constant $\Lip(h)$ as for finite Newton--Schulz.
A smaller $\Delta(h)$ forces a larger $\Lip(h)$, and we show that this tradeoff is unavoidable for every map satisfying~\cref{ass:general-spectral-link}.

\subsection{A smooth relaxation of the polar map}
For a smoothing parameter $a>0$, define
\[
  \widetilde h_a(x)
  \coloneqq
  \frac{a x}{\sqrt{1+a^2x^2}},
  \qquad x\in[0,\infty).
\]
This smooth approximation of the polar map is used in recent work~\citep{feoktistov26softsign,mustafi2026move}.
We first check that $\widetilde h_a$ satisfies~\cref{ass:general-spectral-link}.
It is continuous with $\widetilde h_a(0)=0$, and its derivative
\[
  \widetilde h_a'(x)
  =\frac{a}{\left(1+a^2x^2\right)^{3/2}}
\]
is positive, so $\widetilde h_a$ is strictly increasing, with $0\le \widetilde h_a(x)<1$ on $[0,\infty)$.
Since $\widetilde h_a'$ is decreasing on $[0,\infty)$, the Lipschitz constant is
\[
  \Lip(\widetilde h_a)=\widetilde h_a'(0)=a .
\]

The approximation error is
\[
  \Delta(\widetilde h_a)
  =\int_0^\infty
  \left(
  1-\frac{a x}{\sqrt{1+a^2x^2}}
  \right)\mathrm{d}x
  =\lim_{z\to\infty}\left[\,x-\frac1a\sqrt{1+a^2x^2}\,\right]_0^{z}
  =\frac1a ,
\]
where we used $x-\tfrac1a\sqrt{1+a^2x^2}=-\tfrac{1}{a^2x+a\sqrt{1+a^2x^2}}\to0$ as $x\to\infty$.
Combining these gives
\[
  \Lip(\widetilde h_a)=a,
  \qquad
  \Delta(\widetilde h_a)=\frac1a=\frac{1}{\Lip(\widetilde h_a)},
\]
which is the analogue of~\cref{lem:scalar-ns} for $\widetilde h_a$.
Balancing the approximation error against the Lipschitz constant as in~\cref{sec:ns} then yields guarantees such as~\cref{thm:fixed-depth-discounted-regret,thm:ns-complexity} for $\widetilde h_a$, now controlled by the smoothing parameter $a$ rather than the iteration depth $q$.

\subsection{The tradeoff is intrinsic to general spectral maps}

This inverse relationship between $\Delta(h)$ and $\Lip(h)$ is not special to the two maps above.
For any $h$ satisfying~\cref{ass:general-spectral-link}, the properties $h(0)=0$, $h\le1$, and $\Lip(h)$-Lipschitz continuity give $h(x)\le\min\{\Lip(h)\,x,\,1\}$, and therefore
\[
  \Delta(h)
  =\int_0^\infty\left(1-h(x)\right)\mathrm{d}x
  \ge\int_0^{1/\Lip(h)}\left(1-\Lip(h)\,x\right)\mathrm{d}x
  =\frac{1}{2\,\Lip(h)} .
\]
The approximation error therefore cannot fall below $1/(2\Lip(h))$, so $\Delta(h)$ and $\Lip(h)$ cannot both be made small.
The tradeoff between approximation and stability exploited by finite Newton--Schulz is thus intrinsic to every map satisfying~\cref{ass:general-spectral-link}, not an artifact of the Newton--Schulz iteration.
\section{Numerical experiments}
\label{app:numerical-experiments}

We test the finite Newton--Schulz update analyzed in~\cref{sec:ns} on a synthetic nonsmooth nonconvex objective.
By~\cref{lem:scalar-ns}, increasing the Newton--Schulz depth $q$ reduces the approximation error $\Delta(h_q)$ relative to the exact polar map but increases the Lipschitz constant $\Lip(h_q)$, and the regret bound of~\cref{thm:fixed-depth-discounted-regret} balances these two effects.
The purpose of the experiment is not to benchmark the optimization performance of the individual methods, but to examine qualitatively whether this depth-dependent penalty--stability tradeoff appears in the optimization behavior.
For this purpose, all methods evaluate the gradient at the current iterate rather than at the randomized query points of the O2NC conversion~(\cref{sec:o2nc}).

For $k=1,\ldots,N$, let $Z_k\in\R^{d\times d}$ be a standard Gaussian random matrix and set $J_k=SZ_kS$ with $S=\diag(s_1,\ldots,s_d)$ and $s_i=10^{-(i-1)/(d-1)}$.
For $W\in\R^{d\times d}$, we minimize an anisotropic variant of the objective employed in~\citet{jiang26adaptive},
\[
  \mathcal L(W)
  =\frac1N\sum_{k=1}^N\psi(\inpr{J_k,W}),
  \qquad
  \psi(z)=|z|\bigl(1-a\cos(\omega z)\bigr),
\]
with $d=20$, $N=100$, $a=0.9$, and $\omega=3$, and we select the subgradient $0$ at $z=0$.

All methods run on the same problem instance with learning rates $0.01$ and $0.05$ and momentum parameter $\beta=0.9$.
We compare the following methods:
\begin{itemize}
  \item Muon (SVD): the exact polar factor of the momentum matrix, computed by a singular value decomposition.
  \item Muon (NS): the update $\calH_{h_q}(M_t/G_{\mathrm{op}})$ with depth $q\in\{0,2,5,10\}$, where the learning rate plays the role of the radius $D$ in~\cref{sec:ns}, and $q=0$ uses the normalized momentum itself.
  We fix $G_{\mathrm{op}}=3.2$, which exceeds every gradient operator norm observed in the runs.
  \item Pion and Leon: Algorithms~2 and~3 of~\citet{jiang26adaptive} with step size $\eta=1$ in their notation, second-moment parameter $0.9$, and a numerical regularization of $10^{-8}$ in the preconditioner.
  Pion averages $10$ perturbation samples per step.
\end{itemize}

\begin{figure}[ht]
  \centering
  \begin{minipage}[t]{0.49\linewidth}
    \centering
    \includegraphics[width=\linewidth]{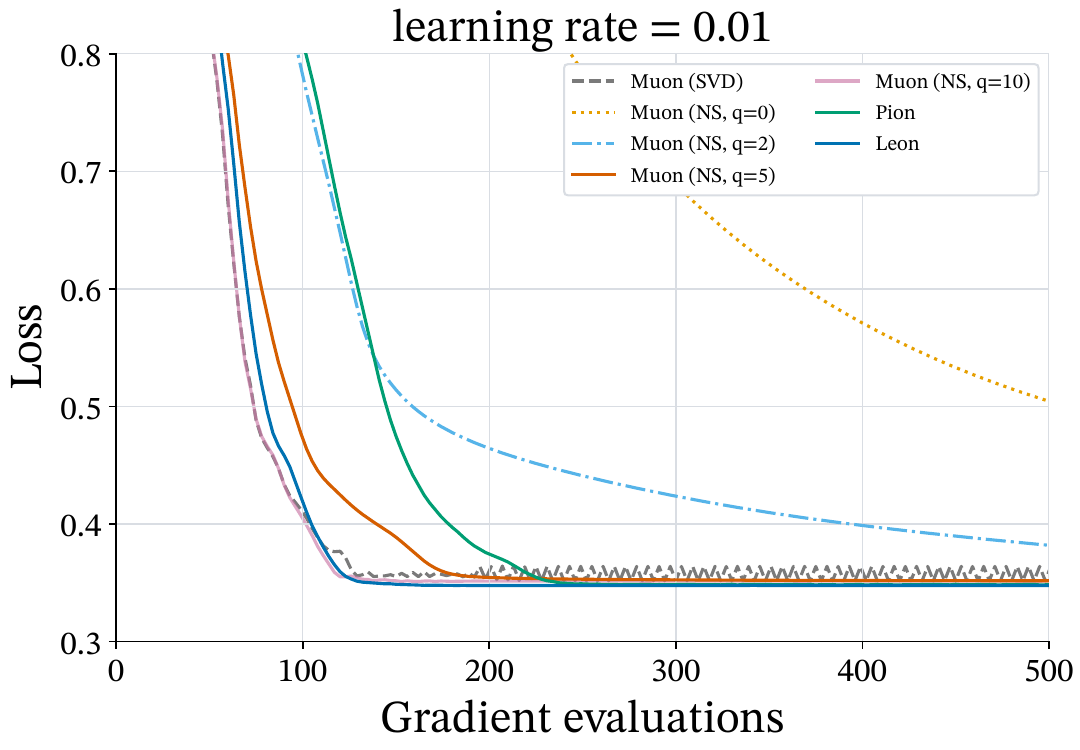}
  \end{minipage}
  \hfill
  \begin{minipage}[t]{0.49\linewidth}
    \centering
    \includegraphics[width=\linewidth]{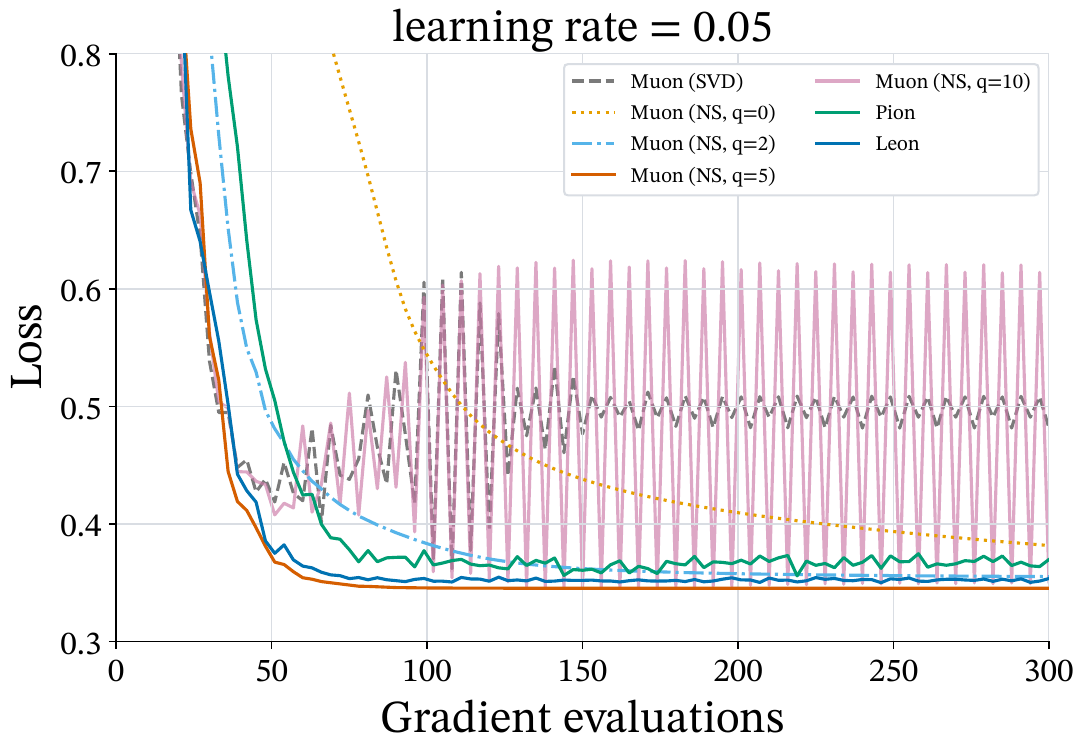}
  \end{minipage}
  \\[3ex]
  \begin{minipage}[t]{0.49\linewidth}
    \centering
    \includegraphics[width=\linewidth]{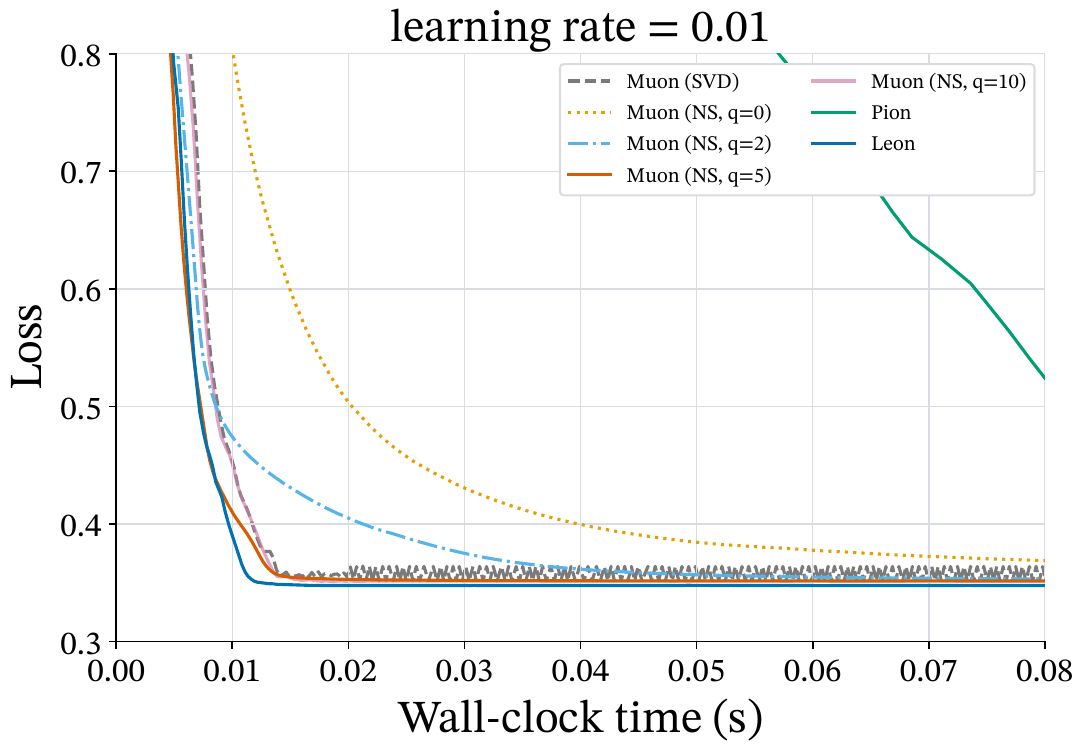}
  \end{minipage}
  \hfill
  \begin{minipage}[t]{0.49\linewidth}
    \centering
    \includegraphics[width=\linewidth]{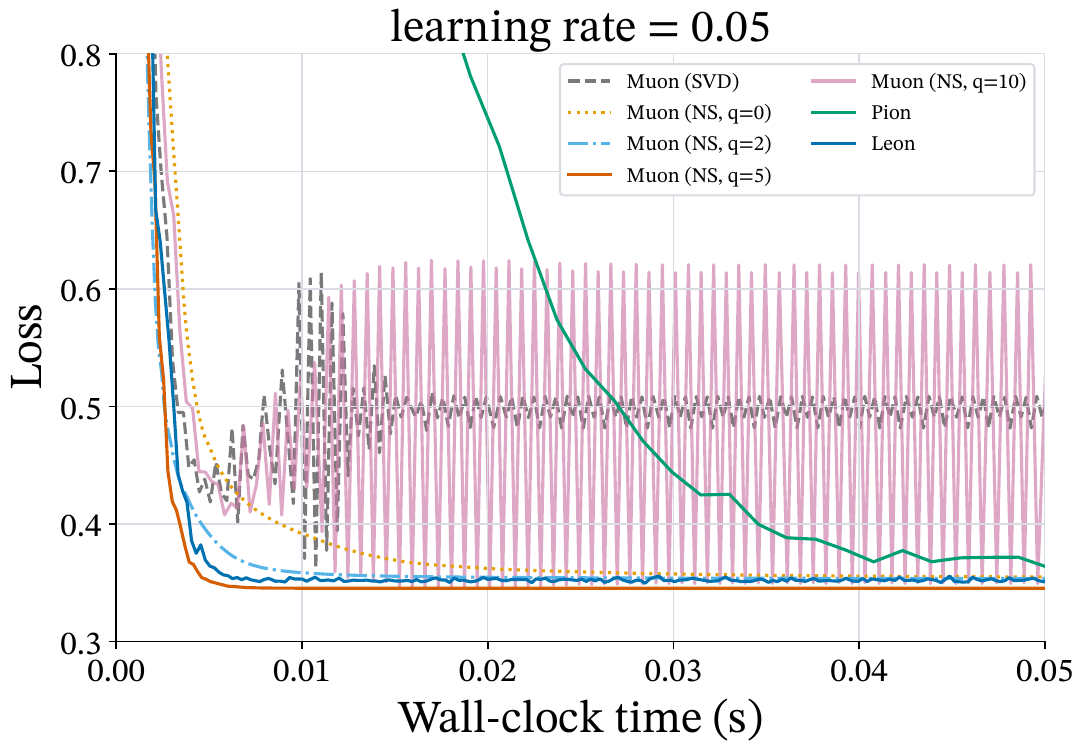}
  \end{minipage}
  \caption{Loss versus gradient evaluations (top) and wall-clock time (bottom) with learning rates $0.01$ (left) and $0.05$ (right).}
  \label{fig:anisotropic-experiment}
\end{figure}

\Cref{fig:anisotropic-experiment} shows how the loss evolves for each depth $q$ and learning rate.
At the larger learning rate, the exact-polar update continues to fluctuate around a high loss value, and the largest depth $q=10$ also exhibits this instability.
On the other hand, the shallowest depth $q=0$ is stable but slow, needing many more gradient evaluations to make comparable progress.
The intermediate depths $q=2$ and $q=5$ avoid both of these failures: they descend quickly and remain stable.
This matches qualitatively the penalty--stability tradeoff discussed in~\cref{sec:ns}, where increasing the depth gives a better approximation to the polar map but increases the Lipschitz constant that governs the stability of the update.
In this sense, the experiment demonstrates a benefit of the finite Newton--Schulz iteration beyond approximating the exact polar map.
Pion and Leon are also stable, and Pion is slower in wall-clock time since each of its steps computes several perturbed polar factors.

\end{document}